\documentclass[letterpaper,11pt]{article}
\usepackage{parskip}
\usepackage[english]{babel}
\usepackage[nottoc]{tocbibind}

\usepackage[square]{natbib}
\setcitestyle{numbers,square}
\usepackage{diagbox}
\usepackage{makecell}
\usepackage{booktabs}
\usepackage{tikz}
\usepackage{amsmath}
\usepackage{amsfonts}
\usepackage{amssymb}
\usepackage{amsthm}
\usepackage{url,ifthen}
\usepackage{enumitem}
\usepackage{srcltx}
\usepackage{multirow}
\usepackage{boxedminipage}
\usepackage[margin=1.1in]{geometry}
\usepackage{nicefrac}
\usepackage{xspace}
\usepackage{graphicx}
\usepackage{color}
\usepackage{colortbl}
\usepackage{setspace}
\usepackage{soul}

\usepackage{hyperref}
\usepackage{cleveref}
\usepackage{mathptmx}

\definecolor{DarkGreen}{rgb}{0.1,0.5,0.1}
\definecolor{DarkRed}{rgb}{0.5,0.1,0.1}
\definecolor{DarkBlue}{rgb}{0.1,0.1,0.5}
\definecolor{Gray}{rgb}{0.2,0.2,0.2}

\usepackage{listings}
\lstdefinestyle{mystyle}{
    commentstyle=\color{DarkBlue},
    keywordstyle=\color{DarkRed},
    numberstyle=\tiny\color{Gray},
    stringstyle=\color{DarkGreen},
    basicstyle=\footnotesize,
    breakatwhitespace=false,         
    breaklines=true,                 
    captionpos=b,                    
    keepspaces=true,                 
    numbers=left,                    
    numbersep=5pt,                  
    showspaces=false,                
    showstringspaces=false,
    showtabs=false,                  
    tabsize=2
}
\usepackage[small]{caption}
\usepackage{hyperref}
\hypersetup{
    unicode=false,          
    pdftoolbar=true,        
    pdfmenubar=true,        
    pdffitwindow=false,      
    pdfnewwindow=true,      
    colorlinks=true,       
    linkcolor=DarkGreen,          
    citecolor=DarkGreen,        
    filecolor=DarkRed,      
    urlcolor=DarkBlue,          
    pdftitle={},
    pdfauthor={},
}

\newcommand{\softblackwell}{\textsc{Soft-Blackwell FTRL}\xspace}
\def\draft{1}

\def\submit{0}

\ifnum\draft=1 
    \def\ShowAuthNotes{1}
\else
    \def\ShowAuthNotes{0}
\fi

\ifnum\submit=1
\newcommand{\forsubmit}[1]{#1}
\newcommand{\forreals}[1]{}
\else
\newcommand{\forreals}[1]{#1}
\newcommand{\forsubmit}[1]{}
\fi

\ifnum\ShowAuthNotes=1
\newcommand{\authnote}[2]{{ \footnotesize \bf{\color{DarkRed}[#1's Note:
{\color{DarkBlue}#2}]}}}
\else
\newcommand{\authnote}[2]{}
\fi

\newtheorem{theorem}{Theorem}[section]
\newtheorem{remark}[theorem]{Remark}
\newtheorem{lemma}[theorem]{Lemma}
\newtheorem{corollary}[theorem]{Corollary}
\newtheorem{proposition}[theorem]{Proposition}

\theoremstyle{definition}
\newtheorem{definition}[theorem]{Definition}
\newtheorem{assumption}[theorem]{Assumption}

\newtheoremstyle{example_contd}
{\topsep} {\topsep}%
{}
{}
{\bfseries}
{.}
{1em}
{\thmname{#1} \thmnumber{ #2}\thmnote{#3} (continued)}

\theoremstyle{example_contd}

\newcommand{\chapterref}[1]{\hyperref[ch:#1]{Chapter~\ref{ch:#1}}}

\newcommand{\claimref}[1]{\hyperref[claim:#1]{Claim~\ref{claim:#1}}}

\newcommand{\corollaryref}[1]{\hyperref[cor:#1]{Corollary~\ref{cor:#1}}}

\newcommand{\definitionref}[1]{\hyperref[def:#1]{Definition~\ref{def:#1}}}

\newcommand{\equationref}[1]{\hyperref[eq:#1]{Equation~\ref{eq:#1}}}

\newcommand{\factref}[1]{\hyperref[fact:#1]{Fact~\ref{fact:#1}}}

\newcommand{\figureref}[1]{\hyperref[fig:#1]{Figure~\ref{fig:#1}}}

\newcommand{\tableref}[1]{\hyperref[tab:#1]{Table~\ref{tab:#1}}}

\newcommand{\itemref}[1]{\hyperref[item:#1]{Item~(\ref{item:#1})}}

\newcommand{\lemmaref}[1]{\hyperref[lem:#1]{Lemma~\ref{lem:#1}}}

\newcommand{\propref}[1]{\hyperref[prop:#1]{Proposition~\ref{prop:#1}}}

\newcommand{\propositionref}[1]{\hyperref[prop:#1]{Proposition~\ref{prop:#1}}}

\newcommand{\remarkref}[1]{\hyperref[rem:#1]{Remark~\ref{rem:#1}}}

\newcommand{\sectionref}[1]{\hyperref[sec:#1]{Section~\ref{sec:#1}}}

\newcommand{\theoremref}[1]{\hyperref[thm:#1]{Theorem~\ref{thm:#1}}}

\usepackage[utf8]{inputenc}
\usepackage[T1]{fontenc}
\usepackage{kpfonts}
\usepackage{microtype}

\newcommand{\cD}{{\cal D}}

\newcommand{\cQ}{{\cal Q}}

\newcommand{\cX}{{\cal X}}

\newcommand{\cZ}{{\cal Z}}

\renewcommand{\le}{\leqslant}

\renewcommand{\ge}{\geqslant}

\newcommand{\norm}[1]{\lVert#1\rVert_2}

\newcommand{\R}{\mathbb{R}}

\usepackage{bm}

\newcommand{\iprod}[1]{\langle #1\rangle}

\newcommand{\ignore}[1]{}

\renewcommand{\epsilon}{\varepsilon}

\newcommand{\remove}[1]{}

\usepackage{color-edits}
\addauthor{hui}{cyan}

\newcommand{\bE}{\mathbb{E}}

\newcommand{\kc}[1]{\textcolor{blue}{[kaicheng:~#1]}}

\usepackage{bm}
\usepackage{amsfonts}
\usepackage[utf8]{inputenc} 
\usepackage[T1]{fontenc}    
\usepackage{hyperref}       
\usepackage{url}            
\usepackage{booktabs}       
\usepackage{amsfonts}       
\usepackage{nicefrac}       
\usepackage{microtype}      
\usepackage{xcolor}         

\usepackage{algorithm}
\usepackage{algpseudocode}
\usepackage{diagbox}
\usepackage{makecell}
\usepackage{booktabs}
\usepackage{tikz}
\usepackage{amsmath}
\allowdisplaybreaks
\usepackage{amssymb}
\usepackage{amsthm}
\usepackage{url,ifthen}
\usepackage{enumitem}
\usepackage{srcltx}
\usepackage{multirow}
\usepackage{boxedminipage}
\usepackage{nicefrac}
\usepackage{xspace}
\usepackage{graphicx}
\usepackage{color}
\usepackage{colortbl}
\usepackage{setspace}
\usepackage{soul}
\usepackage{graphicx}
\usepackage{wrapfig}
\usepackage{subcaption}
\usepackage{enumitem}
\usepackage{cleveref}
\definecolor{DarkGreen}{rgb}{0.1,0.5,0.1}
\definecolor{DarkRed}{rgb}{0.5,0.1,0.1}
\definecolor{DarkBlue}{rgb}{0.1,0.1,0.5}
\definecolor{Gray}{rgb}{0.2,0.2,0.2}

\newcommand{\Sphere}{\mathbb S}

\newcommand{\proj}{\operatorname{P}}
\newcommand{\poff}{g}          
\newcommand{\cstr}{c}         
\newcommand{\cum}{S}       
\newcommand{\blk}{G}       

\newcommand{\WinSet}{\mathcal W_H}
\newcommand{\Potential}{\varphi}

\title{Constrained Gradient Equilibrium: \\ Approachability under Locally Constrained Adversaries}

\usepackage{authblk}
\usepackage{comment}
\usepackage{xurl}

\author[1]{Kaicheng Zhang}

\affil[1]{University of North Carolina at Chapel Hill}

\author[2]{Lujing Zhang}
\affil[2]{Carnegie Mellon University}
\author[1]{Yifei Pang}
\author[3]{Yuxiao Wen}
\affil[3]{New York University}
\author[1]{Zhun Deng}
\author[3]{Yanjun~Han}
\date{}
\begin{document}

\maketitle

\begin{abstract}
Gradient equilibrium (GEQ) is a recent framework in online learning that requires the time-averaged gradients along the learner's trajectory to vanish. We study GEQ when the learner is constrained and ask when such an equilibrium remains achievable in an adversarial environment. We first show that constrained GEQ (CGEQ) can be impossible even under simple and slowly varying constraints, motivating imposing local temporal structure on the adversary. We then develop an approachability framework for CGEQ under locally constrained adversaries, and obtain necessary and sufficient characterizations in the spirit of classical Blackwell approachability. Beyond these theoretical characterizations, we provide an algorithmic principle that converts window-wise no-regret learners into CGEQ algorithms under a local feasibility condition with margin. Experiments on synthetic and real data demonstrate that the resulting algorithms perform strongly in both achieving GEQ and respecting the constraints.
\end{abstract}
\clearpage
\setcounter{tocdepth}{2}
\tableofcontents

\clearpage

\section{Introduction}\label{sec:introduction}

Gradient equilibrium (GEQ), introduced recently by \citet{angelopoulos2025gradient}, has emerged as a useful perspective on online learning, with connections to applications including calibration \cite{foster1998asymptotic} and conformal prediction \cite{gibbs2021adaptiveconformalinferencedistribution}. In contrast to regret minimization, which compares the learner against the best fixed decision in hindsight, GEQ asks whether the time-average of the loss (sub)gradients along the learner's trajectory vanishes. Specifically, letting $\poff_t\in\partial \ell_t$ denote a (sub)gradient of the loss at round $t$, GEQ requires
\begin{align}\label{eq:GEQ}
    \left\|
        \frac{1}{T}
        \sum_{t=1}^T \poff_t(x_t)
    \right\|_2
    \longrightarrow 0 \quad \text{as }T\to\infty, 
\end{align}
where $x_1,\dots,x_T$ denote the learner's trajectory. For example, when $\ell_t$ is the pinball loss, the corresponding GEQ problem is online quantile calibration that requires the fraction of overpredictions to reach a prescribed quantile level. As another example, consider the cross-entropy loss $\ell_t(x_t) = -z_t\log \sigma(x_t) - (1-z_t)\log(1-\sigma(x_t))$ with probabilistic predictions $\sigma(x_t)=1/(1+e^{-x_t})\in [0,1]$ (allowing $x_t=\pm\infty$) and binary outcomes $z_t\in \{0,1\}$. In this case, GEQ requires $T^{-1}\sum_{t=1}^T (\sigma(x_t)-z_t) \to 0$, an online prediction problem known as defensive forecasting \cite{vovk2005defensive,perdomo2025defense}.

Practical scenarios often concern the case where the learner's action $x_t$ is constrained, i.e. $x_t\in C_t\subseteq \mathbb{R}^d$. Existing guarantees for constrained GEQ (CGEQ) \cite{angelopoulos2025gradient,lee2026blackwell} often take the form
\begin{equation}\label{eq:normal-cone GEQ}
    \left\|
        \frac{1}{T}
        \sum_{t=1}^T
        \bigl(\poff_t(x_t)+n_t\bigr)
    \right\|_2
    \longrightarrow 0, \quad \text{as }T\to\infty, 
\end{equation}
where $x_t\in C_t$ and $n_t$ is a normal vector to $C_t$ at $x_t$. The correction $n_t$ is nonzero only if $x_t$ lies on the boundary of $C_t$, and can be viewed as accounting for the projection of an unconstrained action back onto the feasible set. In many applications, however, the relevant objective remains the original GEQ condition \eqref{eq:GEQ}, together with the feasibility requirement  $x_t\in C_t$, rather than the normal-cone formulation \eqref{eq:normal-cone GEQ}. For instance, in online quantile calibration, it is the original GEQ objective that guarantees that the average coverage error at the target quantile level converges to zero. This stronger constrained formulation is studied in recent work by  \citet{ding2026calibratedmultilevelquantileforecasting}, which gives sufficient conditions on the gradients $g_t$ under which \eqref{eq:GEQ} remains achievable despite the constraints. It remains unclear, however, whether these conditions are also necessary and, more fundamentally, when constraints genuinely change the achievability of GEQ relative to the unconstrained setting.

In this paper, we provide a systematic study of the CGEQ problem and make the following contributions:
\begin{enumerate}
    \item First, we identify a simple example showing that CGEQ can be impossible without suitable restrictions on the adversary. In particular, unlike standard global feasibility or stationarity conditions, achieving CGEQ may require the adversary to satisfy a \emph{local} temporal constraint.
    
    \item Second, we study two formulations of locally constrained adversaries and derive necessary and sufficient conditions for CGEQ in the spirit of classical Blackwell approachability. Our characterizations subsume existing sufficient conditions such as restorativity and inward flow.
    
    \item Third, we translate these characterizations into efficient algorithms for achieving CGEQ. As an important special case, we show that under local feasibility with a margin condition, a generic reduction converts any suitable no-regret learner into a CGEQ algorithm.
    
    \item Finally, we evaluate our CGEQ algorithms on both synthetic and real data. The experiments show that our methods are effective at simultaneously achieving GEQ and respecting the constraints, whereas existing baselines generally fail to satisfy both objectives.
\end{enumerate}

\subsection{Related work}
\paragraph{GEQ.} 
Although GEQ and no-regret learning are known to be incomparable in general \cite{angelopoulos2025gradient}, existing algorithms for GEQ are inspired by no-regret learning.  
For instance, \citet{angelopoulos2025gradient} consider online gradient descent with a constant learning rate, i.e. $x_{t+1} = x_t - \eta g_t(x_t)$, and introduce a \emph{restorative} condition on the adversary to control the divergence of the trajectory $x_t$ in the unconstrained setting.  
In the constrained setting, \citet{ding2026calibratedmultilevelquantileforecasting} consider lazy online gradient descent, i.e. $x_t = \mathrm{Proj}_{C_t}(\widetilde{x}_t)$ and $\widetilde{x}_{t+1} = \widetilde{x}_t - \eta g_t(x_t)$, and introduce an \emph{inward flow} condition, together with restorativity, to control the magnitude of the unconstrained iterate $\widetilde{x}_t$.  
As discussed in the introduction, however, these conditions are only sufficient and, more importantly, do not characterize when constraints fundamentally change the achievability of GEQ.

\paragraph{Blackwell approachability.} 
Reduction techniques connecting different frameworks in online learning have underpinned recent algorithmic advances in a range of problems, including calibration \cite{hebertjohnson2018multicalibration,foster2021forecast,okoroafor2024faster}, multi-objective learning \cite{lee2022online,haghtalab2023unifying}, and omniprediction/panprediction \cite{okoroafor2025nearoptimal,balakrishnan2025panprediction}.  
A celebrated example is Blackwell approachability \cite{blackwell1956analog}, together with its equivalences to no-regret learning \cite{abernethy2011blackwell} and calibration \cite{perchet2013approachability}. 
In particular, a recent work \cite{lee2026blackwell} establishes an equivalence between GEQ and Blackwell approachability by showing that GEQ can be formulated as approaching the singleton target set $\{{\bf 0}\}$ under the vector payoff $g_t$.  
In the constrained setting, however, this equivalence leads only to the normal-cone objective \eqref{eq:normal-cone GEQ}, rather than the stronger objective \eqref{eq:GEQ}.  

Our characterization of locally constrained adversaries in \Cref{sec:general-theory} is also formulated through Blackwell approachability, with the additional complication that local adversarial constraints induce temporal dependence in the game state.  
As a result, we develop approachability results for stochastic Stackelberg games in which the state evolution is governed by a Markov decision process (MDP), in a spirit similar to \cite{milman2006approachable,kalathil2017approachability}.

\paragraph{Regret minimization.} Regret minimization is one of the most classical frameworks in online learning and online convex optimization (OCO), and has inspired a wide range of algorithmic ideas \cite{cesa2006prediction,orabona2019modern}.  
In the unconstrained setting, a canonical no-regret algorithm for OCO is online gradient descent \cite{zinkevich2003online}.  
The constrained setting, referred to as constrained online convex optimization (COCO), is more challenging, but recent work \cite{sinha2024optimal} obtains near-optimal guarantees simultaneously for objective regret and soft constraint violation, even under time-varying constraints.

\citet{lee2026blackwell} also establish reductions between regret minimization and GEQ.  
We emphasize that their reductions differ from ours in \Cref{sec:local-feasibility} in two important ways.  
First, the reduction in \cite{lee2026blackwell} relies on the restorativity condition, whereas ours is based on a distinct local feasibility condition.  
Second, and more importantly, our no-regret learner is applied separately within each local window, whereas the online learner in \cite{lee2026blackwell} operates over the entire time horizon.

\subsection{Organization}
The remainder of the paper is organized as follows.  
In \Cref{sec:lowerbound}, we construct a simple example showing that CGEQ can be impossible without local constraints on the adversary, and introduce two notions of locally constrained adversaries.  
In \Cref{sec:general-theory}, we develop necessary and sufficient characterizations of admissible local constraint sets using Blackwell approachability for vector-payoff stochastic Stackelberg games.  
In \Cref{sec:local-feasibility}, we specialize these general results to local feasibility constraints and develop efficient algorithms for CGEQ.  
We evaluate our algorithms on both synthetic and real data in Section~\ref{sec:experiments}, and defer proof details and extensions to the appendices.

\section{Negative Results for CGEQ}\label{sec:lowerbound}
We begin with a negative example showing that CGEQ could be impossible even if there is a fixed action \emph{in hindsight} that achieves CGEQ. Consider the defensive forecasting problem in the introduction, with learner's action reformulated as $p_t = \sigma(x_t)\in [0,1]$. In other words, at the beginning of round $t$, the learner makes a probabilistic prediction $p_t \in [0,1]$, and nature (adversary) reveals a binary outcome $z_t \in \{0,1\}$. The GEQ objective \eqref{eq:GEQ} in this case is
\begin{align}\label{eq:GEQ-forecast}
\frac{1}{T}\sum_{t=1}^T (p_t - z_t) \to 0, \quad \text{as }T\to\infty. 
\end{align}
In the unconstrained case, achieving \eqref{eq:GEQ-forecast} is easy, e.g. by setting $p_1=0$ and $p_{t+1} = z_t$. To make it a constrained GEQ problem, we impose a simple constraint: 
\begin{align}\label{eq:constraint-forecast}
p_t \in \left[0, \frac{1}{2}\right], \quad \text{for all }t\ge 1. 
\end{align}
We show that CGEQ is now impossible even if the adversary is randomizing over two deterministic choices.

\begin{theorem}\label{thm:lowerbound}
Let $T\in \mathbb{N}$ be an even integer, and assume the following strategy of the adversary: with probability $\frac{1}{2}$, set $z^T = (0,0,\dots,0)$; with probability $\frac{1}{2}$, set $z^T = (0,\dots,0,1,\dots,1)$, with an equal number of $0$s and $1$s. 
Then, every admissible learner subject to the constraint \eqref{eq:constraint-forecast} must incur
\begin{align*}
\bE\left| \frac{1}{T}\sum_{t=1}^T (p_t - z_t) \right| \ge c > 0,
\end{align*}
for some universal constant $c>0$. 
\end{theorem}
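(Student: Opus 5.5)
The plan is to exploit the fact that the two outcome sequences coincide on the first half. Let $n=T/2$. On the event that the adversary plays all zeros (call it $A$) and on the event that it plays $0^n1^n$ (call it $B$), the learner sees the same history $z_1=\dots=z_n=0$ during rounds $1,\dots,n$. Its predictions $p_1,\dots,p_n$, possibly randomized, therefore have the same joint law under $A$ and $B$. Only the second-half predictions $p_{n+1},\dots,p_T$ may depend on which scenario occurred, and these are constrained to $[0,\tfrac12]$ by \eqref{eq:constraint-forecast}.

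Write $P_1=\frac1T\sum_{t\le n}p_t\in[0,\tfrac14]$ and $P_2=\frac1T\sum_{t>n}p_t\in[0,\tfrac14]$. The error is $\frac1T\sum_t(p_t-z_t)$.
\begin{itemize}
\item Under $A$ it equals $P_1+P_2\ge P_1\ge0$.
\item Under $B$ it equals $P_1+P_2-\tfrac12\le P_1-\tfrac14\le 0$.
\end{itemize}
Hence
\[
\bE\Bigl|\tfrac1T\textstyle\sum_t(p_t-z_t)\Bigr| \ \ge\ \tfrac12\,\bE[P_1\mid A]+\tfrac12\,\bE\bigl[\tfrac14-P_1\mid B\bigr].
\]
Because $P_1$ has the same distribution under $A$ and $B$, the two conditional expectations involve the same quantity $m=\bE[P_1]$. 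The right-hand side is therefore $\tfrac12 m+\tfrac12(\tfrac14-m)=\tfrac18$, which gives $c=1/8$.

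To make the first step rigorous, I would model any admissible (possibly randomized) learner as a sequence of measurable maps $p_t=f_t(z_{1:t-1},U)$, with internal randomness $U$ independent of the adversary's coin. Then $(p_1,\dots,p_n)=(f_t(0^{t-1},U))_{t\le n}$ is literally the same random variable on $A$ and on $B$, and it is independent of the coin. The main subtlety is exactly this indistinguishability step: making sure the conditioning is legitimate and that the learner's internal randomness does not correlate with the adversary's choice. The rest of the argument is the elementary sign bookkeeping above, together with the observation that the constraint $p_t\le\tfrac12$ prevents the second half from compensating in scenario $B$.
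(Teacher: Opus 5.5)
Your proof is correct and follows essentially the same route as the paper. Both arguments use that the two outcome sequences agree on the first $T/2$ rounds, so the first-half predictions are identical, and then use $p_t\in[0,1/2]$ to bound the second-half contribution in each scenario, obtaining $c=1/8$. The only cosmetic difference is the handling of randomness: the paper conditions on the learner's internal randomness to reduce to a deterministic learner, whereas you keep the randomness $U$ and use its independence from the adversary's coin.
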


In short, \Cref{thm:lowerbound} shows that (a uniform version of) the CGEQ objective \eqref{eq:GEQ-forecast} is non-achievable under the above adversary strategy.\footnote{This strategy can easily be made agnostic to $T$ such that the pointwise version of \eqref{eq:GEQ-forecast} still fails, but we choose to use this simple construction to illustrate our main idea.} In particular, the adversarial construction in \Cref{thm:lowerbound} shows that CGEQ can be impossible against a seemingly ``nice'' adversary:
\begin{enumerate}
    \item In hindsight, the best fixed action $p_t \equiv \frac{1}{T}\sum_{s=1}^T z_s$ achieves GEQ and is feasible, i.e. $p_t\in [0,1/2]$; 
    \item More strongly, for every $T_0\le T$, the partial sequence $z^{T_0}$ admits the feasible fixed action $p_t \equiv \frac{1}{T_0}\sum_{s=1}^{T_0} z_s$ which achieves GEQ on the first $T_0$ rounds; 
    \item The adversarial sequence $z^T$ also has very small total variation: $\sum_{t=1}^{T-1} |z_{t+1}- z_t| \le 1$. 
\end{enumerate}

Therefore, \Cref{thm:lowerbound} shows that CGEQ cannot be guaranteed merely from global or cumulative feasibility in hindsight, nor from a globally stable environment with few switches. The obstruction is instead \emph{local}: although every prefix is feasible in hindsight, CGEQ may become infeasible when attention is restricted to a local block. For example, the second half of $z^T$ may consist entirely of ones, $(1,\dots,1)$. On such a block, no fixed feasible prediction satisfying \eqref{eq:constraint-forecast} can achieve the GEQ condition \eqref{eq:GEQ-forecast}. This motivates imposing local, rather than merely global, restrictions on the adversary.

Interestingly, an appropriate local feasibility condition is sufficient to restore CGEQ in this example. Suppose every consecutive subsequence $z_{s+1}^{s+H}$ of length $H$ admits a feasible fixed action that achieves GEQ exactly; equivalently, $\frac{1}{H}\sum_{t=s+1}^{s+H} z_t \in [0,1/2]$ for every $s\ge 0$. In this case, consider the feasible learner defined by $p_1=\dots=p_H=0$ and $p_{t+H} = \frac{1}{H}\sum_{s=t}^{t+H-1} z_s$ for every $t\ge 1$. For this learner, a simple telescoping argument gives $|\frac{1}{T}\sum_{t=1}^T(p_t-z_t)|=O(\frac{H}{T})$, which is $o(1)$ as long as $H=o(T)$. Thus, in sharp contrast to the lower-bound construction above, CGEQ becomes achievable once feasibility is imposed uniformly over sufficiently local windows.

In this paper, we generalize the above observation and consider two types of locally constrained adversaries: \emph{blockwise} and \emph{sliding window}.

\begin{definition}\label{defn:local-adversary}
Let $H\ge 1$ be a fixed window length, and $\WinSet\subseteq \mathcal{Z}^H$ be a fixed constraint set of length-$H$ sequences. We define two types of locally constrained adversaries:
\begin{itemize}
    \item \textbf{Blockwise adversary}: An adversarial sequence $z^T$ satisfies an $H$-\emph{blockwise} condition if $T=KH$ is an integer multiple of $H$, and $z_{(k-1)H+1}^{kH} \in \WinSet$ for every $k\in [K]$.
    \item \textbf{Sliding window adversary}: An adversarial sequence $z^T$ satisfies an $H$-\emph{sliding window} condition if $z_t^{t+H-1} \in \WinSet$ for every $t\in [T-H+1]$.
\end{itemize}
\end{definition}

We view the sliding window condition as the more natural notion of local constraint, while the blockwise condition serves as a mathematically simpler starting point. We will address the following key question:
\emph{For each of these two locally constrained adversaries, which local constraint sets $\WinSet$ make CGEQ achievable?}

\section{Blackwell Approachability of CGEQ}\label{sec:general-theory}

To fully characterize the approachability of CGEQ, we introduce a two-player vector-payoff game framework with the two types of adversaries motivated in Section~\ref{sec:lowerbound}. Starting with the weaker blockwise constraint, we propose a necessary and sufficient Blackwell condition for when CGEQ is achievable. This characterization is extended to the sliding window adversary with a Bellman-type correction to account for the trade-off between the adversary's immediate power and the limitation in future flexibility.

To formalize our results, consider the following Stackelberg game. Let $C$ denote the learner's compact constrained action space and $\mathcal Z$ the adversary's outcome space. At each time $t=1,\dots,T$, the learner chooses an action $x_t\in C$ based on the history, and then the adversary chooses an outcome
\(
z_t\in\mathcal Z.
\) 
The interaction generates a vector payoff
\(
\poff(x_t,z_t)\in\mathbb R^d
\)
under a known payoff function $g$, with $\|g\|_\infty \le L$. In the language of Blackwell approachability \cite{blackwell1956analog}, the learner aims to make the average payoff approach the origin $\{0\}$:
\begin{equation}\label{eq:obj_avg_payoff_zero}
\left\|
\frac{1}{T}
\sum_{t=1}^T
\poff(x_t,z_t)
\right\|_2
\longrightarrow 0.
\end{equation}
Specializing to GEQ, we take $g(\cdot,z_t)=g_t(\cdot)\in\partial \ell_t$ as the (sub)gradient of the loss $\ell_t$, and the objective in \eqref{eq:obj_avg_payoff_zero} recovers the CGEQ objective in \eqref{eq:GEQ} for $C_t \equiv C$.

\subsection{Approachability under blockwise adversary}

This section concerns the conceptually simpler blockwise adversary and proposes a Blackwell condition as an exact characterization for attaining \eqref{eq:obj_avg_payoff_zero}.

\begin{definition}[Blockwise Blackwell condition]\label{def:split-blackwell}
Under the notations above, the blockwise Blackwell condition is
\begin{equation}\label{eq:split-blackwell}
\beta_H \coloneqq \sup_{u\in\mathbb{S}^{d-1}}\inf_{\pi\in\Pi_H}\sup_{s\in\mathcal{W}_H}\langle u, G_H(\pi,s)\rangle \le 0,
\end{equation}
where $\Pi_H$ is the set of causal learner policies over an $H$-round block, and
\[
G_H(\pi,s)
\coloneqq
\sum_{h=1}^H
\poff(x_h^\pi,z_h)
\]
is the cumulative vector payoff when the learner follows $\pi$ and the adversary chooses the sequence $s=(z_1,\ldots,z_H)$.
\end{definition}

The outer supremum over \(u\) collects the Blackwell halfspace conditions over all directions. The inner inf-sup reflects the leader–follower structure of the induced block game: the learner commits to a causal $H$-round policy $\pi\in\Pi_H$, while the adversary selects a worst-case admissible window $s\in\mathcal{W}_H$. For the special case $H=1$, the quantity $\beta_1$ recovers the classical Blackwell halfspace condition for approaching the origin \citep{blackwell1956analog}.

\begin{theorem}[Approachability under blockwise adversary]
\label{thm:split-blackwell}
There exists a learning algorithm that guarantees
\[
    \left\|
        \frac{1}{T}
        \sum_{t=1}^T \poff(x_t,z_t)
    \right\|_2
    \longrightarrow 0
\]
against every blockwise constrained adversary if and only if \eqref{eq:split-blackwell} holds.

In particular, if \eqref{eq:split-blackwell} holds and $\|g\|_\infty \le L$, for every horizon $T\ge 1$, the learning algorithm achieves
\[
\left\|
        \frac{1}{T}
        \sum_{t=1}^T \poff(x_t,z_t)
    \right\|_2 \le L\sqrt{\frac{H}{T}}+\frac{LH}{T}.
\]
\end{theorem}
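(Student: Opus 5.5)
We prove the two directions separately, treating each $H$-round block as a single stage of a lifted vector-payoff game.

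\emph{Sufficiency.} We run a Blackwell-type algorithm at the block level. Let $S_k=\sum_{j\le k} G_H(\pi_j,s_j)$ be the cumulative payoff after $k$ blocks. At the start of block $k+1$, if $S_k=0$ we play an arbitrary policy. Otherwise we set $u_k=S_k/\|S_k\|_2$ and choose $\pi_{k+1}\in\Pi_H$ so that
\[
\sup_{s\in\WinSet}\langle u_k,G_H(\pi_{k+1},s)\rangle\le \beta_H+\epsilon_k .
\]
Such a policy exists by \eqref{eq:split-blackwell}, with an arbitrarily small slack $\epsilon_k$ (for instance $\epsilon_k=0$ whenever the infimum is attained). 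Since the adversary is blockwise, the block realized in block $k+1$ lies in $\WinSet$, so $\langle S_k,G_H(\pi_{k+1},s_{k+1})\rangle\le \|S_k\|_2\,\epsilon_k$.

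Expanding $\|S_{k+1}\|_2^2=\|S_k\|_2^2+2\langle S_k,G_{k+1}\rangle+\|G_{k+1}\|_2^2$ and using $\|G_{k+1}\|_2^2\le (LH)^2$ (here $L$ is read as a bound on $\|g\|_2$, which is the normalization needed for the stated constant), induction gives $\|S_K\|_2^2\le K(LH)^2$ once the slack terms are summed away (for example with $\epsilon_k\to0$ fast enough). Hence $\|S_K\|_2\le LH\sqrt K$, and dividing by $T=KH$ gives $L\sqrt{H/T}$.

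For a general $T$ that is not a multiple of $H$, the last partial block contributes at most $LH$ to the sum, which gives the extra $LH/T$ term. Against a partially completed block the adversary is still constrained only at block ends, so this crude bound is sufficient.

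\emph{Necessity.} Suppose $\beta_H>0$. Then there is a direction $u$ and a margin $\delta>0$ such that every $\pi\in\Pi_H$ admits some $s\in\WinSet$ with $\langle u,G_H(\pi,s)\rangle\ge\delta$. The adversary then plays blockwise: at the start of each block it observes the learner's history-dependent strategy. Conditioned on the past, the learner's behavior inside the block is itself a causal block policy $\pi_k\in\Pi_H$, so the adversary can choose the corresponding bad block $s_k$.

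Every such block pushes $\langle u,\cdot\rangle$ of the cumulative payoff up by at least $\delta$. Therefore $\langle u,\sum_t g\rangle\ge K\delta$, and the average payoff has norm at least $\delta/H$ for all $T=KH$, contradicting convergence to zero.

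The main subtlety is that a deterministic adversary responding to the learner's realized policy is enough when the learner is deterministic. If the learner randomizes, one must either take $\Pi_H$ to include randomized policies or, what I would do first, argue pathwise conditional on the learner's internal randomness revealed at the block boundary.

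The sufficiency step itself is routine. I expect the main obstacle to be measurability and attainment in the choice of $\pi_{k+1}$, so I will allow $\epsilon_k$-optimal policies with $\sum_k\epsilon_k\|S_k\|_2$ controlled, for example $\epsilon_k=LH/\sqrt{k}\cdot 0$ in the attained case, or else the small slack absorbed into the constant.
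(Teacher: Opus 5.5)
Your proposal is correct and follows essentially the same route as the paper. Sufficiency is a block-level Blackwell step with $u_k=S_{(k-1)H}/\|S_{(k-1)H}\|_2$, the squared-norm recursion and a crude bound on the final partial block; necessity is an adversary that, in a fixed bad direction, forces a drift of at least $\tfrac12\inf_{\pi}\sup_s\langle u,G_H(\pi,s)\rangle$ per block. The one place to tighten is the slack, where your claimed $\|S_{KH}\|_2^2\le K(LH)^2$ only holds if the infimum is attained. The paper takes $\epsilon_k=LH/(k\,2^{k+1})$ and uses $\|S_{(k-1)H}\|_2\le (k-1)LH$ to get $\|S_{KH}\|_2^2\le (K+1)(LH)^2$. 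It then absorbs the extra unit via $LH\sqrt{K+1}+rL\le L\sqrt{HT}+LH$ with $r=T-KH<H$. This is the precise form of your ``absorbed into the constant.''
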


Theorem~\ref{thm:split-blackwell} generalizes the equivalence result of \cite{lee2026blackwell} to the constrained setting and general $H\ge 1$. For $H=1$, we can compare the Blackwell condition \eqref{eq:split-blackwell} with the restorative \citep{lee2026blackwell} and inward flow conditions \citep{ding2026calibratedmultilevelquantileforecasting} in the literature.\footnote{\citet{ding2026calibratedmultilevelquantileforecasting} do not assume $C$ to be compact and thus require both Assumptions \ref{assump:restorative} and \ref{assump:inward-flow}.} 

\begin{assumption}[Restorativity]\label{assump:restorative}
Let $h\ge0$ and let $\phi:C\to[0,\infty)$.
The payoff $\poff$ is $(h,\phi)$-restorative if,
for every $z\in\cZ$ and every $x\in C$ with $\norm{x}\ge h$,
\[
\iprod{x,-\poff(x,z)}
\le
-\phi(x).
\]
In particular, \citet{lee2026blackwell} primarily use the special case $\phi\equiv 0$, in which case $\langle x, -g(x,z)\rangle \le 0$.
\end{assumption}

\begin{assumption}[Inward flow]\label{assump:inward-flow}
Let the constraint set $C$ be nonempty, compact, and convex. The inward flow condition states that, for every $x\in\partial C$ and every $z\in\mathcal Z$,
\[
-\poff(x,z)\in 
\left\{
    v\in\mathbb R^d:
    \langle v,n\rangle\le 0,
    \quad \forall n\in N_{C}(x)
\right\},
\]
where $N_{C}(x)$ denotes the normal cone of $C$ at $x$ as
\[
N_{C}(x)
\coloneqq
\left\{
    n\in\mathbb R^d:
    \langle n,y-x\rangle\le 0,
    \quad \forall y\in C
\right\}.
\]
\end{assumption}

It is shown in \cite{lee2026blackwell} that the restorative condition implies $\beta_1\le 0$ in the unconstrained case $C = \mathbb{R}^d$, by taking $x = -ru$ with $r\to\infty$. Our next result shows that in the constrained case, the inward flow condition also implies $\beta_1 \le 0$, and is therefore a special case where the Blackwell condition holds. In addition, neither condition is necessary in general.

\begin{proposition}
\label{prop:sufficient-but-unnecessary-conditions}
The inward flow condition in Assumption \ref{assump:inward-flow} implies $\beta_1\le 0$. However, neither Assumption \ref{assump:restorative} nor Assumption \ref{assump:inward-flow} is necessary: there exist instances where CGEQ is achievable but restorative and inward flow conditions fail. 
\end{proposition}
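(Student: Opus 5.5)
To prove the first claim, I would specialize \Cref{def:split-blackwell} to $H=1$. A causal one-round policy is just a choice of action $x\in C$, and $\mathcal W_1=\mathcal Z$. Hence
\[
\beta_1=\sup_{u\in\Sphere^{d-1}}\inf_{x\in C}\sup_{z\in\mathcal Z}\langle u,\poff(x,z)\rangle .
\]
For a fixed direction $u$, I will exhibit an $x$ with $\sup_z\langle u,\poff(x,z)\rangle\le 0$. The natural choice is a minimizer $x_u\in\arg\min_{y\in C}\langle u,y\rangle$, which exists because $C$ is compact and nonempty.

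The steps for this choice are as follows.
\begin{enumerate}
\item Minimality gives $\langle u,y-x_u\rangle\ge 0$ for all $y\in C$. Equivalently, $\langle -u,y-x_u\rangle\le 0$, so $-u\in N_C(x_u)$.
\item The point $x_u$ lies on $\partial C$. Indeed, if $x_u$ were interior, then $x_u-\epsilon u\in C$ for small $\epsilon>0$ and would have a strictly smaller value, since $u\neq 0$. If $C$ has empty interior, every point of $C$ is a boundary point anyway.
\item Assumption~\ref{assump:inward-flow}, applied at $x_u$ with $n=-u$, gives $\langle -\poff(x_u,z),-u\rangle\le 0$. That is, $\langle u,\poff(x_u,z)\rangle\le 0$ for every $z\in\mathcal Z$.
\end{enumerate}
Taking the supremum over $z$ and then over $u$ yields $\beta_1\le 0$.

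The only step needing care is the sign bookkeeping. One must choose the \emph{minimizer} of $\langle u,\cdot\rangle$, not the maximizer, so that $-u$ is the relevant outward normal and inward flow of $-\poff$ translates into $\langle u,\poff\rangle\le 0$.

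For the non-necessity claim, I would give a single one-dimensional instance in which $\beta_1\le 0$, so that CGEQ is achievable by \Cref{thm:split-blackwell} with $H=1$, but both assumptions fail. Take $d=1$, $C=[0,1]$, $\mathcal Z=[0,1/2]$ and $\poff(x,z)=z-x$; this satisfies $\|\poff\|_\infty\le 1$. I would then verify three things.
\begin{enumerate}
\item $\beta_1\le 0$. For $u=+1$, pick $x=1$, giving $z-1\le 0$ for all $z$. For $u=-1$, pick $x=0$, giving $-z\le 0$ for all $z$.
\item Inward flow fails. At the boundary point $x=1$ we have $N_C(1)=[0,\infty)$ and $-\poff(1,z)=1-z>0$, so $\langle -\poff,1\rangle>0$.
\item Restorativity fails. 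Take $h=0$ and $\phi\equiv 0$, the case used by \citet{lee2026blackwell}, or any $h\le 1$. At $x=1$ we get $\langle x,-\poff(x,z)\rangle=1-z>0$, so even the weak inequality $\le-\phi(x)\le 0$ is violated.
\end{enumerate}

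The restorativity statement has a subtlety: for compact $C$, choosing $h$ larger than $\max_{x\in C}\|x\|$ makes Assumption~\ref{assump:restorative} vacuous. I would therefore state explicitly that the counterexample refers to nontrivial thresholds $h\le 1$, in which case $x=1$ is a valid test point. Beyond this, the example only requires checking a handful of signs.
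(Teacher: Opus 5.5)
Your proof that inward flow implies $\beta_1\le 0$ is correct, and it is more direct than the paper's. The paper projects the far point $-Ku$ onto $C$ and uses $-Ku-x_K\in N_C(x_K)$ to get $\langle u,\poff(x_K,z)\rangle\le R_CL/K$. It then lets $K\to\infty$, which needs $\|\poff\|_2\le L$ and only gives the bound in the limit. Your exposed point $x_u\in\arg\min_{y\in C}\langle u,y\rangle$, with $-u\in N_C(x_u)$, gives $\sup_z\langle u,\poff(x_u,z)\rangle\le 0$ exactly and needs no boundedness.

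\textbf{Gap in the restorativity part.} Assumption~\ref{assump:restorative} is existential in $(h,\phi)$. On your compact $C=[0,1]$, the payoff is $(2,0)$-restorative vacuously, because no $x\in C$ has $\|x\|_2\ge 2$. Declaring that only thresholds $h\le 1$ count changes the assumption rather than refuting it. Restorativity only constrains $\poff$ outside a ball, so no instance with bounded $C$ can show it is unnecessary.

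What you need is an unbounded action set on which the inequality fails at points of arbitrarily large norm, while CGEQ stays achievable. The paper uses $C=\R^2$ and $\ell(x,z)=z\sqrt{1+(x_1-1)^2}$ with $z\in[1,2]$. The common stationary point $(1,0)$ gives exact GEQ. Yet $x_R=(1/2,R)$ satisfies $\langle x_R,-\poff(x_R,z)\rangle=z/(2\sqrt5)>0$ for every $R$. Since Assumption~\ref{assump:inward-flow} presupposes compact $C$, one instance cannot witness both failures. You need two separate examples, as the paper gives.

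\textbf{Caveat on the inward-flow example.} Your counterexample is valid for general vector payoffs. Note, however, that $\poff(x,z)=z-x$ is minus the gradient of the convex loss $\tfrac12(x-z)^2$. It works only because approachability of $\{0\}$ is invariant under $\poff\mapsto-\poff$, while inward flow is not. For convex losses on $[a,b]$ against an unrestricted adversary, inward flow reads $\poff(a,z)\le 0\le\poff(b,z)$ for all $z$. That is exactly $\beta_1\le 0$, so no one-dimensional convex counterexample exists. The paper instead uses a strongly convex example on the unit disk: the common stationary point $(0,3/4)$ is interior, and $-\poff(e_1,z)$ points outward. This shows the failure is not an artifact of the sign convention.
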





\subsection{Approachability under sliding window adversary}\label{subsec:sliding-window-theory}

We now turn to the more natural sliding window condition, where the key difference is the temporal dependence induced by the sliding window condition. To this end, let
\(
s=(z_{t-H},\ldots,z_{t-1})\in\mathcal W_H
\)
denote the length-$H$ history immediately preceding an $H$-round block starting at time $t$.\footnote{For the initial rounds, one may
prepend a fixed placeholder outcome to define the initial state.}
We define the set of admissible successors for the next $H$ rounds as
\begin{align*}
\mathcal W_H(s)
& \coloneqq
\{
s'=(s_1',\dots,s_H') \in\mathcal W_H: \\
& (s_{r+1},\ldots,s_H,s'_1,\ldots,s'_r)\in\mathcal W_H, \forall r\in[H-1]
\}.
\end{align*}
Throughout we assume that $\mathcal{W}_H(s)\neq \varnothing$ for every state under consideration; otherwise, the adversary has no eligible move and the game breaks.

This continuation constraint creates a trade-off between the adversary's current directional payoff and the set of future continuations that remain available. To capture this dependence, let $\varphi:\mathcal W_H\to\mathbb R$ be a bounded continuation potential function. The difference $\varphi(s')-\varphi(s)$ accounts for the change in future continuation value induced by the transition $s\to s'$. For any direction $u\in\mathbb S^{d-1}$, define the state-dependent operator
\begin{equation*}
\label{eq:sliding-operator}
(\mathcal T_{u,H}\varphi)(s)
\coloneqq
\inf_{\pi\in\Pi_H}
\sup_{s'\in\mathcal W_H(s)}
\left\{
    \left\langle u,G_H(\pi,s')\right\rangle
    +\varphi(s')
\right\}.
\end{equation*}

Using this operator, define the lower and upper directional
values
\begin{align*}
\underline\lambda_H
&\coloneqq
\sup_{u\in\mathbb S^{d-1}}
\sup_{\Potential\in\ell^\infty(\WinSet)}
\inf_{s\in\WinSet}
\left\{
    (\mathcal T_{u,H}\Potential)(s)-\Potential(s)
\right\},
\\
\overline\lambda_H
&\coloneqq
\sup_{u\in\mathbb S^{d-1}}
\inf_{\Potential\in\ell^\infty(\WinSet)}
\sup_{s\in\WinSet}
\left\{
    (\mathcal T_{u,H}\Potential)(s)-\Potential(s)
\right\}.
\end{align*}
By the minimax inequality, $\underline\lambda_H\le\overline\lambda_H$. These values turn out to determine approachability of CGEQ under sliding window adversaries, as shown in the following theorem. 

\begin{theorem}[Approachability under sliding window adversary]
\label{thm:slide-blackwell}
For a constraint set $\mathcal{W}_H$, $\underline\lambda_H \le 0$ is a necessary condition for the pointwise approachability
\begin{equation}\label{eq:weak_approachability}
\lim_{T\to\infty}\left\|\frac{1}{T}\sum_{t=1}^T \poff(x_t,z_t)\right\|_2 = 0
\end{equation}
against all $H$-sliding window adversaries. Similarly, $\overline\lambda_H\le 0$ is a necessary and sufficient condition for the uniform approachability 
\begin{equation}\label{eq:strong_approachability}
\left\|\frac{1}{T}\sum_{t=1}^T \poff(x_t,z_t)\right\|_2 \le r(T),\quad \forall T\ge 1, 
\end{equation}
with $r(T)=o(1)$ being a deterministic function, against all $H$-sliding window adversaries.

In addition, under Assumption~\ref{ass:finite-step-bridging} below, we have
\[
    \underline\lambda_H
    =
    \overline\lambda_H
    \eqqcolon
    \lambda_H, \quad \text{and }\eqref{eq:weak_approachability} \Longleftrightarrow \eqref{eq:strong_approachability}. 
\]
\end{theorem}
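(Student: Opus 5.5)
The proof views the game at block resolution as a vector-payoff stochastic game whose state is the last admissible window $s\in\WinSet$. The state moves to $s'\in\WinSet(s)$ after an $H$-round block in which the learner commits to $\pi\in\Pi_H$ and collects $G_H(\pi,s')$. This is a Markov-state analogue of the blockwise game behind Theorem~\ref{thm:split-blackwell}. The potential $\Potential$ plays the role of a relative value function, so $(\mathcal T_{u,H}\Potential)(s)-\Potential(s)$ is a one-block Bellman residual for the scalarized game in direction $u$. The argument has three parts: necessity of $\underline\lambda_H\le0$ and $\overline\lambda_H\le0$ by an adversary that scalarizes in a fixed bad direction; sufficiency of $\overline\lambda_H\le 0$ by a Blackwell-type potential argument with a Bellman correction; and the equality under Assumption~\ref{ass:finite-step-bridging}.

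\emph{Necessity.} Suppose $\underline\lambda_H>0$. Fix $u$, a bounded $\Potential$ and $\delta>0$ with $(\mathcal T_{u,H}\Potential)(s)-\Potential(s)\ge\delta$ for every $s$. The adversary plays blockwise. At the start of block $k$, in state $s_{k-1}$, the learner's restriction to the block is some causal $\pi_k\in\Pi_H$; the learner is deterministic given the past, or we handle randomization by approximating the inf with a mixed policy. The adversary picks $s_k\in\WinSet(s_{k-1})$ with $\langle u,G_H(\pi_k,s_k)\rangle+\Potential(s_k)\ge\Potential(s_{k-1})+\delta/2$. Admissibility of the concatenation is exactly the definition of $\WinSet(\cdot)$. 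Telescoping over $K$ blocks gives $\langle u,\sum_{t\le KH}\poff_t\rangle\ge K\delta/2-2\|\Potential\|_\infty$. Hence the average stays at distance at least $\delta/(2H)-o(1)$ from the origin for every $T$, which contradicts the pointwise approachability \eqref{eq:weak_approachability}.

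For $\overline\lambda_H>0$ versus the uniform approachability \eqref{eq:strong_approachability}, the direction $u$ is still fixed, but for every $\Potential$ there is some state with a positive residual. I would argue through the finite-horizon values $V_n(s)=n$-fold iterate of $\mathcal T_{u,H}$ applied to $0$. If $\sup_s (V_n(s)-V_{n-1}(s))$ stayed bounded above by something tending to $\le 0$ on average, then a Cesàro/limsup construction $\Potential=\limsup$-averaged $V_n-n c$ would give a potential with nonpositive residual. So some state $s^\ast$ satisfies $V_n(s^\ast)\ge c n$ for infinitely many $n$, with $c>0$ and uniformity over $n$. Since the bound $r(T)$ is required uniformly over adversaries and horizons, the adversary may choose the horizon and steer toward a state with large $V_n$. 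This gives $\|\tfrac1T\sum\poff_t\|\ge c/H$ along a sequence of $T$, contradicting $r(T)=o(1)$.

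\emph{Sufficiency.} Assume $\overline\lambda_H\le0$. Run Blackwell's strategy at block resolution. Let $S_{k}$ be the cumulative payoff after $k$ blocks and $u_k=S_k/\|S_k\|$. Choose $\Potential_k$ with $\sup_s\{(\mathcal T_{u_k,H}\Potential_k)(s)-\Potential_k(s)\}\le\epsilon_k$, and play the near-minimizer $\pi$ in $\mathcal T_{u_k,H}\Potential_k(s_{k})$. Then $\langle u_k,G_H\rangle\le \Potential_k(s_{k})-\Potential_k(s_{k+1})+\epsilon_k$, and the standard expansion $\|S_{k+1}\|^2\le\|S_k\|^2+2\langle S_k,G_H\rangle+dL^2H^2$ applies. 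The obstacle is that $\Potential_k$ changes with $u_k$, so the potential terms do not telescope. I would first try to keep the direction frozen over epochs of geometrically growing length, recomputing $u$ only at epoch boundaries. Within an epoch, telescoping costs $2\|\Potential\|_\infty$, which requires a uniform bound on $\|\Potential_u\|_\infty$ over $u$, obtainable by compactness/normalization. Summing the epochwise Blackwell inequalities gives $\|S_T\|=o(T)$ with a deterministic rate.

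\emph{Equality.} Under Assumption~\ref{ass:finite-step-bridging}, any state can reach any other in boundedly many blocks. Then the span of $V_n$ is bounded uniformly in $n$, so the average-reward optimality equation has a bounded solution with a state-independent gain. This forces the inf-sup and sup-inf over $s$ to coincide, and the pointwise and uniform notions to agree. I expect the main obstacle to be the $\overline\lambda_H>0$ necessity argument together with the non-telescoping potentials in sufficiency, both of which hinge on controlling $\|\Potential\|_\infty$ uniformly over directions.
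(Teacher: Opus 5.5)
Your necessity argument for $\underline\lambda_H\le0$ matches the paper's. The other two parts have genuine gaps.

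\textbf{Necessity of $\overline\lambda_H\le0$.} The missing ingredient is the identity $\overline\lambda_H(u)=\inf_{n\ge1}\sup_sV_n^u(s)/n=\lim_n\sup_sV_n^u(s)/n$, an additive Collatz--Wielandt formula (Lemma~\ref{lem:supinf-relation}).

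Its nontrivial half is $\overline\lambda_H(u)\le b_n:=\sup_sV_n^u(s)/n$ for each fixed $n$. It comes from the explicit bounded subsolution $\Potential=\min_{0\le j\le n-1}\{V_j^u-jb_n\}$. Monotonicity and additive homogeneity of $\mathcal T_{u,H}$ give $\mathcal T_{u,H}\Potential\le b_n+\min_{1\le j\le n}\{V_j^u-jb_n\}\le\Potential+b_n$, because $V_n^u-nb_n\le0=V_0^u$.

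Your Ces\`aro/limsup construction is not specified enough to check. The conclusion you draw from it is also false: $\overline\lambda_H(u)=c>0$ does not give a single state $s^\ast$ with $V_n^u(s^\ast)\ge cn$ infinitely often. Take $H=2$, $\mathcal Z=\mathbb N$, admissible pairs $(m+1,m)$ and $(0,0)$, and payoff $\mathbf 1\{z\ge1\}$. Then $\sup_sV_n^{+1}(s)=2n$ for every $n$, yet $V_n^{+1}(s)$ stays bounded for each fixed $s$. This instance is pointwise but not uniformly approachable.

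The correct contradiction uses a different state $s_n$ for each $n$. The adversary reaches $s_n$ by choosing its first window, i.e.\ by uniformity over the initial state. It cannot steer there during play, which is impossible without bridging.

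The same identities are what turn your bounded-span observation under Assumption~\ref{ass:finite-step-bridging} into $\underline\lambda_H=\overline\lambda_H$. The bounded solution of the average-reward equation you invoke instead is asserted without proof and is not needed.

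\textbf{Sufficiency.} Two steps fail as written.

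First, $\overline\lambda_H\le0$ only gives, for each $\epsilon>0$, some $\Potential_{u,\epsilon}$ with $\sup_s\{(\mathcal T_{u,H}\Potential_{u,\epsilon})(s)-\Potential_{u,\epsilon}(s)\}\le\epsilon$. Since the infimum need not be attained, $\operatorname{range}(\Potential_{u,\epsilon})$ may diverge as $\epsilon\downarrow0$. Normalization removes only additive constants, and $\ell^\infty(\WinSet)$ is not compact, so your uniform range bound does not follow. A uniform range bound on exact subsolutions is precisely the extra hypothesis $B_H<\infty$ of Theorem~\ref{thm:slide-quantitative}.

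Second, with the direction frozen over an epoch, only $\langle u_e,\Delta_e\rangle$ is controlled. The component of $\Delta_e$ orthogonal to $u_e$ can be of order $HLm_e$. The recursion then yields only $\|S\|_2\lesssim HL(\sum_{e\le E}m_e^2)^{1/2}$, which is of the order of the horizon when $m_e$ grows geometrically, because the last epoch is a constant fraction of the time. You need $\sum_{e\le E}m_e^2=o\bigl((\sum_{e\le E}m_e)^2\bigr)$, e.g.\ $m_e=e$.

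The paper avoids potentials in this direction altogether. In epoch $e$ (of $e$ blocks) it plays a $1$-optimal $e$-block policy for $V_e^{u_e}$ from the current state, so $\langle u_e,\Delta_e\rangle\le\sup_sV_e^{u_e}(s)+1$. It then uses that $u\mapsto\sup_sV_n^u(s)/n$ is $M_H$-Lipschitz, so a finite-net argument gives convergence to $\overline\lambda_H(u)\le0$ uniformly on $\mathbb S^{d-1}$.

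Your potential-based route can be repaired along the same lines. For each fixed $\epsilon$, make the range bound uniform in $u$ by a finite net, using that $\mathcal T_{u,H}$ is $M_H$-Lipschitz in $u$. Then let $\epsilon_e\downarrow0$ slowly enough that $\sup_u\operatorname{range}(\Potential_{u,\epsilon_e})=o(m_e)$, and take $m_e=e$.
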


The missing assumption is a mild bridging assumption stating that every admissible state can reach every other admissible state in a uniformly bounded number of transitions. This condition rules out disconnected components of the sliding state space in the long run.
\begin{assumption}[Finite-step bridging]
\label{ass:finite-step-bridging}
There exists $C_B\in\mathbb N$ such that, for every
$s,s'\in\WinSet$, there exist an integer
$0\le m\le C_B$ and admissible states
\(
s=s_0,s_1,\ldots,s_m=s'
\)
satisfying
\(
s_{j+1}\in\mathcal W_H(s_j)
\)
for $j=0,\ldots,m-1$.
\end{assumption}

Similar to the second statement of \Cref{thm:split-blackwell}, we can obtain a quantitative version of \Cref{thm:slide-blackwell} by assuming a bounded range of $\varphi$ in the infimum of $\overline\lambda_H$. 

\begin{theorem}[Quantitative sliding-window guarantee]
\label{thm:slide-quantitative}
Assume that $\|g\|_\infty\le L$, and for every $u\in\mathbb S^{d-1}$, there exists a bounded function
$\varphi_u:\WinSet\to\mathbb R$ such that
\begin{equation*}
\label{eq:bounded-potential}
\sup_{u\in\mathbb S^{d-1}}
    \operatorname{range}(\varphi_u)
    \le B_H<\infty,
\end{equation*}
and
\(
    \mathcal T_{u,H}\varphi_u
    \le
    \varphi_u
\)
on $\WinSet.$ Here $\operatorname{range}(\Potential_u):=\sup_{s\in\WinSet}\Potential_u(s)-\inf_{s\in\WinSet}\Potential_u(s)$. Then for any $T\ge1$, there exists a learning algorithm such that
{\small
\[
    \left\|
        \sum_{t=1}^T \poff(x_t,z_t)
    \right\|_2
    =
    O\left(
        B_H^{1/3}L^{2/3}T^{2/3}
        +L\sqrt{HT}+HL
    \right).
\]
}
\end{theorem}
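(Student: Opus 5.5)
We will prove the bound with a blockwise Blackwell strategy, adapted to the sliding-window state dynamics, and then trade off the potential's contribution against the block length. Partition $[T]$ into consecutive super-blocks of length $H$; the state at the start of block $k$ is the previous window $s_{k-1}\in\WinSet$. Write $S_k\coloneqq\sum_{j\le k}G^{(j)}\in\mathbb R^d$ for the cumulative payoff after $k$ blocks, where $G^{(j)}$ is the payoff of block $j$. At the start of block $k+1$, if $S_k\neq 0$ we set $u_k=S_k/\norm{S_k}$. The learner then plays a near-minimizer $\pi_k\in\Pi_H$ of the inf in $(\mathcal T_{u_k,H}\varphi_{u_k})(s_k)$, with slack $\varepsilon_k$ chosen summable. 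The hypothesis $\mathcal T_{u,H}\varphi_u\le\varphi_u$ then gives, for every admissible successor $s_{k+1}\in\mathcal W_H(s_k)$,
\[
\langle u_k,G^{(k+1)}\rangle\le \varphi_{u_k}(s_k)-\varphi_{u_k}(s_{k+1})+\varepsilon_k .
\]

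The standard expansion is
\[
\norm{S_{k+1}}^2=\norm{S_k}^2+2\norm{S_k}\langle u_k,G^{(k+1)}\rangle+\norm{G^{(k+1)}}^2,
\]
with $\norm{G^{(k+1)}}^2\le dL^2H^2$, or $L^2H^2$ after normalising $L$ as an $\ell_2$ bound as in \Cref{thm:split-blackwell}. The difficulty is that the potential terms do not telescope, because the direction $u_k$ changes from block to block. The fix is to freeze the direction over epochs of $m$ blocks. Within an epoch we keep a fixed $u$, computed from $S$ at the epoch start, and play the same policy rule each block. Summing the one-block inequality over the epoch, the potentials now telescope, giving
\[
\Big\langle u,\sum_{\text{epoch}}G\Big\rangle\le B_H+m\varepsilon .
\]
We then apply the Blackwell square-expansion at the epoch level, where each epoch's payoff has norm at most $LHm$. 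Over $E=T/(Hm)$ epochs this yields
\[
\norm{S_T}^2\le \sum_{e}\bigl(2\norm{S_{e}}B_H+L^2H^2m^2\bigr)\le 2B_H E\max_e\norm{S_e}+E L^2H^2m^2 .
\]
Solving the quadratic inequality gives
\[
\norm{S_T}\lesssim B_H E+L H m\sqrt E = \frac{B_H T}{Hm}+L\sqrt{HmT}.
\]
The first term could also be handled more carefully by the radius trick $\norm{S}\le\max(\dots)$.

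Next we optimise over the epoch length $n=Hm$, subject to $n\ge H$. Balancing $B_HT/n$ against $L\sqrt{nT}$ gives $n\asymp (B_H/L)^{2/3}T^{1/3}$, and hence
\[
\norm{S_T}=O\bigl(B_H^{1/3}L^{2/3}T^{2/3}\bigr).
\]
If this $n$ is smaller than $H$, we instead take $m=1$, which gives the $L\sqrt{HT}$ term. A final incomplete epoch or block contributes at most $LH$, plus at most one epoch of length $n$, which is already absorbed into the previous terms. The sum of these three cases is the stated bound. Note that the algorithm depends on $T$ through $n$, matching the ``for any $T$, there exists'' quantifier.

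The main obstacle is the telescoping step. One must verify that reusing a fixed direction $u$ across consecutive blocks is legitimate, i.e., that the successor constraint $s_{k+1}\in\mathcal W_H(s_k)$ chains correctly, so that $\varphi_u(s_k)-\varphi_u(s_{k+1})$ sums to at most $\operatorname{range}(\varphi_u)\le B_H$. One must also handle the inf in $\mathcal T_{u,H}$ not being attained, which is why we choose $\varepsilon_k$ small. If the epoch-level expansion turns out to be too lossy, the fallback is to use the potential-augmented Lyapunov function $\norm{S_k}^2+2\norm{S_k}\varphi_{u}(s_k)$. In that case we would bound its drift directly, paying for the change in direction through the Lipschitzness of $S\mapsto S/\norm{S}$, which contributes $B_H\norm{G}$ per block; this again leads to the same $T^{2/3}$ trade-off.
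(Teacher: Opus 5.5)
Your proposal is correct and follows essentially the same route as the paper: freeze the direction $u_e=S_e/\|S_e\|_2$ over epochs of $m$ blocks, and in each block play a near-optimal policy for $(\mathcal T_{u_e,H}\varphi_{u_e})(s_{k-1})$, so that the potential telescopes to $\langle u_e,\Delta_e\rangle\le B_H+\varepsilon q_e$; then apply the square expansion at the epoch level and optimize $m$. The paper solves the resulting recursion by a direct induction, $\|S_{N+1}\|_2\le B_HN+\varepsilon K+HLm\sqrt N$, which is equivalent to your max/quadratic argument; it keeps the total slack at $O(HL)$ via $\varepsilon=HL/K$, and covers the regime $B_H>KHL$ (where your optimal epoch would exceed the horizon) by the trivial bound $KHL\le B_H^{1/3}(HL)^{2/3}K^{2/3}$.
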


Compared with \Cref{thm:split-blackwell}, the necessary and sufficient characterization $\lambda_H\le 0$ in \Cref{thm:slide-blackwell} can be viewed as a state-corrected analogue of the classical Blackwell halfspace condition. More precisely, this condition means that, for every direction
$u\in\Sphere^{d-1}$ and every $\varepsilon>0$, there exists a
bounded potential
$\Potential_{u,\varepsilon}:\WinSet\to\R$
such that, for every current state $s\in\WinSet$, the learner can
choose a causal block policy
$\pi_{u,s,\varepsilon}\in\Pi_H$ satisfying
\[
\sup_{s'\in\WinSet(s)}
\left\{
\iprod{u,\blk_H(\pi_{u,s,\varepsilon},s')}
+\Potential_{u,\varepsilon}(s')
-\Potential_{u,\varepsilon}(s)
\right\}
\le \varepsilon.
\]
In particular, by choosing $\varphi_u\equiv 0$, the condition $\beta_H\le 0$ in \eqref{eq:split-blackwell} clearly implies $\lambda_H\le 0$. In the special case where $\mathcal{W}_H(s) = \mathcal{W}_H$ for all $s\in \mathcal{W}_H$, i.e. sliding window induces no additional temporal dependence, simple algebra gives $\beta_H = \underline\lambda_H = \overline\lambda_H$. The next result shows that they could be different in general. 

\begin{proposition}\label{prop:comparison}
There exist instances with $\lambda_H \le 0 < \beta_H$, meaning that approachability of the origin is possible under the sliding window adversary, but impossible under the blockwise adversary. 
\end{proposition}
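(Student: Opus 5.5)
We prove the proposition with an explicit instance in which the learner has no influence at all. All of the difference between $\beta_H$ and $\lambda_H$ then comes from the continuation structure $\mathcal W_H(s)$: the sliding window condition forces the adversary's outcomes to cancel, while the blockwise condition lets the adversary repeat one favourable window.

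\textbf{Construction.} Take $d=1$, $H=2$, $\mathcal Z=\{-1,0,1\}$ and a singleton action set $C=\{x_0\}$. Let the payoff be $\poff(x,z)=z$, so $L=1$. Set
\[
\WinSet=\{(1,0),\,(0,-1),\,(-1,1)\}.
\]
Since $C$ is a singleton, $\Pi_2$ contains only one policy, and $G_2(\pi,s)=s_1+s_2$ for every $s\in\WinSet$.

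\textbf{Step 1: $\beta_2>0$.} Take the direction $u=+1$. The inner infimum over $\pi$ is trivial, so
\[
\sup_{s\in\WinSet}\langle u,G_2(\pi,s)\rangle=\max\{1,-1,0\}=1.
\]
Hence $\beta_2\ge 1>0$. Concretely, the blockwise adversary can play $(1,0)$ in every block, which gives average payoff $1/2$ for every $T$. So approachability fails under the blockwise adversary, consistent with \Cref{thm:split-blackwell}.

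\textbf{Step 2: compute the successor sets.} A successor $s'\in\mathcal W_2(s)$ must satisfy $s'\in\WinSet$ and $(s_2,s'_1)\in\WinSet$.
\begin{itemize}
\item For $s=(1,0)$, we need $(0,s'_1)\in\WinSet$, so $s'_1=-1$ and $s'=(-1,1)$.
\item For $s=(-1,1)$, we need $(1,s'_1)\in\WinSet$, so $s'_1=0$ and $s'=(0,-1)$.
\item For $s=(0,-1)$, we need $(-1,s'_1)\in\WinSet$, so $s'_1=1$ and $s'=(1,0)$.
\end{itemize}
Every $\mathcal W_2(s)$ is therefore a nonempty singleton, so the game is well posed. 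The induced state dynamics is the deterministic 3-cycle
\[
(1,0)\to(-1,1)\to(0,-1)\to(1,0).
\]
Along this cycle the per-transition payoffs $G_2(s')$ are $0,-1,1$, which sum to zero.

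\textbf{Step 3: $\overline\lambda_2\le 0$ via a potential.} Fix $u\in\{\pm1\}$. Since each state has exactly one successor, the Bellman inequality $\mathcal T_{u,2}\varphi\le\varphi$ reduces to
\[
\varphi(s)\ge uG_2(s')+\varphi(s')\quad\text{for the unique successor } s'.
\]
We solve this with equality around the cycle. Set $\varphi_u((1,0))=0$. Then
\[
\varphi_u((-1,1))=-uG_2((-1,1))=0,\qquad \varphi_u((0,-1))=\varphi_u((-1,1))-uG_2((0,-1))=u.
\]
Closing the cycle requires $\varphi_u((1,0))=\varphi_u((0,-1))-uG_2((1,0))=u-u=0$, which is consistent with the starting value. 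This consistency is exactly where the zero cycle sum is used.

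Consequently $(\mathcal T_{u,2}\varphi_u)(s)-\varphi_u(s)=0$ for every state $s$, so $\overline\lambda_2\le 0$, and hence $\lambda_2\le\overline\lambda_2\le 0$. The range of $\varphi_u$ is at most $B_H=1$. By \Cref{thm:slide-blackwell} (or \Cref{thm:slide-quantitative}), the origin is therefore approachable under the sliding window adversary.

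\textbf{Direct check.} Any $2$-sliding-window sequence cycles through $1,0,-1$ from some starting phase, so every partial sum $\sum_{t\le T}z_t$ lies in $[-1,1]$. The average payoff is thus $O(1/T)$, which confirms the conclusion without invoking the theorems.

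\textbf{Main obstacle.} The only delicate point is choosing $\WinSet$ so that two requirements hold at once. First, some admissible window has a strictly positive sum, which is needed for $\beta_H>0$. Second, the transitions induced by sliding are forced and never leave the game stuck: every $\mathcal W_H(s)$ must be nonempty, and the cycle sums must vanish so that a bounded potential exists. The 3-cycle above satisfies both requirements, and the remaining verification is routine.
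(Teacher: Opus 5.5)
Your proof is correct, but it takes a different route from the paper. The successor sets are right, and the sliding dynamics is the forced 3-cycle $(1,0)\to(-1,1)\to(0,-1)\to(1,0)$ with zero cycle sum. Your $\varphi_u$ satisfies $\mathcal T_{u,2}\varphi_u=\varphi_u$, and $\beta_2=1$. The cycle also gives finite-step bridging with $C_B=2$, so $\lambda_2$ is well defined and equals $\overline\lambda_2\le 0$.

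The paper also uses $H=2$ and three symbols, but with a genuine learner $p_t\in[-1,1]$, payoffs $F_\pm(p)=\frac14(p\mp1)^2$ and $F_0\equiv-\frac14$, and $\mathcal W_2$ defined as the set of locally feasible windows. There $\beta_2\ge\frac14$ because the adversary, after seeing $p_1$, picks $+$ or $-$ and repeats it. Under sliding windows, the learner plays $p_t\in\{1,-1,0\}$ according to $z_{t-1}\in\{+,-,0\}$. A bias $h$ with $h(0)=\frac14$ makes the per-round payoff telescope, giving $|\sum_t F_{z_t}(p_t)|\le\frac14$ directly; $\lambda_H\le0$ then follows from the necessity part of Theorem~\ref{thm:slide-blackwell}.

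Your version is simpler and isolates the pure mechanism: sliding couples consecutive windows and forces cancellation. It also certifies $\overline\lambda_2\le0$ through an explicit potential in the operator itself, which matches the literal statement more closely. The paper's instance says more, however. Your $\mathcal W_2$ is not locally feasible, since with a singleton $C$ every window would have to sum to zero and $(1,0)$ does not. Your blockwise adversary just repeats an intrinsically bad window, and your learner has no choices at all. The paper shows that the gap between $\beta_H$ and $\lambda_H$ survives even when every window is locally feasible, as in Section~\ref{sec:local-feasibility}. In its example the blockwise failure comes from the adversary adapting within a block, and the sliding success requires a state-dependent learner.
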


\section{Algorithms under Local Feasibility}
\label{sec:local-feasibility}

The Blackwell conditions in Theorems~\ref{thm:split-blackwell} and \ref{thm:slide-blackwell}, although both necessary and sufficient, can be hard to verify in general. In this section, we stick to the target of CGEQ and restrict $\mathcal{W}_H$ to be the set of \emph{locally feasible} sequences, namely, for every consecutive interval
$I$ of length $H$, the adversary satisfies
\begin{equation}
\label{eq:weak_local_feasibility}
    \min_{x\in C}
    \sup_{u\in\mathbb S^{d-1}}
    \sum_{t\in I}
        \langle u,g_t(x)\rangle
    =
    \min_{x\in C}
    \left\|
        \sum_{t\in I} g_t(x)
    \right\|_2
    = 0.
\end{equation}
In other words, every admissible length-$H$ window admits an action $x_I\in C$ such that $\sum_{t\in I} g_t(x_I)=0$. This is the assumption we made on the simple example in \Cref{sec:lowerbound}, and ensures that in hindsight, there exists a feasible action that could attain exact local CGEQ on every length-$H$ (blockwise or sliding) window. 

The local feasibility condition \eqref{eq:weak_local_feasibility} may or may not imply the Blackwell conditions in Theorems~\ref{thm:split-blackwell} and \ref{thm:slide-blackwell}; see \Cref{sec:weak_feasibility_sufficient} for sufficiency results in dimension 1, but \Cref{sec:weak_feasibility_insufficient} for insufficiency results in dimension 2. More importantly, in \Cref{sec:strong-local-feasibility}, under a directional variant of \eqref{eq:weak_local_feasibility}, we provide an algorithmic reduction that converts any no-regret learning algorithm into CGEQ. 

\begin{remark} 
In Appendix \ref{app:soft-cgeq}, we also extend our feasibility condition \eqref{eq:weak_local_feasibility} to a competitive framework, and extend our algorithms and results to cases where the constraint $C_t$ can depend on $t$ and be \emph{soft}, i.e. allow positive but small cumulative violations. 
\end{remark}


\subsection{When local feasibility is sufficient}
\label{sec:weak_feasibility_sufficient}

In Section~\ref{sec:lowerbound}, we have seen that CGEQ is achievable in the online forecasting example with local feasibility conditions. This turns out to be a general phenomenon for one-dimensional
convex losses $\ell_t$, whose selected subgradients $\poff_t$
are nondecreasing. 


\begin{proposition}\label{prop:1d_example_sufficiency}
Let $C=[a,b]\subseteq\mathbb R$ and each loss $\ell_t$ be convex. Suppose the adversary satisfies local feasibility \eqref{eq:weak_local_feasibility}. Then the blockwise Blackwell condition \eqref{eq:split-blackwell} holds, and thereby CGEQ is achievable under both blockwise and sliding window adversaries. 
\end{proposition}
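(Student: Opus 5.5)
Since $d=1$, the sphere $\mathbb S^0=\{-1,+1\}$ has only two directions. It therefore suffices to show that for each $u\in\{\pm1\}$ there is a causal block policy $\pi_u\in\Pi_H$ with $\sup_{s\in\WinSet} u\,G_H(\pi_u,s)\le 0$. A separate policy may be chosen for each direction, because the inner infimum in \eqref{eq:split-blackwell} sits inside the supremum over $u$. Once $\beta_H\le 0$ holds, \Cref{thm:split-blackwell} gives approachability under the blockwise adversary. For the sliding window adversary, the observation after \Cref{thm:slide-quantitative} applies: choosing $\varphi_u\equiv 0$ shows $\beta_H\le 0\Rightarrow \overline\lambda_H\le 0$. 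Concretely, $\mathcal W_H(s)\subseteq\WinSet$ gives $\mathcal T_{u,H}0\le \inf_\pi\sup_{s'\in\WinSet}\langle u,G_H(\pi,s')\rangle\le 0$. \Cref{thm:slide-blackwell} (uniform approachability) or \Cref{thm:slide-quantitative} with $B_H=0$ then finishes the argument.

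The key choice is a constant (non-adaptive) policy that plays an endpoint of $C$. Take $u=+1$: we want $\sum_{h=1}^H g_h(x_h)\le 0$ for every admissible window. Let $\pi_+$ play $x_h\equiv a$ for all $h$. Fix any admissible window $s$. By local feasibility \eqref{eq:weak_local_feasibility} there is $x_I\in[a,b]$ with $\sum_{h} g_h(x_I)=0$. Each $\ell_h$ is convex, so its selected subgradient $g_h$ is nondecreasing. Since $a\le x_I$, this gives $g_h(a)\le g_h(x_I)$ for each $h$, and summing yields $G_H(\pi_+,s)=\sum_h g_h(a)\le 0$. Symmetrically, $\pi_-$ plays $x_h\equiv b$, and $b\ge x_I$ gives $\sum_h g_h(b)\ge 0$, i.e. $-G_H(\pi_-,s)\le 0$. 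Hence $\beta_H\le 0$; the supremum over $s$ is harmless because the bound holds for every $s$.

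The only step needing care is the monotonicity of the selected subgradients $g_t=g(\cdot,z_t)$. The paper stipulates this ("whose selected subgradients $g_t$ are nondecreasing"), and it is automatic for any selection from $\partial\ell_t$ by monotonicity of the subdifferential of a convex function in 1D. Subgradient selections at non-differentiable points remain monotone, so no continuity is needed. We also need the minimum in \eqref{eq:weak_local_feasibility} to be attained with value exactly $0$, as the condition states. If only an infimum were available, I would work with an approximate $x_I$ with $|\sum_h g_h(x_I)|\le\epsilon$; this gives $\beta_H\le\epsilon$ for all $\epsilon$, so $\beta_H\le 0$ still holds. This argument does not carry over to $d\ge 2$: there no single point of $C$ dominates every admissible window in every direction, which is consistent with \Cref{sec:weak_feasibility_insufficient}.
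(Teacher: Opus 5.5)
Your proof is correct and matches the paper's argument: the constant endpoint policies ($a$ for $u=+1$, $b$ for $u=-1$) together with monotonicity of the one-dimensional subgradients and local feasibility give $G_I(a)\le 0\le G_I(b)$, hence $\beta_H\le 0$. The only difference is in the sliding-window step, and it is minor. The paper notes that every sliding-window sequence is blockwise-admissible, so the blockwise algorithm applies directly; you instead pass through $\overline\lambda_H\le 0$ with $\varphi_u\equiv 0$, which is also valid.
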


As a sanity check, the forecasting example specializes to
\(
    g_t(p)=p-z_t
\), which is increasing in $p$, and $C=[0,1/2]$.
Local feasibility precisely requires
\[
    \overline{z}_I\coloneqq \frac1H\sum_{t\in I}z_t\in[0,1/2]
\]
for every length-$H$ window $I$. Since $u\in\mathbb S^0=\{-1,+1\}$, the learner may select the constant action $0$ for $u=+1$ and $1/2$ for $u=-1$. Then the blockwise Blackwell condition holds:
\[
\beta_H \le H \max\left\{\overline{z}_I - \frac 12, -\overline{z}_I\right\} \le 0.
\]
Thus Theorem~\ref{thm:split-blackwell} indeed shows that CGEQ is achievable, confirming Proposition \ref{prop:1d_example_sufficiency}. 

The result of Proposition \ref{prop:1d_example_sufficiency} can also be extended to ``1D-like'' problems, such as $g_t(p)=(p-z_t)\Phi(p)$ with a feasible map $\Phi: \mathbb{R}\to\mathbb{R}^d$, using the same argument above. This function class is also in the defensive forecasting paper \cite{perdomo2025defense}. 


\subsection{When local feasibility is insufficient}
\label{sec:weak_feasibility_insufficient}

Unfortunately, the local feasibility condition is not always sufficient for the Blackwell conditions in \Cref{sec:general-theory}. A simple counterexample already arises in dimension 2. Let the learner's action be
\[
    p_t=(p_{t,1},p_{t,2})
    \in
    C
    \coloneqq
    \{p\in\mathbb R_+^2:\|p\|_2\le1\}.
\]
After observing $p_t$, the adversary chooses a coordinate $j_t\in\{1,2\}$ to evaluate and a binary outcome $z_t\in\{0,1\}$. Only the selected coordinate incurs loss
\(
    \ell_t(p)
    =
    \frac12(p_{j_t}-z_t)^2,
\)
so $g_t(p) = \nabla \ell_t(p) = (p_1-z_t,0)$ if $j_t = 1$, and $g_t(p) = (0,p_2-z_t)$ if $j_t = 2$. The next result shows that CGEQ is impossible under the sliding window adversary with a locally feasible $\mathcal{W}_H$. 

\begin{proposition}
\label{thm:invalid_weak_feasibility}
For every fixed $H<\infty$, there exists a deterministic, causal, and $H$-sliding window adversary satisfying
\eqref{eq:weak_local_feasibility} such that, against
every learner sequence $p_1,p_2,\dots \in C$,
\[
    \left\|
        \sum_{t=1}^T\nabla\ell_t(p_t)
    \right\|_2
    =
    \Omega_H(T).
\]
\end{proposition}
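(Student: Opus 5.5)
The plan is to build an adaptive adversary that plays in phases of length much larger than $H$. In each phase it queries a single coordinate $j$, chosen from the learner's current behaviour, with outcome $z_t=1$. Between phases it inserts short buffers that keep every length-$H$ sliding window locally feasible. The accounting is done along the fixed direction $u=-(1,1)/\sqrt2$. Every round gives $\langle u,\nabla\ell_t(p_t)\rangle=(z_t-p_{t,j_t})/\sqrt2$. So it suffices to force $\sum_t (z_t-p_{t,j_t})\ge c_H T$ for some $c_H>0$, and then Cauchy--Schwarz gives the $\Omega_H(T)$ lower bound on $\|\sum_t\nabla\ell_t(p_t)\|_2$.

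\emph{Step 1 (structure of feasible windows).} For a window $I$, let $m_j(I)$ be the mean of $z_t$ over the rounds of $I$ with $j_t=j$, and set $m_j=0$ if coordinate $j$ is absent from $I$. A window is feasible for \eqref{eq:weak_local_feasibility} iff $p=(m_1,m_2)$ solves $\sum_{t\in I}g_t(p)=0$, i.e.\ iff $m_1^2+m_2^2\le1$. Two consequences follow. A window that queries only coordinate $1$ with all ones is feasible. A window mixing coordinate-$1$ ones with coordinate-$2$ rounds is feasible as long as the coordinate-$2$ rounds have small enough mean; for instance, coordinate-$2$ rounds carrying only zeros are always allowed.

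\emph{Step 2 (the adversary).} Each phase has a main part of length $M\gg H$. On every main-part round the adversary looks at $p_t$. Since $p_t\in C$, the coordinate $j^\star_t=\arg\min_j p_{t,j}$ satisfies $p_{t,j^\star_t}\le 1/\sqrt2$. The phase coordinate $j$ is frozen at the start of the phase, so $p_{t,j}\le1/\sqrt2$ need not hold on every round. The adversary therefore runs a counting rule. If the learner keeps $p_{t,j}$ close to $1$ for a constant fraction of the phase, then $p_{t,3-j}$ is near $0$ on those rounds, and the adversary switches to a new phase on the other coordinate. The switch is preceded by an $H$-round buffer querying the new coordinate with $z=0$. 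Such buffer rounds cost the adversary at most $p_{t,3-j}\approx 0$. The buffer also guarantees that every window straddling two phases contains ones on only one coordinate and zeros on the other, so $m_1^2+m_2^2\le1$ holds by Step 1.

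\emph{Step 3 (accounting).} In a phase on coordinate $j$, each main-part round where $p_{t,j}\le 1-\delta$ contributes at least $\delta$ to $\sum_t(z_t-p_{t,j_t})$. Rounds where $p_{t,j}>1-\delta$ force $p_{t,3-j}<\sqrt{2\delta}$. Either such rounds are rare, and the phase yields a linear gain, or they are frequent, and this triggers a switch whose buffer costs at most $H\sqrt{2\delta}$. Choosing $M$, $\delta$ and the switching threshold as functions of $H$ only then gives a net gain per phase of order $M\delta-O(H\sqrt\delta)>0$. Summing over the $T/(M+H)$ phases gives $c_H T$. Since the adversary is a deterministic function of past predictions, it is causal, and by construction every window lies in $\mathcal W_H$.

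The main obstacle is Step 3 combined with admissibility. A buffer only works if the learner is actually small on the new coordinate when the adversary switches. A learner who sits near $(1/\sqrt2,1/\sqrt2)$ never triggers a switch but also never pays for one, since on every main-part round it loses at least $1-1/\sqrt2$ per ones-round. The delicate part is therefore to choose the switching rule so that every learner behaviour falls into one of the two profitable regimes. If the bookkeeping with a fixed threshold fails, I would instead run a potential argument on the learner's time-averaged prediction over each phase.
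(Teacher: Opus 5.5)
Your Step 1 (a window is admissible iff $m_1^2+m_2^2\le1$) is correct and matches the paper. So is the reduction to forcing $\sum_t(z_t-p_{t,j_t})\ge c_HT$ along the direction $(1,1)$. The gap is that the phase adversary of Steps 2--3 can be neutralized. The adversary is fixed before the learner, so the learner may simulate your deterministic rule. It plays $p_t=e_j$ throughout a phase on coordinate $j$. It keeps playing $e_j$ through the buffer, which queries $3-j$ with $z=0$. It moves to $e_{3-j}$ exactly when the new main part starts. Then every main-part round contributes $1-1=0$ and every buffer round contributes $-p_{t,3-j}=0$, so $\sum_t\nabla\ell_t(p_t)$ stays bounded.

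In other words, the ``frequent'' branch of your Step 3 produces no gain at all. A switch only incurs the buffer cost, and the next coordinate is announced in advance: it is always $3-j$, and the buffer even queries it. So the learner is never caught small on the new coordinate. Your estimate that buffer rounds cost $\approx0$ is also unjustified: it uses smallness of $p_{t,3-j}$ on earlier rounds, not during the buffer.

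Choosing the new coordinate by $\arg\min_\ell p_{t,\ell}$ after a fixed-length buffer would not rescue this either. The learner can then raise $\min_\ell p_{t,\ell}$ on some buffer rounds so that the buffer costs exactly what the reset round gains. For the same reason, a potential on time-averaged predictions cannot fix it: the defect lies in the adversary's rule, not in the bookkeeping.

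What is missing is a reaction to the current $p_t$ on every round. The gain must come from a round on which the adversary is free to put a one on either coordinate, so that $\|p_t\|_2\le1$ forces $\min_\ell p_{t,\ell}\le1/\sqrt2$. The paper's adversary keeps the ones of each window on a single coordinate $i$ and tracks the number $q$ of trailing zero rounds:
\begin{itemize}
\item If $1-p_{t,i}\ge D_q+\gamma_H$, it plays a one on $i$. This gains enough to repay the accumulated debt $D_q$.
\item Otherwise $p_{t,i}$ is close to $1$, so $p_{t,3-i}\le\sqrt{1-p_{t,i}^2}$ is small. It plays a zero on $3-i$ at cost below $D_{q+1}-D_q$, where $D_0=0$ and $D_{q+1}=D_q+\sqrt{1-(1-D_q-\gamma_H)^2}$.
\item Once $q=H-1$, the next window consists of these zeros plus the new round. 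It then plays a one on $\arg\min_\ell p_{t,\ell}$, gaining at least $1-1/\sqrt2\ge D_{H-1}+\gamma_H$.
\end{itemize}
Each zero is thus played only on a round where it is currently cheap. If the learner tries to make zeros expensive by lowering $p_{t,i}$, it instead triggers a repaying ones-round. With $\gamma_H$ small, the bounded potential $D_q+q\gamma_H/H$ gives a drift of at least $\gamma_H/H$ on every round, and telescoping yields $\Omega_H(T)$.
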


Intuitively, the failure comes from a directional tradeoff: to avoid positive residual on one coordinate, the learner must place substantial weight on that coordinate, but the constraint $\|p_t\|_2\le 1$ then forces the other coordinate to be small.  
Crucially, local feasibility has \emph{zero slack}: for some windows and directions, the best feasible fixed action achieves directional residual exactly zero.   
The adversary can therefore exploit the learner's causal mismatch without any negative buffer to offset it, causing these local errors to accumulate into a nonvanishing gradient residual.

\subsection{Learning with strong local feasibility}\label{sec:strong-local-feasibility}

The preceding impossibility motivates a condition that gives the learner some slack to compensate for learning the hindsight comparator. We say that the adversary satisfies \textit{strong local feasibility with margin $\gamma>0$}, if for every consecutive length-$H$ interval $I$,
\begin{equation}
\label{eq:local-feasibility}
    \sup_{u\in\mathbb S^{d-1}}
    \min_{x\in C}
    \sum_{t\in I}
        \langle u,g_t(x)\rangle
    \le -\gamma H. 
\end{equation}
This condition requires that, along each direction $u$, a hindsight comparator $x_u\in C$ can attain a strictly negative contribution in the gradient growth.

In the remaining section, we assume that the window length $H$ is known and develop an algorithm that establishes CGEQ. Since a sliding window adversary must be blockwise, it suffices to learn against a blockwise adversary. Therefore, we partition the entire horizon into blocks $I_k = ((k-1)H,kH]$, for $k=1,\ldots,K=\lceil T/H\rceil$, and track the cumulative gradient across blocks.

Specifically, when $S_{(k-1)H}\ne0$, let $u_k = S_{(k-1)H}/\|S_{(k-1)H}\|_2$ denote the direction of the cumulative gradient, where $S_{t'}=\sum_{t=1}^{t'}g_t(x_t)$, and choose any $u_k\in\Sphere^{d-1}$ otherwise. An off-the-shelf online learning subroutine $\mathsf A_H$ provides a uniform regret guarantee $R_H$ under the directional loss $\langle u_k, g_t(x)\rangle$. Together with \eqref{eq:local-feasibility}, if $\gamma \ge \frac{R_H}{H}$, the regret guarantee implies
\begin{align}\label{eq:negative-correlation}
\sum_{t\in I_k}\langle u_k, g_t(x_t)\rangle \le 0.
\end{align}
The inequality \eqref{eq:negative-correlation} provides an appropriate condition for the telescoping analysis of $\|S_{kH}\|_2^2$. For a no-regret algorithm, $R_H/H\to 0$ as $H\to\infty$, so the condition $\gamma \ge \frac{R_H}{H}$ eventually holds for long windows. Therefore, this is a general algorithmic reduction that converts a no-regret learner within the window into a CGEQ algorithm; this procedure is detailed in Algorithm~\ref{alg:known-H-window}.

\begin{algorithm}[ht]
\caption{CGEQ reduction to no-regret learning}
\label{alg:known-H-window}
\begin{algorithmic}[1]

\State \textbf{Input:} Horizon $T$, window length $H$, online subroutine $\mathsf{A}_H$.

\State Initialize $S_0\gets 0$, $K\gets \lceil \frac{T}{H}\rceil$.

\For{block $k=1,2,\ldots,K$}
    \If{$S_{(k-1)H}\neq0$}
        \State Set direction $u_k\gets S_{(k-1)H}/\|S_{(k-1)H}\|_2$.
    \Else
        \State Choose any $u_k\in\mathbb S^{d-1}$.
    \EndIf
    \State Restart $\mathsf A_H$ with loss $f_t(x)=\langle u_k, g_t(x)\rangle$.
    
   \For{$t=(k-1)H+1,\ldots,\min\{kH,T\}$}
        \State Play the action $x_t$ selected by $\mathsf A_H$.
        \State Observe $g_t(x_t)$ and update
        $S_t\gets S_{t-1}+g_t(x_t)$.
    \EndFor
\EndFor
\end{algorithmic}
\end{algorithm}

\begin{theorem}[CGEQ under strong local feasibility]\label{thm:CGEQ-under-local-feasibility}
Suppose \eqref{eq:local-feasibility} holds with $\gamma>0$. If for any length-$H$ interval $I$ and fixed direction $u\in\mathbb{S}^{d-1}$, the subroutine $\mathsf{A}_H$ guarantees
\[
\sum_{t\in I}\langle u, g_t(x_t)\rangle - \min_{x^\star\in C}\sum_{t\in I}\langle u, g_t(x^\star)\rangle \le R_H,
\] 
then Algorithm~\ref{alg:known-H-window} guarantees
\[
\frac{1}{T}\left\|\sum_{t=1}^Tg_t(x_t)\right\|_2 \le L\sqrt{\frac{H}{T}} + \frac{LH}{T} = o(1)
\]
when $\gamma \ge \frac{R_H}{H}$.
\end{theorem}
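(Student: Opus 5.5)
The plan is to run the standard Blackwell-type potential argument on $\|S_{kH}\|_2^2$ at block boundaries. There are two steps: first show that every full block satisfies the negative-correlation inequality \eqref{eq:negative-correlation}, then telescope the squared norm.

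\textbf{Step 1: each full block has nonpositive correlation with $u_k$.} Fix a full block $I_k$ of length $H$. The direction $u_k$ is determined by $S_{(k-1)H}$ and is fixed before the block starts. Because $\mathsf A_H$ is restarted at the beginning of the block, it faces the fixed linear-in-$u_k$ losses $f_t(x)=\langle u_k,g_t(x)\rangle$ throughout $I_k$. The assumed regret bound then gives
\[
\sum_{t\in I_k}\langle u_k,g_t(x_t)\rangle\le \min_{x^\star\in C}\sum_{t\in I_k}\langle u_k,g_t(x^\star)\rangle+R_H .
\]
Strong local feasibility \eqref{eq:local-feasibility} holds for the interval $I_k$, and its supremum over directions dominates the particular direction $u_k$. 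So the minimum on the right is at most $-\gamma H$. Since $\gamma H\ge R_H$, the right side is at most $0$, which is exactly \eqref{eq:negative-correlation}. When $S_{(k-1)H}=0$, this property is not needed; only $\|S_{(k-1)H}\|_2=0$ is used below.

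\textbf{Step 2: telescope the squared norm.} Write $D_k=\sum_{t\in I_k}g_t(x_t)$, so that $S_{kH}=S_{(k-1)H}+D_k$. Expanding,
\[
\|S_{kH}\|_2^2=\|S_{(k-1)H}\|_2^2+2\|S_{(k-1)H}\|_2\langle u_k,D_k\rangle+\|D_k\|_2^2 .
\]
If $S_{(k-1)H}\neq 0$, Step 1 makes the cross term nonpositive; if $S_{(k-1)H}=0$, the cross term vanishes. In both cases $\|S_{kH}\|_2^2\le\|S_{(k-1)H}\|_2^2+\|D_k\|_2^2$.

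We bound $\|D_k\|_2\le HL$, interpreting $\|g\|_\infty\le L$ as a bound on the Euclidean norm of each gradient, as the paper's constants suggest. This gives $\|D_k\|_2^2\le H^2L^2$. Summing over the $K_0=\lfloor T/H\rfloor$ full blocks yields
\[
\|S_{K_0H}\|_2^2\le K_0H^2L^2\le THL^2 ,
\]
hence $\|S_{K_0H}\|_2\le L\sqrt{HT}$.

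\textbf{Step 3: handle the last partial block.} The remaining incomplete block has fewer than $H$ rounds. By the triangle inequality it adds at most $LH$ to the norm. Dividing by $T$ gives the stated bound $L\sqrt{H/T}+LH/T$, which is $o(1)$ for fixed $H$.

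\textbf{Main subtlety.} The potential argument itself is routine. The only delicate point is Step 1: the regret guarantee must apply to the adaptively chosen but block-fixed direction $u_k$, and the feasibility margin must hold on the specific intervals $I_k$. Both are covered by the hypotheses: the regret bound is stated for any interval and any fixed direction, and \eqref{eq:local-feasibility} holds for every consecutive length-$H$ interval, with blockwise intervals included. Measurability and adaptivity of $u_k$ cause no issue because $u_k$ is fixed for the whole block.
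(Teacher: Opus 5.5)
Your proposal is correct and follows the paper's proof essentially step for step. The regret bound combined with the margin $\gamma H\ge R_H$ gives $\sum_{t\in I_k}\langle u_k,g_t(x_t)\rangle\le 0$. The squared norm $\|S_{kH}\|_2^2$ then grows by at most $L^2H^2$ per full block, and the trailing partial block contributes at most $LH$ by the triangle inequality, yielding $L\sqrt{HT}+LH$.
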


For concrete examples of the online learning subroutine, we refer the readers to Appendix \ref{app:FLT_instance}.

\vspace{-0.5em}
\section{Experiments}
\label{sec:experiments}
\vspace{-0.6em}
\paragraph{Metrics.}
In practice, the constraint feedback $c_s(\cdot)$ may be observed only after the learner chooses the action $x_s$, making per-round feasibility difficult to guarantee.  
We therefore study CGEQ under soft constraints (i.e. the setting in Appendix \ref{app:soft-cgeq}), allowing individual violations while controlling their cumulative magnitude.  
We report the cumulative gradient imbalance $\|S_t\|_2$, where $S_t=\sum_{s=1}^t g_s(x_s)$, and the cumulative positive constraint violation $V_t=\sum_{s=1}^t[c_s(x_s)]_+$.

\vspace{-0.6em}
\paragraph{\softblackwell.}
For the soft-constraint setting, we propose \softblackwell, a practical CGEQ algorithm that combines a Blackwell-style outer controller with an FTRL inner learner (Algorithms~\ref{alg:soft-window} and~\ref{alg:window-soft-ftrl}, Appendix~\ref{app:exp-algorithms}).  
The outer controller adaptively balances gradient equilibrium and constraint satisfaction using cumulative feedback, while the inner learner minimizes regularized scalarized losses within each local window.  
This construction connects the joint control of gradient imbalance and constraint violation to local feasibility and the regret of the inner learner.

\vspace{-0.6em}
\paragraph{Baselines.}
Vanilla GD (Algorithm~\ref{alg:projected-gd}), used in \cite{angelopoulos2025gradient}, takes a gradient step from the current action, ignores the constraint in the update, and projects the resulting point onto the fixed action domain $\mathcal X$.  
Lazy GD (Algorithm~\ref{alg:lazy-gd}), used in \cite{ding2026calibratedmultilevelquantileforecasting}, instead maintains an unconstrained latent iterate and selects each action by projecting this iterate onto a specified feasible set.  
In our experiments, this feasible set is constructed from the constraints observed in the preceding completed window.  
The COCO baseline (Algorithm~\ref{alg:coco-baseline}), proposed in \cite{sinha2024optimal}, is designed to achieve both small regret and small cumulative constraint violation (CCV).  
Both GD baselines use constant step sizes, and Appendix~\ref{app:exp-algorithms} provides the complete update rules.

\subsection{A controlled binary forecasting example}
\label{sec:exp-toy}
We begin with a synthetic binary forecasting problem in which the learner predicts $p_t\in[0,1]$ for binary outcomes $z_t$, with gradient $g_t(p)=p-z_t$.  
The outcome sequence repeats $A^M B^M$, where $A$ and $B$ are distinct length-10 sequences.  
We impose a time-varying lower bound $q_t$ through the soft constraint $c_t(p)=q_t-p$.

Figure~\ref{fig:toy-main} shows that \softblackwell and Lazy GD maintain both small gradient imbalance and low cumulative violation as the regimes switch.  
Vanilla GD also keeps the gradient imbalance relatively small, but accumulates substantially larger constraint violation.  
COCO exhibits growing calibration excursions and the largest cumulative violation.  
Appendix~\ref{app:exp-toy} provides the detailed experimental setup, validation protocol, parameter choices, and results across different regime durations.

\begin{figure}[H]
  \centering
  \includegraphics[width=0.90\textwidth]{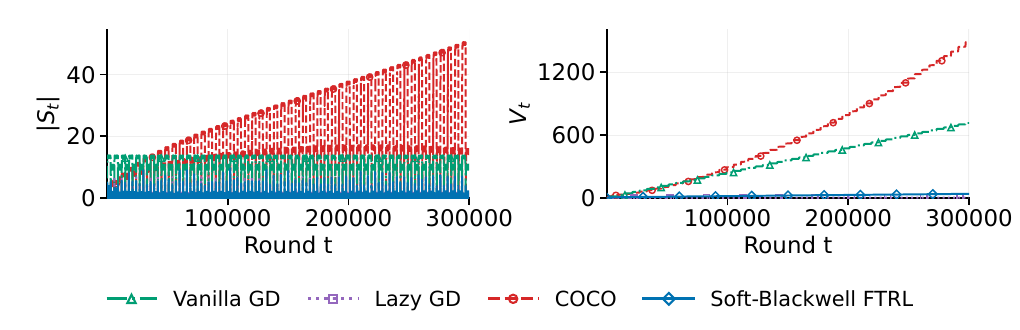}
  \caption{Binary forecasting with alternating outcome regimes: cumulative gradient imbalance $|S_t|$ (left) and constraint violation $V_t$ (right). All axes are linear.}
  \label{fig:toy-main}
  \vspace{-1em}
\end{figure}

\subsection{Calibrating LLM response toxicity}
\label{sec:exp-toxicity}
We use the public paired toxicity scores released by \citet{chen2026adversarially}.  
Each prompt has 40 candidate responses, with a machine score used for filtering and a reference score used for evaluation.  
Each prompt constitutes one online round.

The learner chooses $x_t\in[0,1]$ and accepts responses whose machine scores $a_{tj}$ do not exceed $x_t$.  
Writing $b_{tj}$ for the corresponding reference scores, define
\[
R_t(x) = \mathbf{1}_{\{\exists j: a_{tj} \le x\}}\max_{j: a_{tj} \le x} b_{tj},\qquad g_t(x)=R_t(x)-\alpha.
\]
Gradient equilibrium therefore calibrates the average maximum toxicity among accepted responses to the target level $\alpha$.  
For each window $w$, we set the budget to $q_w=(1-\tau)x^\star_w$, where $x^\star_w$ is the largest representable threshold whose mean risk over that window is at most $\alpha$.  
The soft constraint is $c_t(x)=q_w-x$ and is revealed only after the learner chooses its action.  
We use $H=10$ prompts per window and $\tau=0.1$, corresponding to 10\% relative slack in threshold space.  
Algorithm parameters are selected using the validation prompts and then held fixed during evaluation.

Figure~\ref{fig:toxicity-main} shows the evolution of the gradient imbalance and cumulative constraint violation.  
\softblackwell maintains a small cumulative gradient imbalance, while all three baselines accumulate substantially larger imbalances over the evaluation horizon.  
It also achieves the smallest final cumulative constraint violation.  
Appendix~\ref{app:toxicity-sensitivity} reports results for all four combinations of $H\in\{10,20\}$ and $\tau\in\{0.1,0.2\}$.

\begin{figure}[t]
  \centering
  \includegraphics[width=0.90\textwidth]{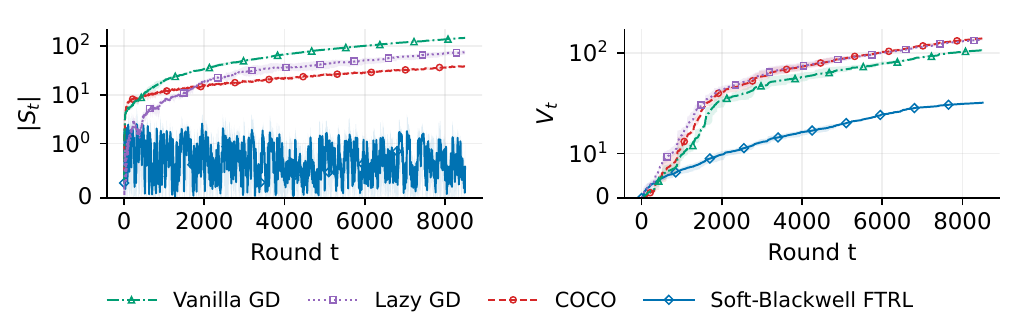}
  \caption{LLM toxicity calibration ($H=10$, $\tau=10\%$): cumulative gradient imbalance $|S_t|$ (left) and constraint violation $V_t$ (right). Lines and shading show means and min--max ranges over three prompt orders. Vertical axes use symmetric-log scaling.}
  \label{fig:toxicity-main}
  \vspace{-1em}
\end{figure}

\label{sec:exp-stock-2d}
\begin{figure}[t]
  \centering
  \includegraphics[width=0.90\textwidth]{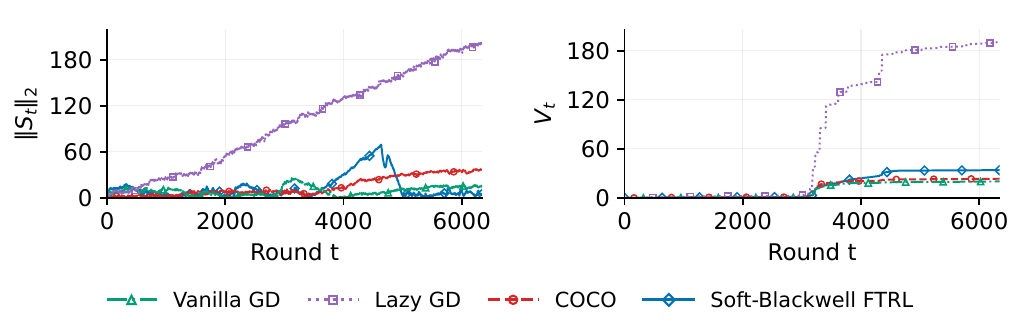}
  \caption{Simultaneous $0.9$ and $0.7$ stock coverage on FNMA with window length $H=10$: cumulative gradient imbalance $\|S_t\|_2$ (left) and constraint violation $V_t$ (right). All axes are linear.}
  \label{fig:stock-multilevel-fnma}
  \vspace{-1em}
\end{figure}

\subsection{Calibrating stock volatility prediction}

We extend the stock-volatility setting of \cite{gibbs2021adaptiveconformalinferencedistribution} to target \(0.9\) and \(0.7\) coverage simultaneously.  
For a rolling GARCH\((1,1)\) forecast \(\widehat{v}_t\) of squared return \(v_t\), the action \(x_t = (x_t^{(0.9)}, x_t^{(0.7)})\) defines intervals $I_t^{(\alpha)} = [\widehat{v}_t(1 - x_t^{(\alpha)}), \widehat{v}_t(1 + x_t^{(\alpha)})]$, for $\alpha\in\{0.9,0.7\}$.  
Writing $e_t=|v_t-\widehat v_t|/\widehat v_t$, the gradient is $g_t^{(\alpha)}(x)=1-\alpha-\mathbf{1}\{e_t>x^{(\alpha)}\}$.  
Gradient equilibrium therefore calibrates both coverage levels simultaneously.

We impose the hard constraint $0\le x_t^{(0.7)}\le x_t^{(0.9)}$ to ensure $I_t^{(0.7)}\subseteq I_t^{(0.9)}$, analogous to the noncrossing constraints in multi-level quantile forecasting \cite{ding2026calibratedmultilevelquantileforecasting}.  
We additionally impose a soft constraint to discourage excessively wide and therefore uninformative intervals.  
We partition the sequence into non-overlapping windows of length \(H\), allowing the width budgets to adapt to local volatility rather than being determined by a single global threshold.  
The soft constraint is defined as the total excess physical width,
\[
c_t(x) = 2\widehat{v}_t \sum_{\alpha\in\{0.9,0.7\}} [x^{(\alpha)} - q_w^{(\alpha+\tau)}]_+,
\]
where \(q_w^{(r)}\) denotes the empirical \(r\)-quantile of the scores in window \(w\).  
We consider window lengths $H\in\{10,20\}$, set $\tau=0.02$, and use a common chronological validation/test split across all methods.

Figure~\ref{fig:stock-multilevel-fnma} summarizes the FNMA results for $H=10$.  
\softblackwell achieves the smallest final calibration error, although it incurs larger cumulative width violation than Vanilla GD and COCO.  
Lazy GD exhibits substantially larger gradient imbalance and constraint violation.  
Vanilla GD remains competitive because the pinball subgradient directly encodes coverage error, so constant-step GD yields a telescoping calibration identity up to projection corrections.  
Appendix~\ref{app:exp-stock-2d} provides the hyperparameters, validation protocol, and extended results on three additional stocks.

\clearpage

\bibliographystyle{plainnat}
\bibliography{Arxiv/refs}

\section{Extension to soft constraints}
\label{app:soft-cgeq}

The main text focuses primarily on hard learner-side constraints.
In this section, we show that the Blackwell and vector-payoff game 
viewpoints also provide a unified treatment of soft constraints and
comparator-relative notions of gradient equilibrium.

We proceed in four steps.
First, we consider the blockwise model.
Because admissibility is imposed separately on each block,
Blackwell approachability gives an exact characterization of a general
soft-constraint benchmark, and in particular of the benchmark induced
by a constraint-feasible blockwise comparator.

Second, before returning to sliding windows, we extend the exact
sliding-window characterization in the main text from the zero target
to an arbitrary closed convex target.
This result will allow us to treat the local-comparator benchmark under
sliding-window constraints as an ordinary convex-set approachability
problem.

Third, we consider sliding-window adversaries with a local comparator.
Since this benchmark depends only on the current sliding transition,
the original sliding state remains sufficient, and under the finite-step
bridging assumption we obtain an exact characterization.

Finally, we consider the best fixed comparator in hindsight.
This benchmark depends on cumulative learner and comparator payoffs, so
we formulate a time-homogeneous augmented game. We characterize uniform
approachability of this augmented game and then obtain the original
zero-initialized best-fixed objective as a sufficient consequence.

We conclude the appendix with a binary-forecasting counterexample
showing that pointwise compatibility alone does not guarantee both comparator-relative GEQ and sublinear cumulative constraint violation.

\subsection{The blockwise model}
\label{app:soft-blockwise}

\subsubsection{A general blockwise soft-constraint benchmark}
\label{app:soft-blockwise-benchmark}

For block $I_k$, define the cumulative vector payoff
\[
    \blk_k(\pi,z)
    :=
    \sum_{t\in I_k}
    \poff(x_t^\pi,z_t),
\]
and the block constraint violation
\[
    V_k(\pi,z)
    :=
    \sum_{t\in I_k}
    [\cstr(x_t^\pi,z_t)]_+.
\]
We assume that all blocks are copies of the same $H$-round game,
with a common admissible block-outcome set $\WinSet$.
After relabeling rounds within each block, write
\[
\blk_k(\pi,z)=\blk_H(\pi,z),
\qquad
V_k(\pi,z)=V_H(\pi,z).
\]
Let $b_H:\WinSet\to[0,\infty)$ be a bounded benchmark, and set
\[
b_k(z):=b_H(z)
\qquad\text{for every block }k.
\]

We consider the simultaneous objectives
\begin{equation}
\label{eq:soft-benchmark-objective}
\left[
    \left\|
        \sum_{k=1}^K \blk_k
    \right\|_2
    -
    \sum_{k=1}^K b_k
\right]_+
=
o(KH)
\end{equation}
and
\begin{equation}
\label{eq:soft-ccv-objective}
\sum_{k=1}^K V_k
=
o(KH).
\end{equation}

Several notions are obtained by different choices of $b_k$.
For example, $b_k\equiv0$ gives ordinary GEQ together with sublinear
cumulative constraint violation.

Define the augmented payoff
\[
    \Gamma_k(\pi,z)
    :=
    \bigl(
        \blk_k(\pi,z),
        b_k(z),
        V_k(\pi,z)
    \bigr)
    \in
    \mathbb R^{d+2},
\]
and the closed convex cone
\begin{equation}
\label{eq:soft-target-cone}
\mathcal K
:=
\left\{
    (r,s,v)\in
    \mathbb R^d\times\mathbb R\times\mathbb R:
    \|r\|_2\le s,\;
    v\le0
\right\}.
\end{equation}

Let
\[
    \mathbb B_2^d
    :=
    \{w\in\mathbb R^d:\|w\|_2\le1\}.
\]
\begin{theorem}[Exact Blackwell characterization for soft constrained equilibrium]
\label{thm:soft-blockwise-blackwell}
Suppose that the augmented block payoff
\[
    \Gamma_H(\pi,z)
    :=
    \bigl(
        \blk_H(\pi,z),
        b_H(z),
        V_H(\pi,z)
    \bigr)
\]
is uniformly bounded over $\pi\in\Pi_H$ and $z\in\WinSet$.
Then the following are equivalent.

\begin{enumerate}
\item
The learner can guarantee
\eqref{eq:soft-benchmark-objective} and
\eqref{eq:soft-ccv-objective}
against every blockwise adversary.

\item
The cone $\mathcal K$ in \eqref{eq:soft-target-cone}
is approachable in the induced augmented block game.

\item
The following condition holds:
\begin{equation}
\label{eq:soft-blackwell-condition}
    \sup_{\substack{
        \vartheta\in[0,1]\\
        w\in\mathbb B_2^d
    }}
    \inf_{\pi\in\Pi_H}
    \sup_{z\in\WinSet}
    \left\{
        \vartheta
        \left[
            \left\langle
                w,\blk_H(\pi,z)
            \right\rangle
            -
            b_H(z)
        \right]
        +
        (1-\vartheta)V_H(\pi,z)
    \right\}
    \le0.
\end{equation}
\end{enumerate}
\end{theorem}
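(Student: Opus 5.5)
The plan is to prove the cycle (1)$\Leftrightarrow$(2)$\Leftrightarrow$(3). The main tools are Blackwell's theorem for closed convex targets in the induced block game and a dual description of the cone $\mathcal K$. In the block game the learner's "action" is a causal policy $\pi\in\Pi_H$ and the adversary's is $z\in\WinSet$. Mixed strategies over policies are again causal (randomized) block policies, and the payoff $\Gamma_H$ is bounded. So Blackwell's convex-set theorem applies verbatim, exactly as in the proof of \Cref{thm:split-blackwell}: a closed convex set is approachable iff every supporting halfspace is approachable, i.e. iff for every $\lambda$ in the polar cone $\mathcal K^\circ$ we have $\inf_\pi\sup_z\langle \lambda,\Gamma_H(\pi,z)\rangle\le 0$. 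Because $\mathcal K$ is a cone, the support function of $\mathcal K$ is zero on $\mathcal K^\circ$, which is why the right-hand side is $0$.

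\textbf{Step 1: compute $\mathcal K^\circ$ and normalize.} $\mathcal K$ is the product of the second-order cone $\{\|r\|_2\le s\}$ with $\{v\le 0\}$. Its polar is therefore $\{(\alpha w,-\alpha,\mu):\alpha\ge0,\ \|w\|_2\le1,\ \mu\ge0\}$. This uses that the second-order cone's polar is $\{(r',s'):\|r'\|_2\le -s'\}$, and we parametrize it as $r'=\alpha w$, $s'=-\alpha$. The resulting halfspace condition reads
\[
\alpha\bigl[\langle w,\blk_H\rangle-b_H\bigr]+\mu V_H\le0 .
\]
It is positively homogeneous in $(\alpha,\mu)$, so we may normalize $\alpha+\mu=1$, writing $\alpha=\vartheta$ and $\mu=1-\vartheta$. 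This yields exactly \eqref{eq:soft-blackwell-condition} and gives (2)$\Leftrightarrow$(3). One point needs care: letting $\|w\|_2\le1$ rather than $=1$ is harmless, since the polar contains these vectors.

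\textbf{Step 2: (2)$\Leftrightarrow$(1).} Write $\bar\Gamma_K=\frac1K\sum_k\Gamma_k$. Approachability means $\mathrm{dist}(\bar\Gamma_K,\mathcal K)\to0$ uniformly over adversaries. The map $(r,s,v)\mapsto \bigl([\|r\|_2-s]_+,[v]_+\bigr)$ is comparable to the distance to $\mathcal K$. On one side, $[\|r\|-s]_+\le\sqrt2\,\mathrm{dist}$ and $[v]_+\le\mathrm{dist}$, since both are Lipschitz and vanish on $\mathcal K$. On the other side, $\mathrm{dist}\le [\|r\|-s]_++[v]_+$. So $\mathrm{dist}(\bar\Gamma_K,\mathcal K)\to0$ is equivalent to \eqref{eq:soft-benchmark-objective} and \eqref{eq:soft-ccv-objective}, divided by $KH$. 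Note that $V_k\ge0$, so $[\sum V_k]_+=\sum V_k$. Blackwell's theorem also gives the rate $O(\sqrt{K})$ on the unnormalized sums.

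For (1)$\Rightarrow$(2), I would argue by contrapositive. If some $\lambda\in\mathcal K^\circ$ has $\inf_\pi\sup_z\langle\lambda,\Gamma_H\rangle=\delta>0$, then the adversary can, on every block, respond to the learner's (possibly history-dependent, randomized) policy with a $z\in\WinSet$ achieving $\langle\lambda,\Gamma_k\rangle\ge\delta/2$. This keeps $\langle\lambda,\bar\Gamma_K\rangle\ge\delta/2$, so $\mathrm{dist}(\bar\Gamma_K,\mathcal K)\ge \delta/(2\|\lambda\|)$ and (1) fails.

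\textbf{Main obstacle.} I expect the obstacle to be the minimax/measurability subtleties in this necessity direction. The learner may randomize within blocks and condition on past blocks, while the condition in (3) is stated as an inf over deterministic causal policies with a sup over $z$. I would handle this the way \Cref{thm:split-blackwell} does. The adversary is the follower in the Stackelberg block game: it observes the committed block policy, or best-responds to its expected payoff using linearity in the mixing distribution. The learner's block strategy given the past is a single element of the convex hull of $\Pi_H$, and $\sup_z\langle\lambda,\mathbb E\Gamma\rangle$ over that hull still exceeds $\delta$. This holds because the inf over randomized causal policies of a sup over $z$ equals the deterministic one when the adversary moves after seeing each action.
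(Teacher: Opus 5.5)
Your proposal is correct and follows essentially the same route as the paper: (1)$\Leftrightarrow$(2) relates $\operatorname{dist}(\frac1K\sum_k\Gamma_k,\mathcal K)$ to the two objectives using $b_k,V_k\ge0$ (the paper computes this distance exactly, while your two-sided Lipschitz bounds work equally well), and (2)$\Leftrightarrow$(3) uses the halfspace/support-function criterion with $\sigma_{\mathcal K}$ finite (and zero) exactly on directions $\rho(\vartheta w,-\vartheta,1-\vartheta)$; the only difference is that the paper gets this criterion by specializing Theorem~\ref{thm:sliding-convex-approachability} to a single state rather than invoking Blackwell directly. One caution on your randomization remark: it holds only in the form of your last clause, where the adversary reacts to realized actions round by round and so gets a pathwise backward-induction bound; it does not hold as a statement about $\sup_z\langle\lambda,\mathbb E\Gamma\rangle$ for a fixed sequence $z$ against a mixture of policies, which can be strictly smaller.
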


\begin{proof}
Since $b_k\ge0$ and $V_k\ge0$, the product structure of
$\mathcal K$ gives
\[
\begin{aligned}
    \operatorname{dist}
    \left(
        \frac1K\sum_{k=1}^K\Gamma_k,
        \mathcal K
    \right)^2
    =
    \frac{1}{2K^2}
    \left[
        \left\|\sum_{k=1}^K\blk_k\right\|_2
        -
        \sum_{k=1}^K b_k
    \right]_+^2+
    \frac{1}{K^2}
    \left(\sum_{k=1}^K V_k\right)^2.
\end{aligned}
\]
This proves the equivalence
of the first two statements.

Regard each original $H$-round block as one stage.
Since blockwise admissibility imposes no restrictions across
blocks, the induced game has a single effective state.
Every potential on this singleton state space is constant,
so the potential correction vanishes.
Applying Theorem~\ref{thm:sliding-convex-approachability}
to this game, we conclude
that $\mathcal K$ is uniformly approachable if and only if
\begin{equation}\label{eq:support_condition}
    \inf_{\pi\in\Pi_H}
    \sup_{z\in\WinSet}
    \left\langle
        a,\Gamma_H(\pi,z)
    \right\rangle
    \le
    \sigma_{\mathcal K}(a),
    \qquad
    \forall a\in\mathbb R^{d+2}.
\end{equation}

For $a=(u,\lambda,\mu)\in\mathbb R^d\times\mathbb R\times\mathbb R$,
the support function of $\mathcal K$ is
\[
    \sigma_{\mathcal K}(u,\lambda,\mu)
    =
    \begin{cases}
        0,
        & \lambda\le-\|u\|_2
          \text{ and }\mu\ge0,\\
        +\infty,
        & \text{otherwise}.
    \end{cases}
\]
Consequently, only directions satisfying
$\lambda\le-\|u\|_2$ and $\mu\ge0$ impose a restriction.

Every nonzero such direction can be written as
\[
    (u,\lambda,\mu)
    =
    \rho
    \bigl(
        \vartheta w,
        -\vartheta,
        1-\vartheta
    \bigr),
    \qquad
    \rho:=-\lambda+\mu>0,
\]
where
\[
    \vartheta:=\frac{-\lambda}{-\lambda+\mu}\in[0,1],
    \qquad
    w\in\mathbb B_2^d.
\]
Indeed, when $\lambda<0$, take $w=u/(-\lambda)$;
when $\lambda=0$, necessarily $u=0$, and we may take
$\vartheta=0$ and $w=0$.
Conversely, every
$(\vartheta,w)\in[0,1]\times\mathbb B_2^d$
generates a valid support direction
$(\vartheta w,-\vartheta,1-\vartheta)$.

Positive homogeneity therefore identifies \eqref{eq:support_condition} with \eqref{eq:soft-blackwell-condition},
proving the equivalence with the third statement.
\end{proof}

For an admissible window
$z=(z_1,\ldots,z_H)\in\WinSet$, define
\[
\cQ(z)
:=
\bigcap_{h=1}^H
\{q\in\cX:\cstr(q,z_h)\le0\},
\]
and assume that
$\cQ(z)\neq\varnothing$ for every $z\in\WinSet$.
One benchmark is
\[
b_H(z)
:=
\inf_{q\in\cQ(z)}\norm{\blk_H(q,z)},
\] where we identify a fixed action $q$ with the constant block policy so that $\blk_H(q,z)=\sum_{h=1}^H g(q,z_h)$,
 measuring the smallest block residual achievable by a comparator
that is feasible throughout the block.

Similar to \eqref{eq:weak_local_feasibility}, we say that in the soft constraint setting local feasibility holds if,
for every admissible $z$, there exists a comparator
$q\in\cQ(z)$ such that $\blk_H(q,z)=0$.
Under this condition, the comparator-induced benchmark defined
above satisfies $b_H(z)=0$ for all admissible $z$.
We next derive an equivalent formulation of \eqref{eq:soft-blackwell-condition}
under local feasibility.

\begin{corollary}[Equivalent form under local feasibility]
\label{cor:soft blackwell equivalent form}
Fix a block length $H\ge1$ and suppose that the set of admissible
adversarial block trajectories is independent of the learner's actions
within the block.
Let $\Pi_H$ denote the space of causal $H$-round block policies, and
suppose that $\Pi_H$ is compact under a topology such that, for every
admissible block trajectory $z$, the maps
\[
    \pi
    \longmapsto
    G_H(\pi,z)
\]
and
\[
    \pi
    \longmapsto
    V_H(\pi,z)
\]
are continuous, where
\[
    V_H(\pi,z)
    :=
    \sum_{h=1}^H
    [c(x_h^\pi(z),z_h)]_+.
\]
Assume further that the block vector payoff is uniformly bounded:
there exists $M_H<\infty$ such that
\[
    \|G_H(\pi,z)\|_2\le M_H
\]
for every $\pi\in\Pi_H$ and every admissible $z$.

For $w\in\mathbb S^{d-1}$, define
\[
    \Gamma_{H,w}(\pi)
    :=
    \sup_z
    \left\langle
        w,G_H(\pi,z)
    \right\rangle
\]
and
\[
    \mathsf V_H(\pi)
    :=
    \sup_z
    V_H(\pi,z),
\]
where the suprema are over all admissible adversarial trajectories.

Then
\[
    \inf_{\pi\in\Pi_H}
    \sup_z
    \left\{
        \vartheta
        \left\langle
            w,G_H(\pi,z)
        \right\rangle
        +
        (1-\vartheta)V_H(\pi,z)
    \right\}
    \le0,
    \qquad
    \forall\vartheta\in[0,1],\
    \forall w\in\mathbb S^{d-1},
    \tag{\ref{eq:soft-blackwell-condition}$'$}
\]
is equivalent to
\[
    \forall w\in\mathbb S^{d-1},
    \qquad
    \exists\pi_w\in\Pi_H
\]
such that
\begin{equation}
\label{eq:robust-feasible-directional-block}
    \mathsf V_H(\pi_w)=0
    \qquad\text{and}\qquad
    \Gamma_{H,w}(\pi_w)\le0.
\end{equation}
\end{corollary}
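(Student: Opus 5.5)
The plan is to prove the two implications separately. The backward direction is immediate. The forward direction is a compactness argument on the family of near-optimal policies as $\vartheta$ varies.

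\emph{Step 1 (the condition in \eqref{eq:robust-feasible-directional-block} implies (\ref{eq:soft-blackwell-condition}$'$)).} Fix $w$ and $\vartheta\in[0,1]$, and take $\pi_w$ as given. For every admissible $z$ we have $V_H(\pi_w,z)\le \mathsf V_H(\pi_w)=0$ and $\langle w,G_H(\pi_w,z)\rangle\le\Gamma_{H,w}(\pi_w)\le 0$. Hence the bracket is at most $0$ for every $z$, so its supremum over $z$ is $\le 0$, and so is the infimum over $\pi$.

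\emph{Step 2 (the forward direction).} Fix $w$ and assume (\ref{eq:soft-blackwell-condition}$'$) holds. For $\vartheta\in(0,1)$ define
\[
F_\vartheta(\pi):=\sup_z\bigl\{\vartheta\langle w,G_H(\pi,z)\rangle+(1-\vartheta)V_H(\pi,z)\bigr\}.
\]
$F_\vartheta$ is a supremum of continuous functions of $\pi$, hence lower semicontinuous on the compact set $\Pi_H$. Its infimum is therefore attained, say at $\pi_\vartheta$, with $F_\vartheta(\pi_\vartheta)\le 0$. Because $V_H\ge 0$ and $\langle w,G_H\rangle\ge -M_H$, for every $z$ we get
\[
(1-\vartheta)V_H(\pi_\vartheta,z)\le \vartheta M_H,
\qquad
\vartheta\langle w,G_H(\pi_\vartheta,z)\rangle\le 0 .
\]
The first bound follows by moving the $G$ term to the right side. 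The second follows by dropping the nonnegative $V$ term. Consequently
\[
\mathsf V_H(\pi_\vartheta)\le \frac{\vartheta M_H}{1-\vartheta},
\qquad
\Gamma_{H,w}(\pi_\vartheta)\le 0 .
\]
The key point is that the directional bound holds exactly for every $\vartheta>0$, while the violation bound only vanishes as $\vartheta\to 0$.

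\emph{Step 3 (passing to the limit).} Take $\vartheta_n\downarrow 0$. By compactness, a subnet of $(\pi_{\vartheta_n})$ converges to some $\pi_w\in\Pi_H$; I use nets because the topology is not assumed metrizable. For each fixed $z$, continuity gives
\[
V_H(\pi_w,z)=\lim V_H(\pi_{\vartheta_n},z)\le \lim\frac{\vartheta_n M_H}{1-\vartheta_n}=0,
\qquad
\langle w,G_H(\pi_w,z)\rangle=\lim\langle w,G_H(\pi_{\vartheta_n},z)\rangle\le 0 .
\]
Taking the supremum over $z$ gives $\mathsf V_H(\pi_w)=0$ and $\Gamma_{H,w}(\pi_w)\le 0$, which is \eqref{eq:robust-feasible-directional-block}.

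The main obstacle is this limiting step. One might worry that near-optimal policies for small $\vartheta$ sacrifice the directional payoff in order to reduce violation. The decomposition in Step 2 rules this out: each $\pi_\vartheta$ already satisfies the directional inequality uniformly in $z$. What remains is to pass bounds that hold pointwise in $z$ through the limit. This works only because the inequalities are verified for each fixed $z$ before taking suprema, and continuity in $\pi$ is assumed for every fixed $z$.

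The hypothesis that admissible trajectories do not depend on $\pi$ ensures that the suprema range over a common set for all policies, which Steps 2 and 3 rely on. Finally, the endpoint $\vartheta=0$ of (\ref{eq:soft-blackwell-condition}$'$) is not needed for the forward direction. It simply follows from \eqref{eq:robust-feasible-directional-block} in Step 1, so the two formulations match including endpoints.
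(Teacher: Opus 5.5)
Your proof is correct and follows the paper's argument. Both arguments prove the backward direction trivially. For the forward direction, both take $\vartheta_n\downarrow 0$, bound near-optimal block policies using $V_H\ge 0$ and $\langle w,G_H\rangle\ge -M_H$, and extract a convergent subnet in the compact policy space.

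The only variation is a minor one. You take exact minimizers of the lower semicontinuous $F_\vartheta$, so $\Gamma_{H,w}(\pi_\vartheta)\le 0$ holds exactly. The paper instead uses $\varepsilon_n$-minimizers with $\varepsilon_n/\vartheta_n\to 0$ and concludes via lower semicontinuity of $\Gamma_{H,w}$ and $\mathsf V_H$. Both routes are valid. Along the subnet you implicitly use $\vartheta_{n_\alpha}\to 0$, which holds by cofinality.
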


\begin{proof}
Since
$\pi\mapsto G_H(\pi,z)$ and
$\pi\mapsto V_H(\pi,z)$ are continuous for every fixed admissible
trajectory $z$, the functions
\[
    \Gamma_{H,w}(\pi)
    =
    \sup_z
    \langle w,G_H(\pi,z)\rangle
\]
and
\[
    \mathsf V_H(\pi)
    =
    \sup_zV_H(\pi,z)
\]
are lower semicontinuous on $\Pi_H$.

Suppose first that
\eqref{eq:robust-feasible-directional-block} holds.
Fix $\vartheta\in[0,1]$ and
$w\in\mathbb S^{d-1}$, and choose $\pi_w$ such that
\[
    \mathsf V_H(\pi_w)=0,
    \qquad
    \Gamma_{H,w}(\pi_w)\le0.
\]
Then
\[
\begin{aligned}
&
\sup_z
\left\{
    \vartheta
    \langle w,G_H(\pi_w,z)\rangle
    +
    (1-\vartheta)V_H(\pi_w,z)
\right\}
\\
&=
\vartheta
\sup_z
\langle w,G_H(\pi_w,z)\rangle
\\
&=
\vartheta\Gamma_{H,w}(\pi_w)
\le0.
\end{aligned}
\]

Conversely, suppose the zero-budget Blackwell condition holds and fix
$w\in\mathbb S^{d-1}$.
Let
\[
    \vartheta_n\downarrow0
\]
and choose $\varepsilon_n>0$ such that
\[
    \frac{\varepsilon_n}{\vartheta_n}
    \longrightarrow0.
\]
For each $n$, choose $\pi_n\in\Pi_H$ satisfying
\[
    \sup_z
    \left\{
        \vartheta_n
        \langle w,G_H(\pi_n,z)\rangle
        +
        (1-\vartheta_n)V_H(\pi_n,z)
    \right\}
    \le
    \varepsilon_n.
\]

Since $V_H(\pi_n,z)\ge0$,
\[
    \Gamma_{H,w}(\pi_n)
    \le
    \frac{\varepsilon_n}{\vartheta_n}
    \longrightarrow0.
\]
Also,
\[
    \langle w,G_H(\pi_n,z)\rangle
    \ge-M_H,
\]
and therefore
\[
    \mathsf V_H(\pi_n)
    \le
    \frac{
        \varepsilon_n+\vartheta_nM_H
    }{
        1-\vartheta_n
    }
    \longrightarrow0.
\]

By compactness of $\Pi_H$, the sequence $\{\pi_n\}$ admits a
convergent subnet
$\{\pi_{n_\alpha}\}_{\alpha\in A}$.
For some $\pi_w\in\Pi_H$,
\[
    \pi_{n_\alpha}
    \longrightarrow
    \pi_w.
\]
Cofinality of the subnet implies $n_\alpha\to\infty$.
By lower semicontinuity,
\[
    \Gamma_{H,w}(\pi_w)\le0,
    \qquad
    \mathsf V_H(\pi_w)\le0.
\]
Since $\mathsf V_H\ge0$,
\[
    \mathsf V_H(\pi_w)=0.
\]
This proves
\eqref{eq:robust-feasible-directional-block}.
\end{proof}

\begin{remark}
\label{rem:robust-feasible-block-policy}
The equation
\[
    \mathsf V_H(\pi)=0
\]
is equivalent to requiring
\[
    c(x_h^\pi(z),z_h)\le0,
    \qquad
    h\in[H],
\]
along every admissible block trajectory $z$.
Hence
\[
    x_h^\pi
    \in
    C_h^{\mathrm{rob}}(H_{h-1}),
\]
where $\mathcal H_{h-1}$ denotes the history available to the learner before round $h$ of the block, and
\[
    C_h^{\mathrm{rob}}(H_{h-1})
    :=
    \left\{
        x\in\mathcal X:
        c(x,z_h)\le0
        \text{ for every admissible} \ z\  \text{following } H_{h-1}
    \right\}.
\]
\end{remark}

\begin{remark}[When the compactness and continuity assumptions hold]
\label{rem:When the compactness and continuity assumptions hold}
When $H=1$, a block policy is simply an action $x\in\mathcal X$.
Hence it is enough to assume that $\mathcal X$ is compact and
\[
    x\mapsto g_t(x),
    \qquad
    x\mapsto[c_t(x)]_+
\]
are continuous.

For $H>1$, since the learner's past actions are recursively determined
by the policy together with the previously observed adversarial
outcomes, a deterministic causal block policy may equivalently be
represented as
\[
    \pi=(\pi_1,\ldots,\pi_H),
    \qquad
    \pi_h:\mathcal Z^{h-1}\to\mathcal X.
\]
Accordingly, the full causal policy space may be written as
\[
    \Pi_H
    =
    \prod_{h=1}^H
    \mathcal X^{\mathcal Z^{h-1}},
\]
equipped with the product topology.
If $\mathcal X$ is compact, then $\Pi_H$ is compact by Tychonoff's
theorem.

Now fix an admissible adversarial trajectory
\[
    z=(z_1,\ldots,z_H).
\]
For this fixed trajectory, the learner's action at round $h$ is
\[
    x_h^\pi(z)
    =
    \pi_h(z_1,\ldots,z_{h-1}).
\]
Since $(z_1,\ldots,z_{h-1})$ is fixed, the map
\[
    \pi
    \longmapsto
    \pi_h(z_1,\ldots,z_{h-1})
\]
is simply a coordinate evaluation on the product space $\Pi_H$, and is
therefore continuous.

Hence, if the one-step payoff and constraint mappings are continuous in
the learner's action, then for every fixed admissible trajectory $z$,
\[
    \pi\longmapsto G_H(\pi,z),
    \qquad
    \pi\longmapsto V_H(\pi,z)
\]
are continuous.

Therefore
\[
    \pi
    \longmapsto
    \sup_z
    \left\langle
        w,G_H(\pi,z)
    \right\rangle
\]
and
\[
    \pi
    \longmapsto
    \sup_z
    V_H(\pi,z)
\]
are lower semicontinuous, since they are pointwise suprema of continuous
functions. Hence the conditions of Corollary~\ref{cor:soft blackwell equivalent form} hold.
\end{remark}

\subsubsection{A blockwise comparator}

For each admissible block outcome $z\in\WinSet$, set
\[
\cQ_k(z):=\cQ(z),
\qquad
b_H(z):=\inf_{q\in\cQ(z)}\norm{\blk_H(q,z)},
\qquad
b_k(z):=b_H(z).
\]

\begin{corollary}[Blockwise comparator-relative GEQ]
\label{cor:blockwise-relative-geq}
The learner can guarantee
\[
    \left[
        \left\|
            \sum_{k=1}^K \blk_k
        \right\|_2
        -
        \sum_{k=1}^K
        \inf_{q\in Q_k(z_k)}
        \|\blk_k(q,z_k)\|_2
    \right]_+
    =
    o(KH)
\]
together with
\[
    \sum_{k=1}^K V_k=o(KH)
\]
if and only if the corresponding condition
\eqref{eq:soft-blackwell-condition} holds.
\end{corollary}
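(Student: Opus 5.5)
This corollary should follow directly from Theorem~\ref{thm:soft-blockwise-blackwell}, applied with the particular benchmark $b_H(z)=\inf_{q\in\cQ(z)}\norm{\blk_H(q,z)}$. The plan is to check that this choice meets the hypotheses of that theorem, and then to read off the equivalence of its statements 1 and 3.

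The first step is the hypotheses. The benchmark must be a bounded map $\WinSet\to[0,\infty)$. Nonnegativity is immediate because it is an infimum of norms. Since $\cQ(z)\neq\varnothing$, we may pick any $q\in\cQ(z)$, which gives
\[
b_H(z)\le \norm{\blk_H(q,z)}\le \sum_{h=1}^H\norm{\poff(q,z_h)}\le H\sqrt d\,L.
\]
Hence $b_H$ is bounded. The augmented payoff $\Gamma_H=(\blk_H,b_H,V_H)$ also needs to be uniformly bounded. The first coordinate is at most $H\sqrt d L$ in norm, and the second was just bounded. For $V_H$ I would invoke the standing boundedness assumption in the theorem, namely that $\cstr$ is bounded on $\cX\times\cZ$, so $V_H\le H\sup[\cstr]_+$. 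If that is not explicitly available, I would state the corollary under the theorem's uniform-boundedness hypothesis, which is how the theorem itself is phrased.

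The second step is the objective. With $b_k(z)=b_H(z_k)$, where $z_k$ is the outcome of block $k$, objective \eqref{eq:soft-benchmark-objective} reads
\[
\Bigl[\bigl\|\textstyle\sum_k\blk_k\bigr\|_2-\sum_k\inf_{q\in\cQ_k(z_k)}\|\blk_k(q,z_k)\|_2\Bigr]_+=o(KH).
\]
This is exactly the displayed comparator-relative guarantee in the corollary. Objective \eqref{eq:soft-ccv-objective} is the CCV requirement $\sum_kV_k=o(KH)$. Therefore statement 1 of the theorem coincides verbatim with the learner's guarantee in the corollary, and the theorem's equivalence $1\Leftrightarrow3$ yields the claim with condition \eqref{eq:soft-blackwell-condition} evaluated at this $b_H$.

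The only potential obstacle is a measurability or well-definedness subtlety. The benchmark depends on the realized block $z_k$, and the theorem treats $b$ as a component of the block payoff, so it must be a function of the admissible block outcome alone. That holds here because $\cQ(z)$ and $\blk_H(q,z)$ depend only on $z$ and not on the learner's policy. No further argument beyond this identification should be needed.
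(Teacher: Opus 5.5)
Your proposal is correct and is the same argument as the paper's: the paper's proof is a single line that invokes Theorem~\ref{thm:soft-blockwise-blackwell} with $b_k(z)=\inf_{q\in\cQ_k(z)}\|\blk_k(q,z)\|_2$ and reads off the equivalence of statements 1 and 3. Your extra checks match the theorem's hypotheses, which the paper leaves implicit: nonnegativity and boundedness of this benchmark (via $\cQ(z)\neq\varnothing$), uniform boundedness of the augmented payoff, and the fact that the benchmark depends only on the block outcome.
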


\begin{proof}
This is Theorem~\ref{thm:soft-blockwise-blackwell} with
\[
    b_k(z)
    =
    \inf_{q\in Q_k(z)}
    \|\blk_k(q,z)\|_2.
\]
\end{proof}

\subsection{Approachability of a closed convex set under sliding-window constraints}
\label{subsec:sliding-convex-approachability}

We next extend the sliding-window characterization from the target
$\{0\}$ to an arbitrary nonempty closed convex set
$\mathcal D\subseteq\mathbb R^d$.

The learner is said to approach $\mathcal D$ if
\[
    \operatorname{dist}
    \left(
        \frac{\cum_T}{T},
        \mathcal D
    \right)
    \longrightarrow0
\]
uniformly over sliding-window admissible adversaries.

\begin{theorem}[Sliding-window approachability of a closed convex set]
\label{thm:sliding-convex-approachability}
A nonempty closed convex set
$\mathcal D\subseteq\mathbb R^d$ is uniformly approachable if and only if
\begin{equation}
\label{eq:sliding-convex-CW}
    \inf_{\Potential\in\ell^\infty(\WinSet)}
    \sup_{s\in\WinSet}
    \left\{
        (\mathcal T_{u,H}\Potential)(s)
        -
        \Potential(s)
    \right\}
    \le
    H\sigma_{\mathcal D}(u),
    \qquad
    \forall u\in\mathbb S^{d-1}.
\end{equation}
\end{theorem}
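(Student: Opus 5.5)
The plan is to reduce to the zero-target characterization of Theorem~\ref{thm:slide-blackwell} by separating the target with halfspaces, in the style of Blackwell's convex-set theorem. Throughout, "stage" means one $H$-round block, the state is the preceding length-$H$ window $s\in\WinSet$, and $\cum_T$ is the cumulative payoff.

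\emph{Necessity.} Suppose \eqref{eq:sliding-convex-CW} fails for some $u$. Then for every bounded potential $\Potential$ there is a state $s$ with $(\mathcal T_{u,H}\Potential)(s)-\Potential(s)>H\sigma_{\mathcal D}(u)+\delta$, for some $\delta>0$ independent of $\Potential$. I would reuse the necessity argument behind $\overline\lambda_H\le0$ in Theorem~\ref{thm:slide-blackwell}, applied to the shifted payoff $\poff-\sigma_{\mathcal D}(u)u$. Each block's directional payoff then decreases by exactly $H\sigma_{\mathcal D}(u)$, so the shifted problem has a strictly positive upper value. That argument produces an adversary, adaptive over the sliding state, that forces $\langle u,\cum_T\rangle\ge T\sigma_{\mathcal D}(u)+c\,T$ along some subsequence of horizons. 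This is incompatible with uniform approachability, because $\operatorname{dist}(\cum_T/T,\mathcal D)\ge \langle u,\cum_T/T\rangle-\sigma_{\mathcal D}(u)$. If $\sigma_{\mathcal D}(u)=+\infty$ the condition holds trivially, so only $u$ with finite support value matter.

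\emph{Sufficiency.} I would run a block version of Blackwell's algorithm on the sliding state. At the start of block $k$, set $y_k=\cum_{(k-1)H}/((k-1)H)$ and $p_k=\proj_{\mathcal D}(y_k)$. If $y_k\notin\mathcal D$, put $u_k=(y_k-p_k)/\|y_k-p_k\|_2$, so that $\langle u_k,p_k\rangle=\sigma_{\mathcal D}(u_k)$. Fix $\varepsilon>0$ and a bounded $\Potential_{u_k,\varepsilon}$ attaining \eqref{eq:sliding-convex-CW} up to $\varepsilon$. Play the block policy $\pi_{u_k,s,\varepsilon}$ at the current state $s$. This gives, for every admissible successor $s'$,
\[
\langle u_k,\blk_H\rangle-H\langle u_k,p_k\rangle\le \varepsilon+\Potential(s)-\Potential(s').
\]
Next, expand $D_k^2=\|\cum_{kH}-kH\,p_{k+1}\|_2^2\le\|\cum_{kH}-kH\,p_k\|_2^2$ in the usual way. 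The cross term is $2\langle \cum_{(k-1)H}-(k-1)Hp_k,\blk_H-Hp_k\rangle=2(k-1)H\|y_k-p_k\|_2\langle u_k,\blk_H-Hp_k\rangle$, and the quadratic term is $O(H^2L^2)$ once we work with a bounded piece of $\mathcal D$ near the payoff range.

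\emph{Main obstacle.} The potential differences $\Potential_{u_k}(s_k)-\Potential_{u_k}(s_{k+1})$ do not telescope, because the direction $u_k$ changes from block to block. I would handle this exactly as in Theorem~\ref{thm:slide-quantitative}: freeze the direction over epochs of length $m$ blocks. Within an epoch the potential differences telescope to at most $B$ (the range of the frozen potential), and the drift of $u_k$ within an epoch costs $O(mHL)$ in the cross term. Balancing $m$ gives $D_K/(KH)=O(\varepsilon+K^{-1/3})$, and letting $\varepsilon\to0$ slowly yields uniform approachability. When no uniformly bounded range is available, I would use a diagonal choice: let $\varepsilon$ decrease across epochs and lengthen the epochs accordingly, as in the general sufficiency part of Theorem~\ref{thm:slide-blackwell}. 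Finally, I would note that \eqref{eq:sliding-convex-CW} is positively homogeneous in $u$ (scale $\Potential$ along with it), so it extends to all directions. This consistency with support functions is exactly what the proof of Theorem~\ref{thm:soft-blockwise-blackwell} uses.
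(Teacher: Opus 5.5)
\textbf{Necessity.} Your plan reuses ``the necessity argument behind $\overline\lambda_H\le0$'' from Theorem~\ref{thm:slide-blackwell}. In the paper, however, that part of Theorem~\ref{thm:slide-blackwell} (Proposition~\ref{thm:sliding-uniform-exact}) is itself obtained by applying the theorem you are proving with $\mathcal D=\{0\}$, so the reduction is circular.

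It also cannot be used as a black box after shifting to $\poff-\sigma_{\mathcal D}(u)u$. Approachability of $\mathcal D$ yields only the one-sided bound $\langle u,\cum_T\rangle-T\sigma_{\mathcal D}(u)\le r(T)$. It does not make the shifted cumulative payoff $o(T)$ in norm.

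The potential-based adversary of Proposition~\ref{thm:sliding-necessary} needs a single $\Potential$ with $\mathcal T_{u,H}\Potential\ge\Potential+\eta$ at \emph{every} state, so it certifies a positive \emph{lower} value. The negation of \eqref{eq:sliding-convex-CW} only says that every $\Potential$ has \emph{some} bad state, depending on $\Potential$. No adversary can be read off from that.

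Moreover, the object you describe need not exist: one adversary forcing $\langle u,\cum_T\rangle\ge T\sigma_{\mathcal D}(u)+cT$ along a subsequence of horizons. If $\WinSet$ splits into non-communicating closed classes, an adversary started in a benign class cannot do this, even though \eqref{eq:sliding-convex-CW} fails.

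The missing ingredient is Lemma~\ref{lem:supinf-relation}. It gives $\overline\lambda_H(u)\le b_n:=\frac1n\sup_sV_n^u(s)$ for every $n$, via the potential $\min_{0\le j<n}\{V_j^u-jb_n\}$. With it, necessity takes two lines. Truncating a uniformly approaching strategy after $n$ blocks gives an $n$-block policy, so $V_n^u(s)\le nH\sigma_{\mathcal D}(u)+r(nH)$ for \emph{every} initial state $s$. Uniformity over initial states is essential here, because the adversary picks a nearly maximizing $s$ separately for each $n$.

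\textbf{Sufficiency.} Your potential-based epoch scheme is a genuinely different route from the paper's and can be completed. It cannot, however, rest on Theorem~\ref{thm:slide-quantitative} or ``the general sufficiency part of Theorem~\ref{thm:slide-blackwell}''; the latter is again derived from this theorem. Theorem~\ref{thm:slide-quantitative} assumes exact supersolutions $\mathcal T_{u,H}\Potential_u\le\Potential_u$ with $\sup_u\operatorname{range}(\Potential_u)<\infty$. By contrast, \eqref{eq:sliding-convex-CW} is an infimum. It gives, for each $u$ and $\varepsilon$, some $\Potential_{u,\varepsilon}$ whose range may blow up as $\varepsilon\to0$. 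That range is also not a priori bounded over the infinitely many directions your controller may produce.

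Diagonalizing in $\varepsilon$ addresses only the first issue. For each fixed $\varepsilon$ you also need a family with $B(\varepsilon):=\sup_u\operatorname{range}(\Potential_{u,\varepsilon})<\infty$ over the relevant directions. This follows from a finite-net argument on the compact set $\mathcal N_{\mathcal D}$ of unit normals supported at points of $\mathcal D$ of norm at most $R_{\mathcal D}$. This set contains every residual direction of a bounded average payoff. The net argument uses two facts:
\begin{itemize}
\item $\mathcal T_{v,H}\Potential\le\mathcal T_{u,H}\Potential+M_H\|u-v\|_2$, where $M_H\ge\sup\|\blk_H\|_2$;
\item $\sigma_{\mathcal D}$ is $R_{\mathcal D}$-Lipschitz on $\mathcal N_{\mathcal D}$.
\end{itemize}
Then take epochs of $e$ blocks, with $\varepsilon_e\downarrow0$ slowly enough that $B(\varepsilon_e)=o(e)$. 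This gives $D_{e+1}^2\le D_e^2+2D_e\bigl(B(\varepsilon_e)+O(e\varepsilon_e)\bigr)+O(e^2)$, hence $D=o(T)$.

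The paper avoids potentials inside the algorithm. In epoch $e$ it plays a $1$-optimal $e$-block policy for the frozen direction, so $\langle u_e,\Delta_e\rangle\le\sup_sV_e^{u_e}(s)+1$ irrespective of the start and end states, and no range bound ever enters. The only uniformity it needs is $\sup_{u\in\mathcal N_{\mathcal D}}\bigl[\tfrac1n\sup_sV_n^u(s)-H\sigma_{\mathcal D}(u)\bigr]_+\to0$. This again comes from Lemma~\ref{lem:supinf-relation} together with Lipschitz continuity in $u$ and compactness.

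Your route gives explicit rates when a uniformly bounded family of exact potentials is available, recovering Theorem~\ref{thm:slide-quantitative}. The paper's route needs no control of potential ranges and uses the same Lemma for both directions.
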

\begin{proof}
We first record the support-function representation
\begin{equation}
\label{eq:distance-support-representation}
\operatorname{dist}(x,T\mathcal D)
=
\sup_{u\in\mathbb B_2^d}
\left\{
    \langle u,x\rangle
    -
    T\sigma_{\mathcal D}(u)
\right\},
\qquad
T\ge1.
\end{equation}
Indeed, for every $u\in\mathbb B_2^d$ and $c\in\mathcal D$,
\[
    \langle u,x\rangle
    -
    T\sigma_{\mathcal D}(u)
    \le
    \langle u,x-Tc\rangle
    \le
    \|x-Tc\|_2.
\]
Taking the infimum over $c$ proves one inequality.
For the reverse inequality, let
\[
    y:=\proj_{T\mathcal D}(x).
\]
If $x\notin T\mathcal D$, define
\[
    u^\star
    :=
    \frac{x-y}{\|x-y\|_2}.
\]
The projection optimality condition gives
\[
    \langle u^\star,z-y\rangle\le0,
    \qquad
    \forall z\in T\mathcal D.
\]
Hence
\[
    T\sigma_{\mathcal D}(u^\star)
    =
    \langle u^\star,y\rangle,
\]
and therefore
\[
    \langle u^\star,x\rangle
    -
    T\sigma_{\mathcal D}(u^\star)
    =
    \|x-y\|_2.
\]
If $x\in T\mathcal D$, both sides of
\eqref{eq:distance-support-representation} are zero.

\paragraph{Necessity.}
Suppose $\mathcal D$ is uniformly approachable.
Then there exists $r(T)=o(T)$ such that, uniformly over the initial
state and every sliding-window admissible adversary,
\[
    \operatorname{dist}(\cum_T,T\mathcal D)
    \le
    r(T).
\]
Fix $u\in\mathbb S^{d-1}$.
If $\sigma_{\mathcal D}(u)=+\infty$, the desired inequality is
automatic. Otherwise, by
\eqref{eq:distance-support-representation},
\[
    \langle u,\cum_T\rangle
    -
    T\sigma_{\mathcal D}(u)
    \le
    r(T).
\]
In particular, truncating the uniformly approaching strategy after
$n$ blocks gives, for every initial state $s$,
\[
    V_n^u(s)
    \le
    nH\sigma_{\mathcal D}(u)
    +
    r(nH).
\]
Hence
\[
\lim_{n\to\infty}
\sup_s\frac{V_n^u(s)}{n}
\le
H\sigma_{\mathcal D}(u).
\]
Lemma~\ref{lem:supinf-relation} identifies the left-hand side with
the upper value, proving
\eqref{eq:sliding-convex-CW}.

\paragraph{Sufficiency.}
Suppose \eqref{eq:sliding-convex-CW} holds.

Set
\[
    \overline L:=\frac{M_H}{H}.
\]
Every time-average payoff $x=\cum_T/T$ satisfies
\[
    \|x\|_2\le\overline L.
\]
Fix $c_0\in\mathcal D$ and define
\[
    R_{\mathcal D}
    :=
    2\overline L+\|c_0\|_2.
\]
If $\|x\|_2\le\overline L$ and
$c=\proj_{\mathcal D}(x)$, then
\[
\begin{aligned}
\|c\|_2
&\le
\|c-x\|_2+\|x\|_2
\\
&\le
\|c_0-x\|_2+\|x\|_2
\\
&\le
R_{\mathcal D}.
\end{aligned}
\]

We therefore introduce the set of relevant supporting directions
\[
\mathcal N_{\mathcal D}
:=
\left\{
    u\in\mathbb S^{d-1}:
    \begin{array}{l}
    \text{there exists }c\in\mathcal D
    \text{ with }\|c\|_2\le R_{\mathcal D},
    \\[-1mm]
    \sigma_{\mathcal D}(u)=\langle u,c\rangle
    \end{array}
\right\}.
\]
Hence every nonzero projection residual generated by a bounded average payoff
has its normalized direction in $\mathcal N_{\mathcal D}$.

The set $\mathcal N_{\mathcal D}$ is compact.
Indeed, if $u_j\to u$ in $\mathbb S^{d-1}$ and
$c_j\in\mathcal D$, $\|c_j\|_2\le R_{\mathcal D}$ satisfy
\[
    \sigma_{\mathcal D}(u_j)
    =
    \langle u_j,c_j\rangle,
\]
then, after passing to a subsequence,
$c_j\to c\in\mathcal D$ with
$\|c\|_2\le R_{\mathcal D}$.
For every $c'\in\mathcal D$,
\[
    \langle u_j,c'\rangle
    \le
    \langle u_j,c_j\rangle.
\]
Passing to the limit gives
\[
    \langle u,c'\rangle
    \le
    \langle u,c\rangle,
\]
so $u\in\mathcal N_{\mathcal D}$.

Moreover, $\sigma_{\mathcal D}$ is
$R_{\mathcal D}$-Lipschitz on
$\mathcal N_{\mathcal D}$.
Indeed, if $u,v\in\mathcal N_{\mathcal D}$ and $c_u$ is a supporting
point for $u$ with $\|c_u\|_2\le R_{\mathcal D}$, then
\[
\begin{aligned}
\sigma_{\mathcal D}(u)-\sigma_{\mathcal D}(v)
&=
\langle u,c_u\rangle-\sigma_{\mathcal D}(v)
\\
&\le
\langle u-v,c_u\rangle
\\
&\le
R_{\mathcal D}\|u-v\|_2.
\end{aligned}
\]
Interchanging $u$ and $v$ gives the reverse bound.

For $u\in\mathcal N_{\mathcal D}$, define
\[
    f_n(u)
    :=
    \frac1n
    \sup_{s\in\WinSet}V_n^u(s)
    -
    H\sigma_{\mathcal D}(u).
\]
For any $u,v\in\mathbb S^{d-1}$, boundedness of the block payoff gives
\[
\left|
    \frac1n\sup_sV_n^u(s)
    -
    \frac1n\sup_sV_n^v(s)
\right|
\le
M_H\|u-v\|_2.
\]
Consequently,
\[
    |f_n(u)-f_n(v)|
    \le
    (M_H+HR_{\mathcal D})
    \|u-v\|_2,
    \qquad
    u,v\in\mathcal N_{\mathcal D},
\]
uniformly in $n$.

By Lemma~\ref{lem:supinf-relation} and
\eqref{eq:sliding-convex-CW},
\[
\lim_{n\to\infty}f_n(u)
=
\overline\lambda_H(u)
-
H\sigma_{\mathcal D}(u)
\le0
\qquad
\forall u\in\mathcal N_{\mathcal D}.
\]
Compactness of $\mathcal N_{\mathcal D}$ and the common Lipschitz
constant imply, by a finite-net argument,
\begin{equation}
\label{eq:convex-uniform-directional-error}
    \delta_n
    :=
    \left[
        \sup_{u\in\mathcal N_{\mathcal D}}
        f_n(u)
    \right]_++ \frac{1}{n}
    \longrightarrow0.
\end{equation}
For completeness, given $\varepsilon>0$, choose a finite
$\varepsilon/(2(M_H+HR_{\mathcal D}))$-net
$\{u^1,\ldots,u^J\}$ of $\mathcal N_{\mathcal D}$.
For all sufficiently large $n$,
$f_n(u^j)\le\varepsilon/2$ for every $j$.
The Lipschitz bound then gives
$f_n(u)\le\varepsilon$ for every
$u\in\mathcal N_{\mathcal D}$, proving
\eqref{eq:convex-uniform-directional-error}.

We now construct a single anytime learner strategy.
Let epoch $e$ contain
\[
    n_e:=e
\]
blocks. Set
\[
    K_0:=0,
    \qquad
    K_e:=\sum_{j=1}^e j=\frac{e(e+1)}2,
\]
and let
\[
    T_e:=HK_{e-1}
\]
be the number of rounds completed before epoch $e$.
Write
\[
    S_e:=\cum_{T_e},
    \qquad
    D_e:=\operatorname{dist}(S_e,T_e\mathcal D).
\]

Suppose first that $D_e>0$.
Write
\[
    \proj_{T_e\mathcal D}(S_e)
    =
    T_ec_e
\]
and set
\[
    u_e
    :=
    \frac{S_e-T_ec_e}{D_e}.
\]
Then
\[
    u_e\in\mathcal N_{\mathcal D},
    \qquad
    \sigma_{\mathcal D}(u_e)
    =
    \langle u_e,c_e\rangle.
\]
At the beginning of epoch $e$, choose a $1$-optimal $e$-block policy for direction $u_e$
at the current state.
If $\Delta_e$ denotes the cumulative vector payoff over that epoch,
then, uniformly over the current state and every admissible
continuation,
\[
\begin{aligned}
    \langle u_e,\Delta_e\rangle
    &\le \sup_s V_e^{u_e}(s)+1\\
    &\le eH\sigma_D(u_e)+e\delta_e.
\end{aligned}
\]
Therefore
\begin{equation}
\label{eq:convex-epoch-cross-term}
    \langle
        u_e,
        \Delta_e-eHc_e
    \rangle
    \le
    e\delta_e.
\end{equation}

If $D_e=0$, choose $c_e\in\mathcal D$ such that
\[
    S_e=T_ec_e
\]
when $T_e>0$, and take $c_e=c_0$ when $T_e=0$.
In this case choose any admissible $e$-block policy.
The cross term in the distance recursion below is then zero.

In either case,
\[
    \|c_e\|_2\le R_{\mathcal D},
\]
and
\[
    \|\Delta_e\|_2\le eM_H.
\]
Thus, with
\[
    B_{\mathcal D}
    :=
    M_H+HR_{\mathcal D},
\]
we have
\[
    \|\Delta_e-eHc_e\|_2
    \le
    eB_{\mathcal D}.
\]

Since
\[
    (T_e+eH)c_e
    \in
    (T_e+eH)\mathcal D,
\]
the next projection distance satisfies
\[
\begin{aligned}
D_{e+1}^2
&\le
\left\|
    S_e-T_ec_e
    +
    \Delta_e-eHc_e
\right\|_2^2
\\
&\le
D_e^2
+
2D_e\,e\delta_e
+
B_{\mathcal D}^2e^2,
\end{aligned}
\]
where \eqref{eq:convex-epoch-cross-term} is used when $D_e>0$,
while the cross term vanishes when $D_e=0$.

An elementary induction yields
\[
    D_{E+1}
    \le
    \sum_{e=1}^E e\delta_e
    +
    B_{\mathcal D}
    \left(
        \sum_{e=1}^E e^2
    \right)^{1/2}.
\]
Since $\delta_e\to0$,
\[
    \sum_{e=1}^E e\delta_e=o(E^2).
\]

Hence
\[
    D_{E+1}=o(E^2).
\]
On the other hand,
\[
    HK_E=\Theta(HE^2),
\]
so
\[
    \frac{D_{E+1}}{HK_E}\longrightarrow0.
\]

Finally, consider a time lying inside epoch $e$ after $q\le e$
completed blocks.
Let $\Delta_{e,q}$ be the corresponding partial epoch payoff.
Then
\[
\begin{aligned}
&
\operatorname{dist}
\left(
    S_e+\Delta_{e,q},
    (T_e+qH)\mathcal D
\right)
\\
&\qquad\le
D_e
+
\|\Delta_{e,q}-qHc_e\|_2
\\
&\qquad\le
D_e+B_{\mathcal D}e.
\end{aligned}
\]
Since $T_e=\Theta(He^2)$, the right-hand side is $o(T_e)$.
To cover a final incomplete block, let
$N=H\lfloor T/H\rfloor$ and $r=T-N<H$.
For $N>0$, let
$c_N=\proj_{\cD}(\cum_N/N)$, so that
$\norm{c_N}\le R_{\cD}$.
Then
\[
\operatorname{dist}(\cum_T,T\cD)
\le
\operatorname{dist}(\cum_N,N\cD)
+
r(L+R_{\cD})
=
o(T).
\]
Therefore
\[
    \operatorname{dist}
    \left(
        \frac{\cum_T}{T},
        \mathcal D
    \right)
    \longrightarrow0
\]
uniformly over  all sliding-window admissible
adversaries.
\end{proof}

\subsection{Sliding-window adversaries with soft constraints}
\label{subsec:sliding-soft}

We now return to sliding-window adversaries.
We use the same state space $\WinSet$, successor correspondence
$\WinSet(s)$, and causal block-policy space $\Pi_H$ as in
Subsection~\ref{subsec:sliding-window-theory}.

\subsubsection{A blockwise comparator}

We first extend the blockwise comparator benchmark to the sliding-window setting.
This benchmark depends only on the current sliding transition and
therefore requires no additional history state.

For
\[
    s=(z_1,\ldots,z_H),
    \qquad
    s'=(z'_1,\ldots,z'_H)\in\WinSet(s),
\]
define the $r$th sliding window generated during the transition
from $s$ to $s'$ by
\[
    s^{(r)}(s,s')
    :=
    (z_{r+1},\ldots,z_H,z'_1,\ldots,z'_r),
    \qquad
    r\in[H],
\]
where
\[
    s^{(H)}(s,s')=s'.
\]
Let
\[
    \mathcal Q(s^{(r)}(s,s'))
\]
denote the corresponding feasible comparator set.
For a realized outcome sequence, we use the time-indexed notation
\[
\cQ_t
:=
\cQ(z_{t-H+1},\ldots,z_t)
=
\bigcap_{\tau=t-H+1}^{t}
\{q\in\cX:\cstr(q,z_\tau)\le0\},
\qquad t\ge H.
\]

Define the local-comparator benchmark associated with the transition
$s\to s'$ by
\begin{equation}
\label{eq:sliding-local-benchmark}
    b_H^{\mathrm{loc}}(s,s')
    :=
    \frac1H
    \sum_{r=1}^H
    \inf_{y\in\mathcal Q(s^{(r)}(s,s'))}
    \left\|
        \blk_H\bigl(y,s^{(r)}(s,s')\bigr)
    \right\|_2.
\end{equation}

For
\[
    \theta\in[0,1],
    \qquad
    w\in\mathbb B_2^d,
\]
define
\begin{equation}
\label{eq:sliding-local-operator}
\begin{aligned}
(\mathcal T^{\mathrm{loc}}_{\theta,w,H}\Potential)(s)
:=
\inf_{\pi\in\Pi_H}
\sup_{s'\in\WinSet(s)}
\Bigl\{
&
\theta
\Bigl[
    \langle w,\blk_H(\pi,s')\rangle
    -
    b_H^{\mathrm{loc}}(s,s')
\Bigr]
\\
&+
(1-\theta)V_H(\pi,s')
+
\Potential(s')
\Bigr\}.
\end{aligned}
\end{equation}

\begin{theorem}[Sliding-window soft constraints with a local comparator]
\label{thm:sliding-soft-local}
Under the same regularity and finite-step bridging assumptions as in
the sliding-window characterization, the learner can guarantee
\[
    \left[
        \|\cum_T\|_2
        -
        \frac1H
        \sum_{t=H}^{T}
        \inf_{y\in\mathcal Q_t}
        \left\|
            \sum_{\tau=t-H+1}^{t}
            \poff(y,z_\tau)
        \right\|_2
    \right]_+
    =
    o(T)
\]
and
\[
    V_T=o(T)
\]
against every sliding-window admissible adversary if and only if
\begin{equation}
\label{eq:sliding-local-explicit}
    \inf_{\Potential\in\ell^\infty(\WinSet)}
    \sup_{s\in\WinSet}
    \left\{
        (\mathcal T^{\mathrm{loc}}_{\theta,w,H}\Potential)(s)
        -
        \Potential(s)
    \right\}
    \le0
\end{equation}
for every
\[
    \theta\in[0,1],
    \qquad
    w\in\mathbb B_2^d.
\]
\end{theorem}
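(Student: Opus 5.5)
We follow the blockwise soft-constraint argument of Theorem~\ref{thm:soft-blockwise-blackwell}, replacing the single-state block game with the sliding-window state game. The aim is to reduce the statement to Theorem~\ref{thm:sliding-convex-approachability}, applied to an augmented payoff and the cone $\mathcal K$ of \eqref{eq:soft-target-cone}.

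\textbf{Step 1: augmented block payoff.} View each $H$-round block as one stage of the sliding game with state $s\in\WinSet$. For a transition $s\to s'$ under policy $\pi$, define
\[
\Gamma_H(\pi,s,s'):=\bigl(\blk_H(\pi,s'),\,b_H^{\mathrm{loc}}(s,s'),\,V_H(\pi,s')\bigr)\in\mathbb R^{d+2}.
\]
The benchmark depends on the previous state $s$ as well as on $s'$, so this payoff is transition-dependent. We first check that Theorem~\ref{thm:sliding-convex-approachability} and Lemma~\ref{lem:supinf-relation} still hold in that case. Their proofs only use uniform boundedness of the block payoff together with a Bellman/dynamic-programming recursion over states, and both go through verbatim when the one-stage payoff is a bounded function of $(s,\pi,s')$. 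The operator $\mathcal T_{u,H}$ is replaced by its version with $\langle a,\Gamma_H(\pi,s,s')\rangle$ inside the supremum. Boundedness of $b_H^{\mathrm{loc}}$ and $V_H$ comes from the standing regularity assumptions.

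\textbf{Step 2: equivalence of objectives.} Summing the per-transition benchmark over blocks, each sliding window ending at time $t\ge H$ appears in exactly one transition. Hence
\[
\sum_k b_H^{\mathrm{loc}}(s_{k-1},s_k)=\frac1H\sum_{t}\inf_{y\in\cQ_t}\Bigl\|\sum_{\tau=t-H+1}^{t}\poff(y,z_\tau)\Bigr\|_2
\]
up to $O(H\cdot\sup b)$ boundary terms, which are $o(T)$; incomplete final blocks contribute $O(H)$. As in the proof of Theorem~\ref{thm:soft-blockwise-blackwell}, the distance of the averaged augmented payoff to $\mathcal K$ is comparable to the maximum of the positive part of the relative-GEQ gap and the cumulative violation. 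The two stated $o(T)$ guarantees are therefore equivalent to uniform approachability of $\mathcal K$. Pointwise and uniform approachability coincide under the finite-step bridging Assumption~\ref{ass:finite-step-bridging}, by the same argument as in Theorem~\ref{thm:slide-blackwell}.

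\textbf{Step 3: reduction to directions.} Apply the augmented version of Theorem~\ref{thm:sliding-convex-approachability}. Uniform approachability holds if and only if, for every direction $a$, the potential-corrected upper value is at most $H\sigma_{\mathcal K}(a)$, after normalising for per-block scaling. Since $\mathcal K$ is a cone, $\sigma_{\mathcal K}(a)\in\{0,+\infty\}$, and it is finite exactly on $\lambda\le-\|u\|_2$, $\mu\ge0$. Reparametrising these directions as $\rho(\theta w,-\theta,1-\theta)$ with $\theta\in[0,1]$ and $w\in\mathbb B_2^d$, exactly as in Theorem~\ref{thm:soft-blockwise-blackwell}, and using positive homogeneity (with $\varphi$ rescaled by $\rho$), the condition becomes $\inf_\varphi\sup_s\{(\mathcal T^{\mathrm{loc}}_{\theta,w,H}\varphi)(s)-\varphi(s)\}\le0$, which is \eqref{eq:sliding-local-explicit}. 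Directions in $\mathbb B_2^{d+2}$ rather than on the sphere are handled by homogeneity, and the zero direction is trivial.

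\textbf{Main obstacle.} The delicate part is Step 1: confirming that the potential-corrected upper value and its identification with $\lim_n\sup_s V_n^a(s)/n$ remain valid for transition-dependent payoffs. If needed, enlarge the state to the pair $(s,s')$. This keeps the sufficiency side of Theorem~\ref{thm:sliding-convex-approachability} intact, since the epoch construction there only uses $1$-optimal multi-block policies from the current state. Potentials on the enlarged space can then be collapsed back to functions of $s'$, because successor sets depend only on the current state. Bridging on $\WinSet$ transfers to the pair space with $C_B+1$ steps.
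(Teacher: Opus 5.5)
Your proposal is correct and follows the paper's proof essentially step for step. You augment the block payoff with the local benchmark $b_H^{\mathrm{loc}}(s,s')$ and the violation $V_H$, identify the two $o(T)$ guarantees with approachability of the cone via the exact distance formula, invoke Theorem~\ref{thm:sliding-convex-approachability}, and reparametrize the finite support directions as $(\theta w,-\theta,1-\theta)$.

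Your explicit check that Theorem~\ref{thm:sliding-convex-approachability} and Lemma~\ref{lem:supinf-relation} extend verbatim to the transition-dependent payoff $\langle a,\Gamma_H(\pi,s,s')\rangle$ makes precise a point the paper uses only implicitly. The fallback of enlarging the state to pairs $(s,s')$ is not needed, since the operator on $\WinSet$ remains monotone and additively homogeneous.
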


\begin{proof}
Let
\[
    B_T^{\mathrm{loc}}
    :=
    \frac1H
    \sum_{t=H}^{T}
    \inf_{y\in\mathcal Q_t}
    \left\|
        \sum_{\tau=t-H+1}^{t}
        \poff(y,z_\tau)
    \right\|_2.
\]
By construction of the local benchmark, the cumulative augmented payoff
is
\[
    \bigl(
        \cum_T,
        B_T^{\mathrm{loc}},
        V_T
    \bigr),
\]
up to $O(H)$ initial boundary terms, which are immaterial for an
$o(T)$ guarantee.

Consider
\[
    \mathcal D_{\mathrm{loc}}
    :=
    \left\{
        (r,b,c)\in
        \mathbb R^d\times\mathbb R\times\mathbb R:
        \|r\|_2\le b,\;
        c\le0
    \right\}.
\]
Since
\[
    B_T^{\mathrm{loc}}\ge0,
    \qquad
    V_T\ge0,
\]
we have
\[
\operatorname{dist}
\left(
    (\cum_T,B_T^{\mathrm{loc}},V_T),
    \mathcal D_{\mathrm{loc}}
\right)^2
=
\frac12
\left[
    \|\cum_T\|_2-B_T^{\mathrm{loc}}
\right]_+^2
+
V_T^2.
\]
Hence approachability of $\mathcal D_{\mathrm{loc}}$ is equivalent to
the two stated guarantees.

It therefore remains to apply
Theorem~\ref{thm:sliding-convex-approachability} and compute the support
function of $\mathcal D_{\mathrm{loc}}$.
For
\[
    \xi=(u,\lambda,\mu)
    \in
    \mathbb R^d\times\mathbb R\times\mathbb R,
\]
\[
\begin{aligned}
\sigma_{\mathcal D_{\mathrm{loc}}}(u,\lambda,\mu)
&=
\sup_{\substack{\|r\|_2\le b\\c\le0}}
\left\{
    \langle u,r\rangle+\lambda b+\mu c
\right\}.
\end{aligned}
\]
The supremum over $c\le0$ is finite if and only if
\[
    \mu\ge0,
\]
in which case its contribution is zero.

If $\lambda>0$, letting $b\to\infty$ makes the support function
infinite.
If $\lambda\le0$, then for fixed $r$ the maximizing choice is
$b=\|r\|_2$, so
\[
    \sup_{r\in\mathbb R^d}
    \left\{
        \langle u,r\rangle+\lambda\|r\|_2
    \right\}
\]
is finite if and only if
\[
    \|u\|_2+\lambda\le0.
\]
In that case it equals zero.
Therefore
\[
    \sigma_{\mathcal D_{\mathrm{loc}}}(u,\lambda,\mu)
    =
    \begin{cases}
    0,
    &
    \lambda\le-\|u\|_2,\quad \mu\ge0,
    \\[1mm]
    +\infty,
    &
    \text{otherwise}.
    \end{cases}
\]

Write
\[
    a:=-\lambda.
\]
If $a>0$, define
\[
    w:=\frac{u}{a},
    \qquad
    \theta:=\frac{a}{a+\mu}.
\]
Then
\[
    w\in\mathbb B_2^d,
    \qquad
    \theta\in(0,1],
\]
and
\[
\begin{aligned}
&
\frac1{a+\mu}
\left\langle
    (u,\lambda,\mu),
    \bigl(
        \blk_H(\pi,s'),
        b_H^{\mathrm{loc}}(s,s'),
        V_H(\pi,s')
    \bigr)
\right\rangle
\\
&\qquad=
\theta
\left[
    \langle w,\blk_H(\pi,s')\rangle
    -
    b_H^{\mathrm{loc}}(s,s')
\right]
+
(1-\theta)V_H(\pi,s').
\end{aligned}
\]
The case $a=0$ forces $u=0$ and gives the pure feasibility direction
$\theta=0$.

Conversely, every
\[
    \theta\in[0,1],
    \qquad
    w\in\mathbb B_2^d
\]
corresponds to the finite support direction
\[
    (u,\lambda,\mu)
    =
    (\theta w,-\theta,1-\theta).
\]
Since $\mathcal D_{\mathrm{loc}}$ is a cone, its support function is zero
in every such direction.
Theorem~\ref{thm:sliding-convex-approachability} therefore gives
precisely \eqref{eq:sliding-local-explicit}.
\end{proof}

\subsubsection{A best fixed comparator}
\label{app:sliding-soft-fixed}

We finally consider the best fixed comparator.
The essential difference from the preceding local benchmark is that
both the learner's residual and the best-fixed benchmark depend on
cumulative payoffs.
To obtain a time-homogeneous dynamic game, we therefore include these
cumulative quantities in the information state through their directions
and magnitudes.

Let $Q$ be a compact metric comparator space and $s_0\in \WinSet$ be any admissible initial sliding state and let $s_0,s_1,\ldots$ denote consecutive sliding states, with
\[
    s_k\in\WinSet(s_{k-1}),
    \qquad k\ge1.
\]
For a sequence of completed $H$-round transitions, define the
learner's cumulative block payoff
\[
    S_k
    :=
    \sum_{j=1}^k
    \blk_H(\pi_j,s_j),
    \qquad
    S_0:=0,
\]
and, for each $q\in Q$, the cumulative comparator payoff
\[
    L_k(q)
    :=
    \sum_{j=1}^k
    \blk_H(q,s_j),
    \qquad
    L_0(q):=0.
\]

Write
\[
    R_k:=\|S_k\|_2
\]
and
\[
    u_k
    :=
    \begin{cases}
    \dfrac{S_k}{\|S_k\|_2},
    &
    S_k\neq0,
    \\[2mm]
    0,
    &
    S_k=0.
    \end{cases}
\]
Similarly, for every $q\in Q$, write
\[
    r_k(q):=\|L_k(q)\|_2
\]
and
\[
    v_k(q)
    :=
    \begin{cases}
    \dfrac{L_k(q)}{\|L_k(q)\|_2},
    &
    L_k(q)\neq0,
    \\[2mm]
    0,
    &
    L_k(q)=0.
    \end{cases}
\]
Thus
\[
    S_k=R_ku_k,
    \qquad
    L_k(q)=r_k(q)v_k(q).
\]

At the beginning of a block, define the augmented information state
\[
    \widetilde s
    :=
    \bigl(
        s,u,R,v(\cdot),r(\cdot)
    \bigr).
\]
Let $\widetilde\WinSet$ denote the collection of augmented states
reachable from finite sliding-window admissible histories.

The sliding component $s$ records the exogenous temporal information.
The coordinates $(u,R)$ record the direction and magnitude of the
learner's cumulative payoff, while
$(v(\cdot),r(\cdot))$ record the corresponding quantities for every
fixed comparator.

Suppose the current augmented state is
\[
    \widetilde s
    =
    \bigl(
        s,u,R,v(\cdot),r(\cdot)
    \bigr),
\]
the learner chooses $\pi\in\Pi_H$, and the adversary produces
\[
    s'\in\WinSet(s).
\]
Define
\[
    G
    :=
    \blk_H(\pi,s').
\]
The learner's cumulative payoff is updated by
\[
    S^+
    :=
    Ru+G,
    \qquad
    R^+
    :=
    \|S^+\|_2,
\]
and
\[
    u^+
    :=
    \begin{cases}
    \dfrac{S^+}{\|S^+\|_2},
    &
    S^+\neq0,
    \\[2mm]
    0,
    &
    S^+=0.
    \end{cases}
\]

For every $q\in Q$, define
\[
    L^+(q)
    :=
    r(q)v(q)+\blk_H(q,s'),
\]
\[
    r^+(q)
    :=
    \|L^+(q)\|_2,
\]
and
\[
    v^+(q)
    :=
    \begin{cases}
    \dfrac{L^+(q)}{\|L^+(q)\|_2},
    &
    L^+(q)\neq0,
    \\[2mm]
    0,
    &
    L^+(q)=0.
    \end{cases}
\]
The successor augmented state is therefore
\[
    \widetilde s^+
    =
    \widetilde s^+(\pi,s')
    :=
    \bigl(
        s',
        u^+,
        R^+,
        v^+(\cdot),
        r^+(\cdot)
    \bigr).
\]
This transition rule contains no explicit dependence on the block
index and hence defines a time-homogeneous augmented game.

Importantly, the adversary's admissible continuation is still governed
only by
\[
    s'\in\WinSet(s).
\]
The remaining coordinates are endogenous bookkeeping variables
determined by the realized history.

Define
\[
    \beta(r)
    :=
    \inf_{q\in Q}r(q).
\]
The current best-fixed relative residual is
\[
    D(\widetilde s)
    :=
    R-\beta(r).
\]
Its one-block increment is
\begin{equation}
\label{eq:sliding-fixed-relative-increment}
\begin{aligned}
\Delta_H^{\mathrm{fix}}
(\pi,\widetilde s,s')
&:=
D(\widetilde s^+)-D(\widetilde s)
\\
&=
(R^+-R)
-
\bigl(
    \beta(r^+)-\beta(r)
\bigr).
\end{aligned}
\end{equation}
Starting from zero cumulative payoffs, these increments telescope
exactly:
\begin{equation}
\label{eq:sliding-fixed-exact-telescope}
\begin{aligned}
\sum_{k=1}^K
\Delta_H^{\mathrm{fix}}
(\pi_k,\widetilde s_{k-1},s_k)
&=
R_K-\beta(r_K)
\\
&=
\left\|
    \sum_{k=1}^K
    \blk_H(\pi_k,s_k)
\right\|_2
-
\inf_{q\in Q}
\left\|
    \sum_{k=1}^K
    \blk_H(q,s_k)
\right\|_2.
\end{aligned}
\end{equation}
Here
$\widetilde s_k=(s_k,u_k,R_k,v_k(\cdot),r_k(\cdot))$
denotes the augmented state after block $k$.
For the zero-initialized trajectory above, set
\[
\widetilde s_0
:=
\bigl(s_0,u_0,R_0,v_0(\cdot),r_0(\cdot)\bigr)\] with $
u_0=R_0=
v_0(q)=r_0(q)=0
$ for every $q\in Q$.
In particular, $D(\widetilde s_0)=0$.
To incorporate soft constraint violation, define the two-dimensional
augmented block payoff
\[
    \widehat G_H^{\mathrm{fix}}
    (\pi,\widetilde s,s')
    :=
    \left(
        \Delta_H^{\mathrm{fix}}
        (\pi,\widetilde s,s'),
        V_H(\pi,s')
    \right)
    \in\mathbb R^2.
\]
The relevant target is the negative orthant
\[
    \mathbb R_-^2
    :=
    (-\infty,0]\times(-\infty,0].
\]
Since the second coordinate is nonnegative, approachability of
$\mathbb R_-^2$ is equivalent to
\[
    [D(\widetilde s_K)]_+=o(KH)
    \qquad\text{and}\qquad
    \sum_{k=1}^K V_H(\pi_k,s_k)=o(KH).
\]

Every nonzero finite support direction of $\mathbb R_-^2$ is a positive
multiple of
\[
    (\theta,1-\theta),
    \qquad
    \theta\in[0,1].
\]
Accordingly, define the scalarized block payoff
\[
\begin{aligned}
\gamma_{\theta,H}^{\mathrm{fix}}
(\pi,\widetilde s,s')
:=
&
\theta
\Delta_H^{\mathrm{fix}}
(\pi,\widetilde s,s')
\\
&+
(1-\theta)V_H(\pi,s').
\end{aligned}
\]
The corresponding operator is
\begin{equation}
\label{eq:sliding-fixed-operator}
\begin{aligned}
&
(\widetilde{\mathcal T}^{\mathrm{fix}}_{\theta,H}
\Phi)(\widetilde s)
\\
&\quad:=
\inf_{\pi\in\Pi_H}
\sup_{s'\in\WinSet(s)}
\left\{
    \gamma_{\theta,H}^{\mathrm{fix}}
    (\pi,\widetilde s,s')
    +
    \Phi\bigl(
        \widetilde s^+(\pi,s')
    \bigr)
\right\}.
\end{aligned}
\end{equation}

Define 
\begin{equation}
\label{eq:sliding-fixed-lower-value}
\underline\lambda_H^{\mathrm{fix}}(\theta)
:=
\sup_{\Phi\in\ell^\infty(\widetilde\WinSet)}
\inf_{\widetilde s\in\widetilde\WinSet}
\left\{
    (\widetilde{\mathcal T}^{\mathrm{fix}}_{\theta,H}
    \Phi)(\widetilde s)
    -
    \Phi(\widetilde s)
\right\},
\end{equation}
and
\begin{equation}
\label{eq:sliding-fixed-upper-value}
\overline\lambda_H^{\mathrm{fix}}(\theta)
:=
\inf_{\Phi\in\ell^\infty(\widetilde\WinSet)}
\sup_{\widetilde s\in\widetilde\WinSet}
\left\{
    (\widetilde{\mathcal T}^{\mathrm{fix}}_{\theta,H}
    \Phi)(\widetilde s)
    -
    \Phi(\widetilde s)
\right\}.
\end{equation}

\paragraph{A uniform augmented-state characterization.}

For the augmented game, we say that uniform best-fixed approachability
holds if there exist a causal learner strategy and a deterministic
$r(K)=o(KH)$ such that, for every initial augmented state
$\widetilde s_0\in\widetilde\WinSet$ and every admissible adversary,
\[
    \operatorname{dist}
    \left(
        \sum_{k=1}^K
        \widehat G_H^{\mathrm{fix}}
        (\pi_k,\widetilde s_{k-1},s_k),
        \mathbb R_-^2
    \right)
    \le r(K).
\]

\begin{theorem}[Uniform characterization for the augmented best-fixed game]
\label{thm:sliding-fixed-uniform}
Uniform best-fixed approachability holds if and only if
\begin{equation}
\label{eq:sliding-fixed-uniform-characterization}
    \overline\lambda_H^{\mathrm{fix}}(\theta)
    \le0,
    \qquad
    \forall\theta\in[0,1],
\end{equation}
where
\[
\overline\lambda_H^{\mathrm{fix}}(\theta)
=
\inf_{\Phi\in\ell^\infty(\widetilde\WinSet)}
\sup_{\widetilde s\in\widetilde\WinSet}
\left\{
    (\widetilde{\mathcal T}^{\mathrm{fix}}_{\theta,H}\Phi)
    (\widetilde s)
    -
    \Phi(\widetilde s)
\right\}.
\]
No bridging assumption is imposed on the augmented state space.
\end{theorem}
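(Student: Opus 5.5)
We will prove \Cref{thm:sliding-fixed-uniform} by reducing it to the same argument as \Cref{thm:sliding-convex-approachability}. That argument never used bridging for the upper-value characterization. Here it is applied to the augmented time-homogeneous game with state space $\widetilde\WinSet$, payoff $\widehat G_H^{\mathrm{fix}}$ and target $\mathbb R_-^2$. The key observation is that the augmented game has bounded payoffs. By the triangle inequality, $|R^+-R|\le\|G\|_2\le M_H$. Likewise each $|r^+(q)-r(q)|\le\|\blk_H(q,s')\|_2\le M_H$, so $|\beta(r^+)-\beta(r)|\le M_H$. Hence $|\Delta_H^{\mathrm{fix}}|\le 2M_H$, and $V_H$ is bounded by assumption. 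Once boundedness is in place, the state space can be arbitrary (no compactness, no connectivity), which is exactly why no bridging is needed.

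\textbf{Step 1 (finite-horizon values and the sup-inf relation).} For $\theta\in[0,1]$ define the $n$-block value
\[
V_n^\theta(\widetilde s):=\inf_{\text{causal }n\text{-block strategies}}\ \sup_{\text{admissible }s_1,\dots,s_n}\sum_{k=1}^n\gamma^{\mathrm{fix}}_{\theta,H}(\pi_k,\widetilde s_{k-1},s_k),
\]
so that $V_n^\theta=(\widetilde{\mathcal T}^{\mathrm{fix}}_{\theta,H})^n 0$. I will reprove the analogue of Lemma~\ref{lem:supinf-relation} on $\widetilde\WinSet$:
\[
\lim_{n\to\infty} \tfrac1n\sup_{\widetilde s}V_n^\theta(\widetilde s)=\overline\lambda_H^{\mathrm{fix}}(\theta).
\]
The limit exists by subadditivity of $a_n:=\sup_{\widetilde s} V_n^\theta(\widetilde s)$, which follows from monotonicity of the operator together with $V_{m+n}\le V_m+\sup V_n$.

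For the bound ``$\le$'', take $\Phi$ with $\widetilde{\mathcal T}\Phi\le\Phi+\lambda+\epsilon$. Iterating with monotonicity and translation-equivariance gives $V_n\le n(\lambda+\epsilon)+\operatorname{range}\Phi$.

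For ``$\ge$'', use the candidate potential $\Phi_N:=\frac1N\sum_{j<N}(V_j-j\,a_N/N)$. This is bounded because $|V_j|\le 2jM_H$. A Cesàro/telescoping computation then gives $\widetilde{\mathcal T}\Phi_N-\Phi_N\le a_N/N+O(M_H/ N)\cdot$(boundary terms), and these terms vanish as $N\to\infty$. This step is the main obstacle: it must be done carefully with only boundedness and no finiteness of the state space. I will try the Cesàro potential first, and fall back on the standard $\Phi_N=V_N - \text{const}$ vanishing-discount-type argument if the telescoping fails.

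\textbf{Step 2 (necessity).} If uniform approachability holds with rate $r(K)$, then for every $\theta$ the scalarization in direction $(\theta,1-\theta)$ satisfies
\[
\sum_k\gamma_{\theta}\le\sqrt2\,\operatorname{dist}(\cdot,\mathbb R_-^2)\le\sqrt2\, r(K),
\]
because $\sigma_{\mathbb R_-^2}(\theta,1-\theta)=0$. This holds uniformly over all $\widetilde s_0$, so $\sup_{\widetilde s}V_n^\theta\le\sqrt2\,r(n)=o(n)$. By Step 1, $\overline\lambda_H^{\mathrm{fix}}(\theta)\le0$.

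\textbf{Step 3 (sufficiency).} Copy the epoch construction in the proof of \Cref{thm:sliding-convex-approachability}. The relevant directions are $\{(\theta,1-\theta)/\|(\theta,1-\theta)\|_2\}$, a compact arc on which $f_n(\theta):=\frac1n\sup V_n^\theta$ is uniformly Lipschitz in $\theta$ (with constant $O(M_H)$). The finite-net argument then gives $\delta_n\to0$. Epoch $e$ has $e$ blocks, and at its start the learner plays a $1$-optimal $e$-block strategy for the direction given by the residual of $\sum\widehat G$ from $\mathbb R_-^2$ at the current augmented state. This uses uniformity over the initial augmented state, which is where ``for every $\widetilde s_0$'' enters. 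The same distance recursion
\[
D_{e+1}^2\le D_e^2+2D_e e\delta_e+O(e^2M_H^2)
\]
yields $D=o(E^2)$, and the within-epoch interpolation gives $o(KH)$.

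Finally, the telescoping identity \eqref{eq:sliding-fixed-exact-telescope} only interprets the result for zero initialization. The characterization itself concerns the augmented payoffs, so nothing further is needed.
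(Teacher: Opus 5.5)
Your architecture matches the paper's. Its proof is a one-line application of Theorem~\ref{thm:sliding-convex-approachability} to the augmented payoff $(\Delta_H^{\mathrm{fix}},V_H)$ with target $\mathbb R_-^2$, and your bound $|\Delta_H^{\mathrm{fix}}|\le 2M_H$ usefully makes explicit the boundedness that this citation needs.

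The gap is in the hard half of Step~1, namely $\overline\lambda_H^{\mathrm{fix}}(\theta)\le\lim_N a_N/N$. Step~2 needs exactly this to turn $\sup_{\widetilde s}V_n^\theta=o(n)$ into bounded potentials witnessing $\overline\lambda_H^{\mathrm{fix}}(\theta)\le0$.

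\begin{itemize}
\item Your Ces\`aro potential $\Phi_N=\frac1N\sum_{j<N}(V_j^\theta-ja_N/N)$ would require $\widetilde{\mathcal T}\bigl(\frac1N\sum_j f_j\bigr)\le\frac1N\sum_j\widetilde{\mathcal T}f_j$, i.e.\ convexity of $\widetilde{\mathcal T}^{\mathrm{fix}}_{\theta,H}$. That fails, because the infimum over block policies sits outside the average: a single $\pi$ would have to be near-optimal against all $N$ continuation values at once. The operator is a pointwise infimum of convex maps and is in general not convex. For instance, suppose that from some state the learner merely picks, at zero payoff, one of two deterministic cycles whose partial sums are $0,1,0,1,\dots$ and $0,0,1,0,1,\dots$. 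Then $V_j=0$ at that state for all $j$ and $a_N/N\to0$, yet $\widetilde{\mathcal T}\Phi_N-\Phi_N\to\tfrac12$ there.
\item Your fallback $\Phi_N=V_N^\theta-\text{const}$ fails as well. It gives $\widetilde{\mathcal T}\Phi_N-\Phi_N=V_{N+1}^\theta-V_N^\theta$, whose supremum equals $1$ for every $N$ already on a deterministic two-cycle with payoffs $+1,-1$. Vanishing-discount arguments need the ergodicity or compactness you rightly avoid assuming.
\end{itemize}

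The missing idea is the min-potential from the proof of Lemma~\ref{lem:supinf-relation}. It uses only monotonicity and additive homogeneity, so it works verbatim on $\widetilde{\WinSet}$. Let $M$ bound $|\gamma^{\mathrm{fix}}_{\theta,H}|$, set $b:=a_N/N$, and define $\Phi:=\min_{0\le j\le N-1}\{V_j^\theta-jb\}$, which is bounded since $|V_j^\theta|\le jM$. Monotonicity gives $\widetilde{\mathcal T}\Phi\le\widetilde{\mathcal T}(V_j^\theta-jb)=V_{j+1}^\theta-jb$ for every $j$, hence
\[
\widetilde{\mathcal T}\Phi\le b+\min_{1\le j\le N}\{V_j^\theta-jb\}\le b+\Phi .
\]
The last step holds because $V_N^\theta-Nb\le0=V_0^\theta$ pointwise. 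So $\overline\lambda_H^{\mathrm{fix}}(\theta)\le a_N/N$ for every $N$, and Steps~2 and~3 then go through as you describe. Alternatively, note that neither the upper half of Lemma~\ref{lem:supinf-relation} nor Theorem~\ref{thm:sliding-convex-approachability} uses any structure of $\WinSet$ or bridging, and cite them directly, as the paper does.
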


\begin{proof}
This is Theorem~\ref{thm:sliding-convex-approachability} applied to the
time-homogeneous augmented payoff
\[
    \widehat G_H^{\mathrm{fix}}
    =
    \left(
        \Delta_H^{\mathrm{fix}},
        V_H
    \right)
\]
and the closed convex target $\mathbb R_-^2$.
The nonzero finite support directions of $\mathbb R_-^2$ are, up to
positive scaling,
\[
    (\theta,1-\theta),
    \qquad
    \theta\in[0,1],
\]
which gives exactly
\eqref{eq:sliding-fixed-uniform-characterization}.
\end{proof}

\begin{corollary}[Best-fixed comparator from zero initialization]
\label{cor:sliding-fixed-zero-initialization}
If \eqref{eq:sliding-fixed-uniform-characterization} holds, then the
learner can guarantee
\[
    \left[
        \|\cum_T\|_2
        -
        \inf_{q\in Q}
        \left\|
            \sum_{t=1}^T
            \poff(q,z_t)
        \right\|_2
    \right]_+
    =
    o(T)
\]
and
\[
    V_T=o(T)
\]
uniformly over all admissible initial sliding states and adversaries.
\end{corollary}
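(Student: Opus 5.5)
The plan is to read the corollary off Theorem~\ref{thm:sliding-fixed-uniform}. That theorem supplies a learner for the augmented game whose guarantee holds uniformly over every initial augmented state in $\widetilde\WinSet$. We then specialize it to the zero-initialized states $\widetilde s_0=(s_0,0,0,0,0)$ and translate back with the exact telescoping identity \eqref{eq:sliding-fixed-exact-telescope}.

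\textbf{Step 1.} Check that, for each admissible sliding state $s_0$, the zero-initialized state $\widetilde s_0$ lies in $\widetilde\WinSet$. It is reachable from the empty history, or we add it to $\widetilde\WinSet$ by convention. Enlarging the state set does not hurt the theorem, since it already quantifies over all reachable augmented states.

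\textbf{Step 2.} Under \eqref{eq:sliding-fixed-uniform-characterization}, Theorem~\ref{thm:sliding-fixed-uniform} yields a causal strategy and a deterministic $r(K)=o(KH)$ with
\[
\operatorname{dist}\Bigl(\sum_{k=1}^K \widehat G_H^{\mathrm{fix}}(\pi_k,\widetilde s_{k-1},s_k),\mathbb R_-^2\Bigr)\le r(K).
\]
The distance to the negative orthant is at least the positive part of each coordinate. The first coordinate telescopes by \eqref{eq:sliding-fixed-exact-telescope} to $D(\widetilde s_K)-D(\widetilde s_0)=D(\widetilde s_K)$, because $D(\widetilde s_0)=0$. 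This gives
\[
\bigl[\|\cum_{KH}\|_2-\inf_q\|\textstyle\sum_{t\le KH}\poff(q,z_t)\|_2\bigr]_+\le r(K).
\]
The second coordinate gives $V_{KH}\le r(K)$.

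\textbf{Step 3.} Pass from block endpoints to an arbitrary $T$. Write $T=KH+\rho$ with $\rho<H$. Changing the last $\rho$ rounds moves $\|\cum_T\|_2$ by at most $\rho\sqrt d L$. It moves $\inf_q\|\sum_{t\le T}\poff(q,z_t)\|_2$ by at most the same amount, since the infimum of functions each perturbed by at most $\rho\sqrt dL$ is perturbed by at most that much. It adds at most $\rho\sup[\cstr]_+$ to $V_T$, which is bounded under the standing regularity assumptions. These $O(H)$ corrections are $o(T)$, and the bound is uniform over $s_0$ and over adversaries.

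\textbf{Main obstacle.} The delicate point is Step 2. The uniform theorem controls the distance of the cumulative augmented payoff from an arbitrary initial augmented state, and the telescoping identity must be applied with the learner's realized policies $\pi_k$ feeding the bookkeeping coordinates. I would verify that the augmented transition in the theorem is exactly the one generated by the realized $S_k$ and $L_k(q)$, so the two coincide. This needs nothing beyond the definitions, because $\widetilde s^+$ is defined from $Ru+G$ and $r(q)v(q)+\blk_H(q,s')$.
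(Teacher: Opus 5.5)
Your proposal is correct and follows the same route as the paper. You specialize the uniform augmented-state guarantee of Theorem~\ref{thm:sliding-fixed-uniform} to the zero-initialized state, use $D(\widetilde s_0)=0$ and the telescoping identity \eqref{eq:sliding-fixed-exact-telescope} to read off both coordinates, and absorb the final incomplete block as an $O(H)$ error, which you handle more carefully than the paper. One small caveat: your fallback of adding $\widetilde s_0$ to $\widetilde\WinSet$ ``by convention'' would not be free, because hypothesis \eqref{eq:sliding-fixed-uniform-characterization} is stated over $\widetilde\WinSet$ itself; it is unnecessary, though, since the zero-initialized state is reachable from the empty history.
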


\begin{proof}
Start the augmented game from
\[
    S_0=0,
    \qquad
    L_0(q)=0,
    \quad q\in Q.
\]
Then $D(\widetilde s_0)=0$, and
\eqref{eq:sliding-fixed-exact-telescope} gives
\[
\sum_{k=1}^K
\widehat G_H^{\mathrm{fix}}
=
\left(
    \left\|
        \sum_{k=1}^K\blk_H(\pi_k,s_k)
    \right\|_2
    -
    \inf_{q\in Q}
    \left\|
        \sum_{k=1}^K\blk_H(q,s_k)
    \right\|_2,
    \;
    \sum_{k=1}^KV_H(\pi_k,s_k)
\right).
\]
Approachability of $\mathbb R_-^2$ therefore gives the claimed
guarantees. The final incomplete block contributes only $O(H)$.
\end{proof}
\begin{remark}
\label{rem:sliding-fixed-rooted}
The preceding theorem characterizes a stronger, uniformly
augmented-state formulation.
The original best-fixed-comparator objective starts only from the
distinguished initialization
\[
    S_0=0,
    \qquad
    L_0(\cdot)=0.
\]

Accordingly,
Corollary~\ref{cor:sliding-fixed-zero-initialization} is stated only as
a sufficient consequence.
\end{remark}
\begin{remark}[The blockwise case as a special case]
\label{rem:split-fixed-as-special-case}
The blockwise formulation is recovered by removing the continuation
restriction
\[
    s'\in\WinSet(s).
\]
The sliding-state coordinate then becomes unnecessary, while the
cumulative learner and comparator coordinates evolve exactly as above.
\end{remark}

\begin{remark}[No bridging on the augmented state]
\label{rem:sliding-fixed-no-augmented-bridging}
Bridging is imposed, when needed, only on the original sliding state
$s$, which governs admissible future continuations.
The additional augmented coordinates record cumulative learner and
comparator payoffs and need not be mutually reachable in a uniformly
bounded number of blocks.
\end{remark}

\subsection{A negative example under pointwise compatibility}
\label{app:negative-pointwise-feasible}
We conclude the appendix with a binary-forecasting counterexample
showing that, even when every round admits a feasible action
with zero payoff, no learner can guarantee both comparator-relative
GEQ and sublinear cumulative constraint violation against all
admissible adversaries.

Recall that in the binary probabilistic forecasting example,
\[
\mathcal X=[0,1],
\]
and the adversarial outcome at round $t$ is
\[
z_t=(y_t,\alpha_t),
\qquad
y_t\in\{0,1\},
\quad
\alpha_t\in[0,1].
\]
The vector payoff is given by
\[
\poff(p_t,z_t)
:=
y_t-p_t.
\]
Hence
\[
\cum_T
=
\sum_{t=1}^{T}(y_t-p_t).
\]
The learner's scalar constraint function is
\[
\cstr(p_t,z_t)
:=
p_t-\alpha_t
\] so that
\[
V_T
=
\sum_{t=1}^{T}
[\cstr(p_t,z_t)]_+
=
\sum_{t=1}^{T}
(p_t-\alpha_t)_+,
\]
and the adversary is required to satisfy the pointwise compatibility
condition
\begin{equation}
\label{eq:app-pointwise-compatibility}
y_t\le\alpha_t.
\end{equation}

Thus, for every admissible outcome $z_t$, the action $q_t=y_t$
satisfies both $\cstr(q_t,z_t)\le0$ and $\poff(q_t,z_t)=0$.

Define
\[
\underline\alpha_T
:=
\min_{t\le T}\alpha_t.
\]
The best fixed feasible-comparator benchmark is
\[
B_T
:=
\inf_{0\le x\le\underline\alpha_T}
\left|
\sum_{t=1}^{T}
\poff(x,z_t)
\right|
=
\inf_{0\le x\le\underline\alpha_T}
\left|
\sum_{t=1}^{T}(y_t-x)
\right|.
\]

We consider the comparator-relative GEQ objective
\begin{equation}
\label{eq:app-relative-geq}
\bigl[\lvert\cum_T\rvert-B_T\bigr]_+
=
o(T).
\end{equation}

\begin{proposition}
\label{prop:adversarial-constraint-impossibility}
For every learner strategy, there exists a causal adversary satisfying
\eqref{eq:app-pointwise-compatibility} such that the learner cannot
simultaneously guarantee
\[
V_T=o(T)
\]
and
\[
\bigl[\lvert\cum_T\rvert-B_T\bigr]_+=o(T).
\]
\end{proposition}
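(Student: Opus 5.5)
The plan is to exploit the fact that $\alpha_t$, like $y_t$, is revealed only after the learner commits to $p_t$. I would build a deterministic causal adversary that reacts to each $p_t$ by choosing between two admissible outcomes. The \emph{high} outcome is $(y_t,\alpha_t)=(1,1)$. The \emph{low} outcome is $(y_t,\alpha_t)=(0,c)$ for a fixed constant $c\in(0,1)$ to be tuned. Both satisfy \eqref{eq:app-pointwise-compatibility}. At round $1$ the adversary plays the low outcome regardless of $p_1$, so $\underline\alpha_T=c$ for every $T$, and the benchmark becomes explicit:
\[
B_T=\inf_{0\le x\le c}\left|Y_T-Tx\right|=\bigl[Y_T-cT\bigr]_+ ,
\qquad Y_T:=\sum_{t\le T}y_t .
\]
From round $2$ on, the rule is threshold-based. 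If $p_t\le\theta c$ (with $\theta\in(0,1)$ a second constant), the adversary plays the high outcome, which contributes at least $1-\theta c$ to $\cum_T$. If $p_t>\theta c$, it plays the low outcome, contributing $-p_t$ to $\cum_T$ and $(p_t-c)_+$ to $V_T$. The intent is that the learner either predicts low and lets $\cum_T$ drift upward past the benchmark allowance, or predicts high and either violates the constraint or drags $\cum_T$ below zero while $B_T$ gives no slack.

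Let $N$ be the number of low rounds up to $T$, so $Y_T=T-N$. I would decompose
\[
\sum_t p_t\le \theta c\,(T-N)+cN+V_T ,
\]
since high rounds have $p_t\le\theta c$ and low rounds have $p_t\le c+(p_t-c)_+$. The analysis then splits according to whether the benchmark is active.

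In the regime $Y_T\ge cT$, we have $B_T=Y_T-cT$. Then
\[
\cum_T-B_T=cT-\sum_t p_t\ \ge\ (1-\theta)c\,(T-N)-V_T\ \ge\ (1-\theta)c^2T-V_T .
\]
So whenever $V_T=o(T)$, the relative residual is linear in $T$.

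In the regime $Y_T<cT$, we have $B_T=0$, so the learner must keep $|Y_T-\sum_tp_t|=o(T)$ with $N>(1-c)T$. Here I use the lower bound $\sum_tp_t\ge\theta cN$ from low rounds, so
\[
\sum_tp_t-Y_T\ \ge\ \theta c(1-c)T-cT .
\]
With the constants as they stand this quantity is negative, so the argument as written does not close here.

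This second regime is the main obstacle: $|\cum_T|$ can cancel to zero because the learner's predictions on low rounds may sit just above the threshold. My first remedy would be to change the low outcome to $(0,c')$ with $c'<c$ for $t\ge2$, keeping $\underline\alpha_T$ pinned so that $B_T$ stays computable. I would then tune $(c,c',\theta)$ so that on low rounds either $p_t>c'$ yields violation $\Omega(1)$ per round, forcing $V_T=\Omega(N)=\Omega(T)$, or $p_t\in(\theta c,c']$ forces $\sum_tp_t-Y_T\ge\Omega(T)$.

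If a single horizon does not suffice, my fallback is to exploit anytime validity: check the objectives at the two horizons $T$ and $2T$, in the spirit of \Cref{thm:lowerbound}, and show that the learner's behavior on the first half cannot satisfy both checkpoints. Either way, the final step is routine: combining the two regimes shows that $V_T=o(T)$ forces $\bigl[|\cum_T|-B_T\bigr]_+=\Omega(T)$ along this adversary.
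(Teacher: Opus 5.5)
Your regime-1 computation is correct, but the regime-2 failure is not a loose bound: the adversary you describe can be beaten. Take $p_1=0$. For $t\ge2$, predict $p_t=\theta c$ if $\cum_{t-1}\le0$; this triggers $(1,1)$, with payoff $1-\theta c$ and no violation. Otherwise predict $p_t=c$; this triggers $(0,c)$, with payoff $-c$ and no violation. By induction $\cum_t\in(-c,1-\theta c]$ for every $t$ and $V_T=0$, so both objectives hold, since $B_T\ge0$.

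Your retuning with a second level $c'$ fails for the same reason: the rule is still memoryless. If $\theta c<c'$, the same balancing works with $p_t\in\{\theta c,c'\}$. If $\theta c\ge c'$, the learner pays $O(1)$ violation to trigger one low round, which pins $\underline\alpha_T=c'$, and then predicts $p_t=\theta c$ forever. This gives $y_t=1$ and $|\cum_T|-B_T=(c'-\theta c)T+O(1)\le O(1)$. So your regime-1 bound needs the threshold below $\underline\alpha_T$, while charging $\Omega(1)$ violation to low-triggering predictions needs it above $\underline\alpha_T$; no choice of constants satisfies both. Checking two horizons $T,2T$ against this adversary does not help either, because the balancing learner keeps $|\cum_t|$ bounded at every $t$.

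The missing idea is non-stationarity, with checkpoints at which $B_T=0$. The paper uses epochs starting at $a_r$, with $a_{r+1}=a_r+2a_r^2$, and fixes $\bar\alpha\in(1/2,1)$ so that $\underline\alpha_T=\bar\alpha$ throughout.
\begin{itemize}
\item In the first $a_r^2$ rounds of an epoch, the adversary plays $(\bar\alpha,0)$ regardless of $p_t$.
\item In the next $a_r^2$ rounds, it plays $(1,1)$ if $p_t<q$ and $(\bar\alpha,0)$ otherwise, with $q=(1+\bar\alpha)/2$ strictly above $\underline\alpha_T$.
\end{itemize}
The forced-zero phase keeps the empirical mean of $y$ at most about $1/2<\bar\alpha$ at both the midpoint and the end of the epoch, so $B_T=0$ at both checkpoints. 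This closes your regime-1 escape, and it is what lets the threshold sit above $\underline\alpha_T$. Each $p_t\ge q$ then costs violation at least $\delta=(1-\bar\alpha)/2$, so $V_T=o(T)$ forces the fraction of such rounds to vanish, while every other second-phase round adds more than $\delta$ to $\cum_T$. The midpoint checkpoint forces the average first-phase prediction to vanish, so the learner cannot bank negative payoff in advance. Without it, predicting $\bar\alpha$ in the first phase and $1-\bar\alpha$ in the second would balance with zero violation. Since the history before epoch $r$ contributes only $O(a_r)=o(a_r^2)$, a single causal adversary gives $|\cum_{a_{r+1}}|=\Omega(a_{r+1})$ with $B_{a_{r+1}}=0$ at infinitely many horizons.

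Your two-horizon fallback points in this direction, but it specifies neither the non-reactive phase nor a threshold above $\underline\alpha_T$. These are exactly what the argument needs, so the final step is not routine.
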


\begin{proof}
Fix
\[
\bar\alpha\in(1/2,1)
\]
and define
\begin{equation}
\label{eq:app-threshold}
q
:=
\frac{1+\bar\alpha}{2},
\qquad
\delta
:=
q-\bar\alpha
=
\frac{1-\bar\alpha}{2}
>0.
\end{equation}

Set
\[
a_1=1,
\qquad
a_{r+1}=a_r+2a_r^2.
\]
Initialize the process at $t=1$ by
\[
(\alpha_1,y_1)=(\bar\alpha,0).
\]
The construction below only uses
\[
\alpha_t\in\{\bar\alpha,1\},
\]
and therefore
\[
\underline\alpha_T=\bar\alpha,
\qquad
\forall T.
\]

Epoch $r$ consists of
\[
\{a_r+1,\ldots,a_r+2a_r^2\}
\]
and is divided into two phases of equal length $a_r^2$.

During the first phase
\[
I_r^{(1)}
:=
\{a_r+1,\ldots,a_r+a_r^2\},
\]
the adversary chooses
\begin{equation}
\label{eq:app-first-phase}
(\alpha_t,y_t)
=
(\bar\alpha,0).
\end{equation}

During the second phase
\[
I_r^{(2)}
:=
\{a_r+a_r^2+1,\ldots,a_r+2a_r^2\},
\]
the adversary chooses
\[
(\alpha_t,y_t)
=
\begin{cases}
(1,1),
&
p_t<q,
\\[1mm]
(\bar\alpha,0),
&
p_t\ge q.
\end{cases}
\]
The pointwise compatibility condition
\eqref{eq:app-pointwise-compatibility}
is satisfied in both cases.

Define
\[
c_r
:=
\frac1{a_r^2}
\sum_{t\in I_r^{(1)}}p_t
\]
and
\[
d_r
:=
\frac1{a_r^2}
\sum_{t\in I_r^{(2)}}
\mathbf 1\{p_t\ge q\}.
\]

\paragraph{Cumulative feasibility implies $d_r\to0$.}
For every $t\in I_r^{(2)}$ such that $p_t\ge q$, the adversary chooses
$\alpha_t=\bar\alpha$. Hence, by \eqref{eq:app-threshold},
\[
[\cstr(p_t,z_t)]_+
=
(p_t-\alpha_t)_+
\ge
q-\bar\alpha
=
\delta.
\]
Therefore
\begin{equation}
\label{eq:app-ccv-lower}
V_{a_{r+1}}
\ge
\delta a_r^2d_r.
\end{equation}
Since
\[
a_{r+1}
=
a_r+2a_r^2
\asymp
2a_r^2,
\]
the assumption $V_T=o(T)$ together with
\eqref{eq:app-ccv-lower} implies
\begin{equation}
\label{eq:app-ds-zero}
d_r\longrightarrow0.
\end{equation}

\paragraph{Comparator-relative GEQ implies $c_r\to0$.}
Consider the end of the first phase,
\[
T_r^{(1)}
:=
a_r+a_r^2.
\]
Since $y_t=0$ throughout $I_r^{(1)}$ and only $a_r$ rounds precede the
current epoch,
\[
\sum_{t=1}^{T_r^{(1)}}y_t
\le
a_r.
\]
Consequently,
\[
\frac1{T_r^{(1)}}
\sum_{t=1}^{T_r^{(1)}}y_t
\le
\frac{a_r}{a_r+a_r^2}
=
\frac1{1+a_r}.
\]
For all sufficiently large $r$, this quantity is smaller than
$\bar\alpha$. Thus
\[
x_r
:=
\frac1{T_r^{(1)}}
\sum_{t=1}^{T_r^{(1)}}y_t
\]
is a feasible fixed comparator, and therefore
\begin{equation}
\label{eq:app-benchmark-first-zero}
B_{T_r^{(1)}}=0.
\end{equation}

By \eqref{eq:app-relative-geq},
\eqref{eq:app-benchmark-first-zero} implies
\begin{equation}
\label{eq:app-first-residual-small}
|\cum_{T_r^{(1)}}|
=
o(T_r^{(1)})
=
o(a_r^2).
\end{equation}

Since every summand of $\cum_{a_r}$ lies in $[-1,1]$,
\begin{equation}
\label{eq:app-past-bound}
|\cum_{a_r}|
\le
a_r.
\end{equation}
By \eqref{eq:app-first-phase},
\[
\cum_{T_r^{(1)}}
=
\cum_{a_r}
-
\sum_{t\in I_r^{(1)}}p_t
=
\cum_{a_r}
-
a_r^2c_r.
\]
Combining this identity with
\eqref{eq:app-first-residual-small} and
\eqref{eq:app-past-bound}, and using
$a_r=o(a_r^2)$, yields
\begin{equation}
\label{eq:app-cs-zero}
c_r\longrightarrow0.
\end{equation}

\paragraph{The second phase creates a positive payoff drift.}
Consider $t\in I_r^{(2)}$.
If $p_t<q$, then the adversary chooses $y_t=1$, and hence
\[
\poff(p_t,z_t)
=
1-p_t
>
1-q
=
\delta.
\]
If $p_t\ge q$, then the adversary chooses $y_t=0$, and therefore
\[
\poff(p_t,z_t)
=
-p_t
\ge
-1.
\]
Hence
\[
\sum_{t\in I_r^{(2)}}
\poff(p_t,z_t)
\ge
a_r^2
\left[
(1-d_r)\delta-d_r
\right].
\]

During the first phase,
\[
\sum_{t\in I_r^{(1)}}
\poff(p_t,z_t)
=
-a_r^2c_r.
\]
Therefore the total vector payoff accumulated during epoch $r$
satisfies
\[
\frac1{2a_r^2}
\sum_{t=a_r+1}^{a_{r+1}}
\poff(p_t,z_t)
\ge
\frac12
\left[
-c_r+(1-d_r)\delta-d_r
\right].
\]
Using
\eqref{eq:app-ds-zero} and
\eqref{eq:app-cs-zero},
we obtain
\begin{equation}
\label{eq:app-positive-drift}
\liminf_{r\to\infty}
\frac1{2a_r^2}
\sum_{t=a_r+1}^{a_{r+1}}
\poff(p_t,z_t)
\ge
\frac{\delta}{2}
=
\frac{1-\bar\alpha}{4}
>0.
\end{equation}

\paragraph{The comparator benchmark is zero at the end of the epoch.}
All positive outcomes during epoch $r$ occur in the second phase, so
there can be at most $a_r^2$ such outcomes. Since the preceding history
contains $a_r$ rounds,
\[
\sum_{t=1}^{a_{r+1}}y_t
\le
a_r+a_r^2.
\]
Therefore
\[
\frac1{a_{r+1}}
\sum_{t=1}^{a_{r+1}}y_t
\le
\frac{a_r+a_r^2}{a_r+2a_r^2}
\longrightarrow
\frac12.
\]
Since $\bar\alpha>1/2$, for all sufficiently large $r$,
\[
\frac1{a_{r+1}}
\sum_{t=1}^{a_{r+1}}y_t
\le
\bar\alpha
=
\underline\alpha_{a_{r+1}}.
\]
Thus the empirical outcome mean is a feasible fixed comparator, and
\begin{equation}
\label{eq:app-benchmark-end-zero}
B_{a_{r+1}}=0
\end{equation}
for all sufficiently large $r$.

On the other hand, the payoff accumulated before epoch $r$ is bounded
below by $-a_r$, whereas \eqref{eq:app-positive-drift} gives a positive
contribution of order $a_r^2$ during epoch $r$. Hence there exists a
constant $c_{\bar\alpha}>0$ such that, for all sufficiently large $r$,
\[
\cum_{a_{r+1}}
\ge
c_{\bar\alpha}a_r^2.
\]
Since
\[
a_{r+1}
=
a_r+2a_r^2
\asymp
a_r^2,
\]
we conclude that
\[
|\cum_{a_{r+1}}|
=
\Omega(a_{r+1}).
\]
Together with
\eqref{eq:app-benchmark-end-zero},
this contradicts
\eqref{eq:app-relative-geq}.
Therefore no learner can simultaneously guarantee sublinear cumulative
constraint violation and comparator-relative GEQ.
\end{proof}

\section{Proofs of main results}\label{app:proofs}

\subsection{Proof of Theorem~\ref{thm:lowerbound}}

\begin{proof}
Let $n\coloneqq T/2$. Denote the two possible adversarial
sequences by
\[
    z^{(0)}
    \coloneqq
    (0,\ldots,0),
    \qquad
    z^{(1)}
    \coloneqq
    (\underbrace{0,\ldots,0}_{n},
     \underbrace{1,\ldots,1}_{n}).
\]
We first condition on any internal randomness of the learner, so
that the learner may be treated as deterministic. Since
$z^{(0)}$ and $z^{(1)}$ coincide on the first $n$ rounds, causality
implies that the learner takes exactly the same actions on those
rounds under the two sequences. Let
\[
    A
    \coloneqq
    \sum_{t=1}^{n} p_t
\]
denote the learner's cumulative prediction over this common
prefix.

Let
\[
    B_0
    \coloneqq
    \sum_{t=n+1}^{T} p_t^{(0)},
    \qquad
    B_1
    \coloneqq
    \sum_{t=n+1}^{T} p_t^{(1)}
\]
be the cumulative predictions over the second half under
$z^{(0)}$ and $z^{(1)}$, respectively. By the feasibility
constraint $p_t\in[0,1/2]$,
\[
    0\le A,B_0,B_1\le \frac n2.
\]

Under the all-zero sequence $z^{(0)}$,
\[
    \sum_{t=1}^T (p_t-z_t)
    =
    A+B_0
    \ge0,
\]
and hence
\[
    \left|
        \sum_{t=1}^T (p_t-z_t)
    \right|
    =
    A+B_0.
\]

Under $z^{(1)}$, the second half contains $n$ ones, so
\[
    \sum_{t=1}^T (p_t-z_t)
    =
    A+B_1-n.
\]
Since $A+B_1\le n$, this quantity is nonpositive, and therefore
\[
    \left|
        \sum_{t=1}^T (p_t-z_t)
    \right|
    =
    n-A-B_1.
\]

Averaging over the adversary's two equiprobable choices gives
\begin{align*}
\bE\left|
    \frac1T\sum_{t=1}^T(p_t-z_t)
\right|
&=
\frac{1}{2T}
\left[
    (A+B_0)+(n-A-B_1)
\right]\\
&=
\frac{1}{2T}
\left(
    n+B_0-B_1
\right).
\end{align*}
Using $B_0\ge0$ and $B_1\le n/2$,
\[
    n+B_0-B_1
    \ge \frac n2.
\]
Consequently,
\[
\bE\left|
    \frac1T\sum_{t=1}^T(p_t-z_t)
\right|
\ge
\frac{n}{4T}
=
\frac18.
\]

Thus the claim holds with the universal constant
\(
    c=\frac18.
\)

The argument holds for every realization of the learner's
internal randomness. Taking expectation over that randomness as
well proves the result for arbitrary randomized causal learners.
\end{proof}

\subsection{Proof of Theorem~\ref{thm:split-blackwell}}
\begin{proof}
We first prove sufficiency.  For block $k$, let
$s^{(k)}\in\WinSet$ denote its adversarial outcome sequence and let
$\pi_k\in\Pi_H$ denote the learner's block policy.  Write
\[
G_k:=\blk_H(\pi_k,s^{(k)}),
\qquad
S_{kH}:=\sum_{j=1}^k G_j,
\qquad S_0:=0.
\]
For an arbitrary horizon $T\ge1$, write
\[
    T=KH+r,
    \qquad
    K=\lfloor T/H\rfloor,
    \qquad
    0\le r<H.
\]
At the beginning of each block $k=1,\ldots,K$, choose
\[
u_k
:=
\begin{cases}
\dfrac{S_{(k-1)H}}{\|S_{(k-1)H}\|_2},
& S_{(k-1)H}\neq0,\\[2mm]
\text{any }u\in\mathbb S^{d-1},
& S_{(k-1)H}=0.
\end{cases}
\]
Set
\[
    \varepsilon_k:=\frac{LH}{k\,2^{k+1}},
    \qquad k\ge1.
\]
By \eqref{eq:split-blackwell}, the learner can select
a causal block policy $\pi_k\in\Pi_H$ such that
\[
    \sup_{s\in\WinSet}
    \iprod{u_k,\blk_H(\pi_k,s)}
    \le \varepsilon_k.
\]
Consequently,
\[
    \iprod{S_{(k-1)H},G_k}
    \le \norm{S_{(k-1)H}}\varepsilon_k.
\]
Since
\[
    \norm{S_{(k-1)H}}\le(k-1)LH
    \qquad\text{and}\qquad
    \norm{G_k}\le LH,
\]
we obtain
\[
\begin{aligned}
    \norm{S_{kH}}^2
    &\le
    \norm{S_{(k-1)H}}^2
    +2\norm{S_{(k-1)H}}\varepsilon_k
    +(LH)^2\\
    &\le
    \norm{S_{(k-1)H}}^2
    +(LH)^2(1+2^{-k}).
\end{aligned}
\]
Iterating gives
\[
    \norm{S_{KH}}^2
    \le
    (LH)^2\left(K+\sum_{k=1}^K2^{-k}\right)
    \le (LH)^2(K+1).
\]

The remaining $r$ rounds contribute at most $rL$ in norm.
Therefore,
\[
\begin{aligned}
    \norm{S_T}
    &\le LH\sqrt{K+1}+rL\\
    &\le LH\left(\sqrt{K+\frac rH}+1\right)\\
    &=L\sqrt{HT}+LH.
\end{aligned}
\]
Dividing by $T$ proves the stated bound.
For necessity, suppose that \eqref{eq:split-blackwell} fails.
Then there exists $u\in\mathbb S^{d-1}$ such that
\[
\rho_H
:=
\frac12
\inf_{\pi\in\Pi_H}
\sup_{s\in\WinSet}
\langle u,\blk_H(\pi,s)\rangle
>0.
\]
After observing the learner's policy $\pi_k$ for each block, the
adversary can choose an admissible block sequence $s^{(k)}$ such that
\[
\langle u,G_k\rangle\ge\rho_H.
\]
Consequently,
\[
\left\langle u,S_{KH}\right\rangle
=
\left\langle u,\sum_{k=1}^K G_k\right\rangle
\ge K\rho_H.
\]
It follows that
\[
\left\|\sum_{k=1}^K G_k\right\|_2
\ge K\rho_H,
\]
and hence the average payoff cannot converge to zero.
\end{proof}

\subsection{Proof of Proposition~\ref{prop:sufficient-but-unnecessary-conditions}}
\label{app:inward-flow-blackwell}

In this appendix, we prove
Proposition~\ref{prop:sufficient-but-unnecessary-conditions}.
We consider two structural conditions used in prior work:
restorativity and inward flow. Both imply Blackwell feasibility,
although in different settings. Restorativity is naturally an
unconstrained radial condition, whereas inward flow is defined
relative to the geometry of a constrained action set.

We also show that neither condition is necessary for
approachability.

\subsubsection{Restorativity implies Blackwell feasibility}

We first consider the unconstrained setting $C=\mathbb R^d$.
Suppose the payoff family $g:\mathbb R^d\times\mathcal Z
\to\mathbb R^d$ is $(h,\phi)$-restorative for some $h\ge0$ and some $\phi:\R^d\to[0,\infty)$, i.e.,
\[
    \langle x,-g(x,z)\rangle
    \le -\phi(x),
    \qquad
    \forall z\in\mathcal Z,\quad
    \forall x:\|x\|_2\ge h.
\]

\begin{proposition}[Restorativity implies the Blackwell condition]
\label{prop:restorative-blackwell}
Suppose $C=\mathbb R^d$ and the payoff family is
$(h,\phi)$-restorative for some $\phi\ge0$.
Then, for every $u\in\mathbb S^{d-1}$,
\[
    \inf_{x\in\mathbb R^d}
    \sup_{z\in\mathcal Z}
    \langle u,g(x,z)\rangle
    \le 0.
\]
Consequently, the blockwise Blackwell condition
\eqref{eq:split-blackwell} holds for every $H$.
\end{proposition}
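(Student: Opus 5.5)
The plan is to test the Blackwell condition with the specific point $x=-ru$ and let $r\to\infty$, using restorativity to show that the directional payoff is nonpositive there, exactly as the remark before Proposition~\ref{prop:sufficient-but-unnecessary-conditions} suggests.

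Fix $u\in\mathbb S^{d-1}$. For any $r\ge h$ (and $r>0$), set $x_r:=-ru$. Then $\|x_r\|_2=r\ge h$, so Assumption~\ref{assump:restorative} gives, for every $z\in\mathcal Z$,
\[
\langle x_r,-g(x_r,z)\rangle\le-\phi(x_r)\le 0 .
\]
Since $x_r=-ru$, we have $\langle x_r,-g(x_r,z)\rangle = r\langle u,g(x_r,z)\rangle$. Dividing by $r>0$ yields $\langle u,g(x_r,z)\rangle\le -\phi(x_r)/r\le 0$ for all $z$. Taking the supremum over $z$ and then the infimum over $x\in\mathbb R^d$ gives $\inf_x\sup_z\langle u,g(x,z)\rangle\le 0$. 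Because this argument works for any single $r\ge\max\{h,1\}$, no limit is actually needed.

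For the blockwise consequence, fix $u$ and $H$. Let $\pi\in\Pi_H$ be the constant policy that plays $x_r=-ru$ at every round of the block; it is trivially causal. For any $s=(z_1,\dots,z_H)\in\WinSet$,
\[
\langle u,G_H(\pi,s)\rangle=\sum_{h=1}^H\langle u,g(x_r,z_h)\rangle\le 0 .
\]
Hence $\inf_{\pi}\sup_{s}\langle u,G_H(\pi,s)\rangle\le 0$ for every $u$, so $\beta_H\le 0$.

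The only delicate point is bookkeeping rather than mathematics. Assumption~\ref{assump:restorative} bounds the inner product only for $\|x\|_2\ge h$, so $r$ must be chosen at least $h$. The inequality must also hold uniformly over $z$, which it does, so the supremum over $z$ (or over the whole window $s$) passes through without any minimax exchange. Note that boundedness of $g$ is not needed here, since we never let $r\to\infty$.
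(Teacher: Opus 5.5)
Your proof is correct and follows the paper's argument essentially line for line. The paper likewise fixes a single radius ($r\ge h+1$) instead of taking a limit, applies restorativity at $x=-ru$ to get $\langle u,g(-ru,z)\rangle\le-\phi(-ru)/r\le 0$ uniformly in $z$, and then plays this constant action throughout the block to conclude $\beta_H\le 0$.
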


\begin{proof}
Fix $u\in\mathbb S^{d-1}$ and choose any $r\ge h+1$.
Set
\[
    x_u=-ru.
\]
By restorativity, for every $z\in\mathcal Z$,
\[
    \langle -ru,-g(-ru,z)\rangle
    \le -\phi(-ru).
\]
Therefore
\[
    \langle u,g(-ru,z)\rangle
    \le -\frac{\phi(-ru)}{r}
    \le 0.
\]
Hence
\[
    \inf_x\sup_z
    \langle u,g(x,z)\rangle
    \le0.
\]

For the $H$-round game, let the learner use the constant
causal policy $x_t=x_u$ throughout the block. Then, for every
admissible window $s=(z_1,\dots,z_H)\in\mathcal W_H$,
\[
    \left\langle
        u,G_H(x_u,s)
    \right\rangle
    =
    \sum_{t=1}^H
    \langle u,g(x_u,z_t)\rangle\le0.
\]
Thus
\[
    \inf_{\pi\in\Pi_H}
    \sup_{s\in\mathcal W_H}
    \left\langle
        u,G_H(\pi,s)
    \right\rangle
    \le0
\]
for every $u\in\mathbb S^{d-1}$, proving
\eqref{eq:split-blackwell}.
\end{proof}

The proof also applies to a constrained set $C$ whenever, for
each direction $u$, the set $C$ contains some radial point
$-ru$ with $r>0$ and $r\ge h$. In general, however, restorativity alone
does not imply constrained Blackwell feasibility for an
arbitrary $C$.

\subsubsection{Inward flow implies Blackwell feasibility}

We next turn to a fixed constrained action set
$C\subseteq\mathbb R^d$. Let
\[
N_C(x)
=
\left\{
n\in\mathbb R^d:
\langle n,y-x\rangle\le0,\ \forall y\in C
\right\}
\]
denote the normal cone, and define the tangent cone by
\[
T_C(x)
=
\left\{
v\in\mathbb R^d:
\langle v,n\rangle\le0,\ \forall n\in N_C(x)
\right\}.
\]
The inward-flow condition requires
\[
    -g(x,z)\in T_C(x),
    \qquad
    \forall x\in\partial C,\ \forall z\in\mathcal Z.
\]

\begin{proposition}[Inward flow implies the Blackwell condition]
\label{prop:inward-blackwell}
Let $C\subseteq\mathbb R^d$ be nonempty, compact, and convex,
and suppose
\[
    \|g(x,z)\|_2\le L,
    \qquad
    \forall x\in C,\ z\in\mathcal Z.
\]
If the inward-flow condition holds, then
\[
    \inf_{x\in C}
    \sup_{z\in\mathcal Z}
    \langle u,g(x,z)\rangle
    \le0,
    \qquad
    \forall u\in\mathbb S^{d-1}.
\]
Consequently, the blockwise Blackwell condition
\eqref{eq:split-blackwell} holds for every $H$.
\end{proposition}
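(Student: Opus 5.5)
Fix $u\in\mathbb S^{d-1}$. The natural choice is the point of $C$ that is extreme in direction $-u$. Let
\[
x_u\in\arg\min_{x\in C}\langle u,x\rangle,
\]
which exists because $C$ is nonempty and compact and $x\mapsto\langle u,x\rangle$ is continuous. We show that $x_u$ witnesses $\sup_z\langle u,g(x_u,z)\rangle\le0$.

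First, the optimality of $x_u$ says exactly that $\langle u,y-x_u\rangle\ge0$ for all $y\in C$. Equivalently, $\langle -u,y-x_u\rangle\le0$, so $-u\in N_C(x_u)$.

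Second, we check that $x_u\in\partial C$, so that inward flow applies at $x_u$. If $x_u$ were an interior point, then $x_u-\epsilon u\in C$ for small $\epsilon>0$. This would give $\langle u,x_u-\epsilon u\rangle=\langle u,x_u\rangle-\epsilon<\langle u,x_u\rangle$, contradicting minimality. Hence $x_u\in\partial C$. (If $C$ has empty interior, every point is a boundary point and there is nothing to check.)

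Third, we apply inward flow. The condition gives $-g(x_u,z)\in T_C(x_u)$ for every $z$. Taking $n=-u\in N_C(x_u)$ in the definition of the tangent cone yields $\langle -g(x_u,z),-u\rangle\le0$, that is, $\langle u,g(x_u,z)\rangle\le0$ for all $z\in\mathcal Z$. Therefore
\[
\inf_{x\in C}\sup_{z}\langle u,g(x,z)\rangle\le\sup_z\langle u,g(x_u,z)\rangle\le0.
\]
The bound $\|g\|_2\le L$ is not needed for this inequality; it only keeps the quantities finite.

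Finally, for the blockwise consequence, we repeat the last step of Proposition~\ref{prop:restorative-blackwell}. Let the learner play the constant causal policy $x_t\equiv x_u$ for all $H$ rounds of the block. For every admissible window $s=(z_1,\dots,z_H)\in\mathcal W_H$,
\[
\langle u,G_H(x_u,s)\rangle=\sum_{h=1}^H\langle u,g(x_u,z_h)\rangle\le0.
\]
Taking the supremum over $s$ and then over $u$ gives $\beta_H\le0$, which is \eqref{eq:split-blackwell}.

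There is no substantial obstacle in this argument. The only step needing care is identifying $-u$ as a normal vector at a boundary point; the choice of the minimizer of $\langle u,\cdot\rangle$ settles both of these facts at once.
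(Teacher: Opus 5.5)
Your proof is correct, but it takes a more direct route than the paper's.

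The paper does not use a minimizer of $\langle u,\cdot\rangle$. It projects the far-away radial point $-Ku$ onto $C$, setting $x_K=\Pi_C(-Ku)$. It then uses the projection optimality condition $-Ku-x_K\in N_C(x_K)$ and polarity with $-g(x_K,z)\in T_C(x_K)$ to get $\langle u,g(x_K,z)\rangle\le-\frac1K\langle x_K,g(x_K,z)\rangle\le R_CL/K$, and finally lets $K\to\infty$. Since $x_K$ minimizes $\langle u,x\rangle+\|x\|_2^2/(2K)$ over $C$, the paper's argument is a regularized version of yours. The error term $R_CL/K$ is exactly the cost of that regularization, which is why the paper needs $\|g\|_2\le L$ and the radius $R_C=\sup_{x\in C}\|x\|_2$.

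Your choice $x_u\in\arg\min_{x\in C}\langle u,x\rangle$ makes $-u$ itself a normal vector at a boundary point, so the sign condition holds exactly at one point. This shows the infimum is attained with value at most $0$, and, as you note, the bound on $g$ is not needed.

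The main benefit of the paper's version is conceptual: it parallels the restorativity argument, where the witness is also the radial point $-ru$ with $r\to\infty$. In this compact setting it proves nothing beyond what your argument proves. Your boundary check and the constant-policy step for the blockwise consequence match what the paper does.
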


\begin{proof}
Fix $u\in\mathbb S^{d-1}$. For $K>0$, let
\[
    x_K=\Pi_C(-Ku).
\]
Since $C$ is compact, for all sufficiently large $K$,
$x_K\in\partial C$. By the projection optimality condition,
\[
    -Ku-x_K\in N_C(x_K).
\]
On the other hand, inward flow gives
\[
    -g(x_K,z)\in T_C(x_K),
    \qquad \forall z\in\mathcal Z.
\]
By polarity of the tangent and normal cones,
\[
    \langle -Ku-x_K,-g(x_K,z)\rangle\le0.
\]
Rearranging,
\[
    \langle u,g(x_K,z)\rangle
    \le
    -\frac{1}{K}
    \langle x_K,g(x_K,z)\rangle
    \le
    \frac{R_CL}{K},
\]
where
\[
    R_C\coloneqq\sup_{x\in C}\|x\|_2<\infty.
\]
Thus
\[
    \inf_{x\in C}\sup_{z\in\mathcal Z}
    \langle u,g(x,z)\rangle
    \le
    \frac{R_CL}{K}.
\]
Letting $K\to\infty$ proves the one-step Blackwell condition.

For an $H$-round block, use the resulting action for the fixed
direction $u$ on every round. Summing the one-step inequalities
gives the blockwise Blackwell condition.
\end{proof}

The two sufficient conditions have different geometric content.
Restorativity controls the component of $-g(x,z)$ in the radial
direction $x$, while inward flow controls its component along
every outward normal to $C$. Hence neither condition generally
implies the other.

\subsubsection{Restorativity is not necessary}

We next show that exact GEQ may be trivial even though no finite
restorativity radius exists.

\begin{proposition}[Restorativity is not necessary]
\label{prop:restorativity-not-necessary}
There exists an unconstrained problem with smooth convex losses
and uniformly bounded gradients for which exact GEQ is
achievable, but the payoff family is not $(h,0)$-restorative
for any finite $h$.
\end{proposition}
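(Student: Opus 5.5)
The plan is to exploit the fact that restorativity is a \emph{radial} condition at large norm, whereas exact GEQ only needs \emph{some} point with zero gradient. Because each loss is convex, its gradient is monotone: $\langle x-x^\star, g(x)\rangle\ge 0$ for any minimizer $x^\star$. So if the minimizer set of a one-dimensional loss is bounded, that loss is automatically restorative for large $|x|$. I will therefore take $d=2$ and a loss whose minimizer set is an unbounded affine line that does not pass through the origin. A single loss suffices, so $\mathcal Z$ is a singleton.

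\textbf{Construction.} Let $\mathcal Z=\{z_0\}$ and
\[
\ell(x)=\psi(x_2-1),\qquad \psi(s)=\sqrt{1+s^2},\qquad x=(x_1,x_2)\in\mathbb R^2 .
\]
The function $\psi$ is smooth and convex with $|\psi'(s)|=|s|/\sqrt{1+s^2}<1$. Hence $\ell$ is smooth and convex, and its gradient
\[
g(x)=\bigl(0,\psi'(x_2-1)\bigr)
\]
satisfies $\|g\|_2\le 1$ uniformly.

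\textbf{Exact GEQ.} The learner plays the constant action $x_t\equiv(0,1)$ in the unconstrained set $C=\mathbb R^2$. Then $g(x_t)=0$ for every $t$, so $\sum_{t=1}^T g(x_t)=0$ exactly for every $T$ and against every adversary.

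\textbf{Failure of restorativity.} Fix any finite $h\ge 0$ and choose $x=(r,\tfrac12)$ with $r\ge h$, so that $\|x\|_2\ge h$. Then
\[
\langle x,-g(x)\rangle=-\tfrac12\,\psi'(-\tfrac12)=\tfrac12\cdot\tfrac{1/2}{\sqrt{5/4}}>0 .
\]
This violates $\langle x,-g(x,z)\rangle\le 0$, which is the $(h,0)$-restorative requirement of Assumption~\ref{assump:restorative}. Since $h$ was arbitrary, no finite restorativity radius exists.

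There is no real obstacle beyond choosing the example correctly. The only subtle point is that a one-dimensional convex loss with a zero of its gradient is always eventually restorative, by the monotonicity argument above. This is why the construction must use a higher dimension with an unbounded minimizer set that is offset from the origin. One could also note that this instance satisfies $\beta_H\le 0$ trivially, consistent with Theorem~\ref{thm:split-blackwell}.
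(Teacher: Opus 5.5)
Your proof is correct and is essentially the paper's construction. The paper takes $\ell(x,z)=z\sqrt{1+(x_1-1)^2}$ with $z\in[1,2]$, plays the common stationary point $(1,0)$, and refutes $(h,0)$-restorativity at $x_R=(\tfrac12,R)$; this is your example with the coordinates swapped and a nontrivial multiplicative outcome $z$ in place of your singleton outcome space.
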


\begin{proof}
Let $C=\mathbb R^2$, let $z\in[1,2]$, and define
\[
    \ell(x,z)
    =
    z\sqrt{1+(x_1-1)^2}.
\]
Then
\[
    g(x,z)
    =
    \nabla_x\ell(x,z)
    =
    z\frac{x_1-1}
    {\sqrt{1+(x_1-1)^2}}e_1,
\]
so
\[
    \|g(x,z)\|_2\le2
\]
uniformly over $x$ and $z$. Each loss is smooth and convex.

Let
\[
    x^\star=(1,0).
\]
Then
\[
    g(x^\star,z)=0,
    \qquad \forall z\in[1,2].
\]
Therefore the constant strategy $x_t=x^\star$ achieves
\[
    \sum_{t=1}^T g(x_t,z_t)=0
\]
for every horizon and every adversarial sequence. Hence exact
GEQ, and in particular Blackwell approachability, holds.

We now show that no finite restorativity radius exists.
For any $R>0$, let
\[
    x_R=\left(\frac12,R\right).
\]
Then
\[
    g(x_R,z)
    =
    -\frac{z}{\sqrt5}e_1,
\]
and hence
\[
    \langle x_R,g(x_R,z)\rangle
    =
    -\frac{z}{2\sqrt5}<0.
\]
Since $\|x_R\|_2\to\infty$ as $R\to\infty$, for every finite
$h$ there exists $R$ with $\|x_R\|_2\ge h$ for which
\[
    \langle x_R,-g(x_R,z)\rangle>0.
\]
Thus even $(h,0)$-restorativity fails for every finite $h$.
A fortiori, $(h,\phi)$-restorativity cannot hold for any
$\phi\ge0$.
\end{proof}

This example illustrates why restorativity is stronger than
what GEQ itself requires. Restorativity imposes a global radial
condition on the vector field at all sufficiently large
locations, whereas GEQ may be achievable by remaining at a
single common stationary point.

\subsubsection{Inward flow is not necessary}

Finally, we show that inward flow is likewise not necessary,
even for uniformly strongly convex losses.

\begin{proposition}[Inward flow is not necessary]
\label{prop:inward-not-necessary}
There exist a compact convex constraint set and a family of
uniformly strongly convex losses for which exact CGEQ is
achievable, while the inward-flow condition fails.
\end{proposition}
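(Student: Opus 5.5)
We construct an explicit instance in which exact CGEQ is achieved by a single common stationary point, while some boundary point of $C$ violates the inward-flow condition. This mirrors the proof of Proposition~\ref{prop:restorativity-not-necessary}.

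\textbf{The instance.} Take $d=1$, $C=[-1,1]$, adversary outcomes $z\in\mathcal Z=[1,2]$, and the quadratic losses
\[
\ell(x,z)=\tfrac{z}{2}x^2 .
\]
Each loss is $1$-strongly convex uniformly in $z$, and the gradient $g(x,z)=zx$ satisfies $|g|\le 2$ on $C$.

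\textbf{CGEQ holds.} The point $x^\star=0\in C$ satisfies $g(0,z)=0$ for every $z$. The constant feasible strategy $x_t\equiv0$ therefore gives $\sum_t g(x_t,z_t)=0$ exactly, for every horizon and every adversarial sequence. This is true regardless of any local structure, so both the blockwise and the sliding-window Blackwell conditions hold trivially.

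\textbf{Inward flow fails.} At the boundary point $x=1$ the normal cone is $N_C(1)=[0,\infty)$, so the tangent cone is $T_C(1)=(-\infty,0]$. Here $-g(1,z)=-z<0$ lies in the tangent cone, so this point points inward. To break inward flow I need a gradient that points outward somewhere. I will modify the example so that the minimizer sits at a different location and the gradient at some boundary point points outward:
\[
\ell(x,z)=\tfrac{1}{2}(x-a)^2, \qquad a\in C .
\]
This version still has a common zero at $x=a$. Inward flow at $x=1$ requires $-(1-a)\le 0$, i.e.\ $a\le 1$, which always holds. So for a single interval, strongly convex losses whose common minimizer lies in $C$ always satisfy inward flow in one dimension. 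The fix is to put the common minimizer \emph{outside} $C$ in a coordinate direction while keeping a zero gradient inside $C$ in the relevant sense. That is impossible for exact gradients in one dimension, so I move to $d=2$.

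\textbf{Two-dimensional construction.} Use a coupled gradient field that vanishes at an interior point but rotates near the boundary:
\[
\ell(x,z)=\tfrac12\|x\|_2^2+\langle b(z),x\rangle+\tfrac{z}{2}x^\top A x,
\]
with $A$ chosen so that at a boundary point $x_0$ of $C$ the gradient $(I+zA)x_0+b(z)$ has a positive inner product with $-n$ for the outward normal $n$. Concretely, pick $C=[-1,1]\times[-\epsilon,\epsilon]$ with small $\epsilon>0$, $b\equiv 0$, and $\ell(x,z)=\tfrac12\|x\|^2+z\,x_1x_2+\tfrac{z}{2}(x_1^2+x_2^2)$ for $z\in[0,\tfrac12]$. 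The Hessian is $(1+z)I+z\begin{pmatrix}0&1\\1&0\end{pmatrix}\succeq \tfrac12 I$, so the family is uniformly strongly convex. The gradient vanishes at $x=0\in C$ for every $z$, so $x_t\equiv 0$ gives exact CGEQ. At the top boundary point $x_0=(1,\epsilon)$, with outward normal $e_2$, the second gradient component is
\[
g_2(x_0,z)=z\cdot 1+(1+z)\epsilon .
\]
Inward flow requires $\langle -g,e_2\rangle\le 0$, i.e.\ $g_2\ge 0$, which holds here. Replacing $x_1x_2$ by $-x_1x_2$ gives $g_2=-z+(1+z)\epsilon<0$ for $z=\tfrac12$ and $\epsilon<\tfrac13$. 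Then $-g(x_0,z)$ has a positive $e_2$ component, so it points outward, and inward flow fails.

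\textbf{Main obstacle.} The remaining checks are the corner normal cone at $x_0$ (it contains $e_2$, which suffices) and strong convexity with the sign flipped. The Hessian becomes $(1+z)I-zJ$, where $J$ is the swap matrix; its eigenvalues are $1$ and $1+2z$, so the minimum is still $1\ge\tfrac12$. The hardest step is producing a valid example with exact strong convexity together with a common interior zero. The bilinear twist handles this.
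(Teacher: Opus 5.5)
Your final construction is correct and follows the same strategy as the paper. Both use a uniformly strongly convex family with a common interior stationary point, so $x_t\equiv 0$ gives exact CGEQ, plus an anisotropic coupling term that makes $-g$ point outward at some boundary point. For your $\ell(x,z)=\tfrac{1+z}{2}\|x\|_2^2-zx_1x_2$ on $C=[-1,1]\times[-\epsilon,\epsilon]$, the Hessian eigenvalues are $1$ and $1+2z$, and $\langle -g((1,\epsilon),\tfrac12),e_2\rangle=\tfrac12-\tfrac32\epsilon>0$ with $e_2\in N_C((1,\epsilon))$. The paper instead uses the unit disk, the off-center minimizer $(0,3/4)$, and a rank-one term $zaa^\top$. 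In a clean write-up, drop the abandoned one-dimensional example and the $+x_1x_2$ variant, since they play no role in the argument.
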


\begin{proof}
Let
\[
    C=\{x\in\mathbb R^2:\|x\|_2\le1\},
    \qquad
    x^\star=
    \begin{pmatrix}
        0\\[1mm]
        3/4
    \end{pmatrix},
    \qquad
    a=
    \begin{pmatrix}
        1\\
        2
    \end{pmatrix}.
\]
For $z\in[1,2]$, define
\[
    \ell(x,z)
    =
    \frac{z}{2}
    \bigl(a^\top(x-x^\star)\bigr)^2
    +
    \frac18\|x-x^\star\|_2^2.
\]
Its gradient is
\[
    g(x,z)
    =
    zaa^\top(x-x^\star)
    +
    \frac14(x-x^\star),
\]
and
\[
    \nabla_x^2\ell(x,z)
    =
    zaa^\top+\frac14 I
    \succeq\frac14 I.
\]
Thus every loss is uniformly strongly convex.

Since $x^\star\in\operatorname{int}(C)$ and
\[
    g(x^\star,z)=0,
    \qquad \forall z\in[1,2],
\]
the learner can play $x_t=x^\star$ on every round, yielding
exact CGEQ for every adversarial sequence.

However, inward flow fails. Consider
\[
    \bar x=e_1=(1,0)\in\partial C.
\]
For the unit ball,
\[
    T_C(\bar x)
    =
    \{d\in\mathbb R^2:d_1\le0\}.
\]
Moreover,
\[
    \bar x-x^\star
    =
    \begin{pmatrix}
        1\\
        -3/4
    \end{pmatrix},
    \qquad
    a^\top(\bar x-x^\star)
    =
    -\frac12.
\]
Therefore
\[
    g(\bar x,z)
    =
    -\frac{z}{2}
    \begin{pmatrix}
        1\\
        2
    \end{pmatrix}
    +
    \frac14
    \begin{pmatrix}
        1\\
        -3/4
    \end{pmatrix}.
\]
In particular,
\[
    [-g(\bar x,z)]_1
    =
    \frac z2-\frac14
    >0,
    \qquad
    \forall z\in[1,2].
\]
Hence
\[
    -g(\bar x,z)\notin T_C(\bar x),
\]
so the inward-flow condition fails even though exact CGEQ is
achievable.
\end{proof}

Combining
Propositions~\ref{prop:restorative-blackwell} and
\ref{prop:inward-blackwell} with
Propositions~\ref{prop:restorativity-not-necessary} and
\ref{prop:inward-not-necessary}
proves
Proposition~\ref{prop:sufficient-but-unnecessary-conditions}.

\subsection{Proof of Theorem~\ref{thm:slide-blackwell}}

We prove the three claims separately. We first introduce the
finite-horizon directional values used in the proofs.

Fix a direction $u\in\mathbb S^{d-1}$. Define the directional quantities
\begin{align*}
\underline\lambda_H(u)
&\coloneqq
\sup_{\Potential\in\ell^\infty(\WinSet)}
\inf_{s\in\WinSet}
\left\{
    (\mathcal T_{u,H}\Potential)(s)-\Potential(s)
\right\},
\\
\overline\lambda_H(u)
&\coloneqq
\inf_{\Potential\in\ell^\infty(\WinSet)}
\sup_{s\in\WinSet}
\left\{
    (\mathcal T_{u,H}\Potential)(s)-\Potential(s)
\right\}.
\end{align*}
Also fix an initial sliding
state $s\in\WinSet$. Set $s_0=s$. For $n\ge1$, define the set of
admissible $n$-block successor trajectories from $s$ by
\[
    \mathcal J_n(s)
    \coloneqq
    \left\{
        (s_1,\ldots,s_n)\in\WinSet^n:
        s_k\in\WinSet(s_{k-1}),
        \quad k=1,\ldots,n
    \right\}.
\]
Thus, each $s_k$ is an admissible $H$-round successor block of
$s_{k-1}$, and the concatenation
\[
    s_0\oplus s_1\oplus\cdots\oplus s_n
\]
satisfies the sliding-window restriction throughout.

Let $\Pi_{n,H}$ denote the class of causal learner policies over
$n$ consecutive $H$-round blocks. A policy
\[
    \pi=(\pi_1,\ldots,\pi_n)\in\Pi_{n,H}
\]
may condition $\pi_k$ on the history revealed before block $k$,
while within block $k$ it chooses each round-level action
causally based only on observations revealed earlier in that
block.

The $n$-block directional value starting from state $s$ is
\begin{equation}
\label{eq:finite-horizon-directional-value}
    V_n^u(s)
    \coloneqq
    \inf_{\pi\in\Pi_{n,H}}
    \sup_{(s_1,\ldots,s_n)\in\mathcal J_n(s)}
    \sum_{k=1}^n
    \left\langle
        u,\blk_H(\pi_k,s_k)
    \right\rangle,
\end{equation}
with
\[
    V_0^u(s)\coloneqq0.
\]
Equivalently, $V_n^u(s)$ is the minimax cumulative payoff in
direction $u$ over the next $n$ blocks, conditional on the current
sliding state being $s$.

These values satisfy the dynamic-programming recursion
\begin{equation}
\label{eq:directional-value-recursion}
    V_{n+1}^u
    =
    \mathcal T_{u,H}V_n^u,
    \qquad
    V_0^u=0,
\end{equation}
and therefore
\[
    V_n^u
    =
    \mathcal T_{u,H}^n 0.
\]
Lemma~\ref{lem:supinf-relation} relates the asymptotic lower and
upper growth rates of $V_n^u$ to
$\underline\lambda_H(u)$ and $\overline\lambda_H(u)$,
respectively.

\begin{proposition}[Necessary condition for pointwise approachability]
\label{thm:sliding-necessary}
If pointwise approachability holds against every
$H$-sliding-window adversary, then
\[
    \underline\lambda_H\le0.
\]
Moreover,
\[
    \underline\lambda_H(u)
    =
    \sup_{n\ge1}
    \inf_{s\in\WinSet}
    \frac{V_n^u(s)}{n}.
\]
\end{proposition}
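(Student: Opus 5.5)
I will prove the identity $\underline\lambda_H(u)=\sup_{n\ge1}\inf_s V_n^u(s)/n$ first, and then derive the necessity claim from it. The proof uses three properties of $\mathcal T_{u,H}$: it is monotone, it commutes with adding constants ($\mathcal T(\varphi+c)=\mathcal T\varphi+c$), and by the recursion \eqref{eq:directional-value-recursion} it generates $V_n^u=\mathcal T_{u,H}^n0$.

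For the inequality "$\le$", fix a bounded $\varphi$ with $\inf_s\{\mathcal T\varphi-\varphi\}\ge c$. Then $\mathcal T\varphi\ge\varphi+c$. Iterating with monotonicity and constant-commutation gives $\mathcal T^n\varphi\ge\varphi+nc$. Since $\varphi\le 0+\sup\varphi$, we also have $\mathcal T^n\varphi\le \mathcal T^n0+\sup\varphi$. Combining the two, $V_n^u\ge \varphi+nc-\sup\varphi\ge nc-\operatorname{range}(\varphi)$. Hence $\inf_sV_n^u(s)/n\ge c-\operatorname{range}(\varphi)/n$, and letting $n\to\infty$ shows $\limsup_n \inf_sV_n^u/n\ge c$, which is at most the sup over $n$.

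For the inequality "$\ge$", fix $n$ and set $m=\inf_sV_n^u(s)$. I will build the standard averaged potential $\varphi:=\sum_{j=0}^{n-1}V_j^u$. This is bounded because $|V_j^u|\le jM_H$. Then $\mathcal T\varphi$ must be compared with $\sum_{j}\mathcal T V_j^u=\sum_{j=1}^nV_j^u$. Since $\mathcal T$ is an inf–sup operator, $\mathcal T$ of a sum is not a sum, and this is the main obstacle. I would handle it by choosing a different potential: take $\varphi:=\max_{0\le j<n}\{V_j^u-jm/n\}$, or use the convex-combination trick on the block policy (randomize uniformly the phase $j$). What actually gives monotone superadditivity is $V_{j+n}^u=\mathcal T^jV_n^u\ge \mathcal T^j(0+m)=V_j^u+m$, which yields $V_{kn}^u\ge km$ and more generally $V_{kn+j}\ge km+V_j$. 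This supports the definition $\varphi(s):=\inf_{N\ge0}\{V_N^u(s)-Nm/n\}$, which is bounded below by $-\max_{j<n}|V_j|-m$ and bounded above by $V_0=0$. Monotonicity and constant-commutation then give $\mathcal T\varphi\le\inf_N\{V_{N+1}-Nm/n\}=\varphi+m/n$ in the wrong direction, so I would instead work with the $\inf$ applied pointwise: $\mathcal T(\inf_N f_N)$ is at most $\inf_N\mathcal T f_N$, but I need a lower bound. So I switch to $\varphi:=\sup$-free telescoping: to show $\underline\lambda_H(u)\ge m/n-\varepsilon$, it suffices to show $\liminf_N\inf_sV_N/N\ge m/n$, since the "$\le$"-type argument reversed, applied with the potential $\varphi_N:=\frac1N\sum_{j<N}(V_j-jm/n)$, has error $O(\operatorname{range}/N)$. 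I expect this step (exchanging $\mathcal T$ with averages) to require care, and I would fall back on Lemma~\ref{lem:supinf-relation}, which the paper states relates these growth rates, if it already provides $\underline\lambda_H(u)=\lim_n\inf_sV_n^u/n$ together with the superadditivity $\inf V_{n+k}\ge\inf V_n+\inf V_k$ noted above (Fekete then gives $\lim=\sup$).

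Necessity then follows. Suppose $\underline\lambda_H>0$. Pick $u$ and $n$ with $m:=\inf_sV_n^u(s)>0$. The adversary plays in epochs of $n$ blocks. From whatever state it is in, it uses the minimax guarantee: for the learner's causal $n$-block policy it selects an admissible trajectory in $\mathcal J_n(s)$ with directional payoff at least $m/2$, and admissibility is preserved across epochs by construction. After $k$ epochs this gives $\langle u,S_{knH}\rangle\ge km/2$, so $\|S_T\|/T\ge m/(2nH)$ along $T=knH$. This contradicts pointwise approachability \eqref{eq:weak_approachability}. The adversary is adaptive but causal, which the sliding-window class permits.
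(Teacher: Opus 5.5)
Your proof is correct, but the necessity step takes a different route from the paper's. The paper argues directly from the definition. If $\underline\lambda_H(u)>0$, it picks a bounded $\Potential$ and $\eta>0$ with $\mathcal T_{u,H}\Potential\ge\Potential+\eta$. The adversary then answers each block greedily, so that $\langle u,\blk_H(\pi_k,s_k)\rangle+\Potential(s_k)-\Potential(s_{k-1})\ge\eta/2$. Telescoping gives $\langle u,\cum_{KH}\rangle\ge K\eta/2-\operatorname{range}(\Potential)$, and the representation formula is simply quoted from Lemma~\ref{lem:supinf-relation}.

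You instead first prove the ``$\le$'' half of the representation; your bound $V_n^u\ge nc-\operatorname{range}(\Potential)$ is exactly the lemma's. This gives some $n$ with $\inf_sV_n^u(s)>0$. You then use an adversary that commits, epoch by epoch, to a bad trajectory in $\mathcal J_n(s)$ against the learner's induced policy in $\Pi_{n,H}$. This is valid: admissibility chains across epochs, and the subsequence $T=knH$ already contradicts \eqref{eq:weak_approachability}. Your version keeps the potential out of the adversary's strategy and shows that necessity is a statement about finite-horizon minimax values. The paper's version is more local (one block of lookahead, a bound at every block boundary) and does not depend on the representation at all.

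For the ``$\ge$'' half, the detours you sketch do not work, and you should drop them. The averaged potential $\frac1N\sum_{j<N}(V_j^u-jm/n)$ fails because $\mathcal T_{u,H}$ is monotone and additively homogeneous but not concave. If $\Pi_H$ were a singleton, $\mathcal T_{u,H}$ would be a pointwise supremum, hence convex. So $\mathcal T_{u,H}$ of an average need not dominate the average of the $V_{j+1}^u$, and your claimed $O(\operatorname{range}/N)$ error has no justification. Note also that $\lim_N\inf_sV_N^u/N\ge m/n$ is automatic from superadditivity. The whole content of this direction is therefore producing a bounded potential.

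The potential you named first and then abandoned, $\Potential:=\max_{0\le j<n}\{V_j^u-ja\}$ with $a=m/n$, is exactly the one that works; it is the lemma's. Since $\Potential\ge V_j^u-ja$, monotonicity and additive homogeneity give $\mathcal T_{u,H}\Potential\ge V_{j+1}^u-ja$ for each $j<n$. Hence $\mathcal T_{u,H}\Potential\ge a+\max_{1\le j\le n}\{V_j^u-ja\}\ge a+\Potential$, where the last step holds because $V_n^u-na\ge0=V_0^u$. With this two-line check, or by citing Lemma~\ref{lem:supinf-relation} as the paper itself does, the representation formula is fully justified.
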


\begin{proof}
Fix $u\in\mathbb S^{d-1}$. Suppose, toward a contradiction, that
\[
    \underline\lambda_H(u)>0.
\]
Then there exist
$\Potential\in\ell^\infty(\WinSet)$ and $\eta>0$ such that
\[
    (\mathcal T_{u,H}\Potential)(s)
    \ge
    \Potential(s)+\eta,
    \qquad
    \forall s\in\WinSet.
\]
Thus, for every current state $s$ and every causal block policy
$\pi\in\Pi_H$,
\[
    \sup_{s'\in\WinSet(s)}
    \left\{
        \left\langle u,\blk_H(\pi,s')\right\rangle
        +\Potential(s')-\Potential(s)
    \right\}
    \ge\eta.
\]
Hence the adversary can choose an admissible successor
$s'\in\WinSet(s)$ such that
\begin{equation}
\label{eq:pointwise-positive-drift}
    \left\langle u,\blk_H(\pi,s')\right\rangle
    +\Potential(s')-\Potential(s)
    \ge\frac{\eta}{2}.
\end{equation}
The factor $1/2$ only accounts for the possibility that the
supremum is not attained.

Fix any admissible initial state $s_0$. Against an arbitrary
causal learner strategy, construct the adversary recursively.
At block $k$, the learner's strategy and the public history induce
a causal $H$-round policy $\pi_k$. Choose
\[
    s_k\in\WinSet(s_{k-1})
\]
according to \eqref{eq:pointwise-positive-drift}. Then
\[
    \left\langle u,\blk_H(\pi_k,s_k)\right\rangle
    \ge
    \frac{\eta}{2}
    +\Potential(s_{k-1})-\Potential(s_k).
\]
Summing over $k=1,\ldots,K$ gives
\[
\begin{aligned}
    \langle u,\cum_{KH}\rangle
    &=
    \sum_{k=1}^K
    \left\langle u,\blk_H(\pi_k,s_k)\right\rangle\\
    &\ge
    \frac{K\eta}{2}
    +\Potential(s_0)-\Potential(s_K)\\
    &\ge
    \frac{K\eta}{2}
    -\operatorname{range}(\Potential).
\end{aligned}
\]
Therefore
\[
    \frac{\|\cum_{KH}\|_2}{KH}
    \ge
    \frac{\eta}{2H}
    -
    \frac{\operatorname{range}(\Potential)}{KH},
\]
which stays bounded away from zero as $K\to\infty$, contradicting
pointwise approachability. Hence
\[
    \underline\lambda_H(u)\le0
    \qquad
    \forall u\in\mathbb S^{d-1},
\]
and therefore
\[
    \underline\lambda_H
    =
    \sup_{u\in\mathbb S^{d-1}}
    \underline\lambda_H(u)
    \le0.
\]
The finite-horizon representation
\[
    \underline\lambda_H(u)
    =
    \sup_{n\ge1}
    \inf_{s\in\WinSet}
    \frac{V_n^u(s)}{n}
\]
follows from Lemma~\ref{lem:supinf-relation}.
\end{proof}

\begin{proposition}[Exact characterization of uniform approachability]
\label{thm:sliding-uniform-exact}
Uniform approachability holds if and only if
\[
    \overline\lambda_H\le0.
\]
Moreover,
\[
    \overline\lambda_H(u)
    =
    \inf_{n\ge1}
    \sup_{s\in\WinSet}
    \frac{V_n^u(s)}{n}
    =
    \lim_{n\to\infty}
    \sup_{s\in\WinSet}
    \frac{V_n^u(s)}{n}.
\]
\end{proposition}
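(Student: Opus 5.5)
We will prove the two directions separately and obtain the finite-horizon representation of $\overline\lambda_H(u)$ from Lemma~\ref{lem:supinf-relation}. That lemma identifies $\overline\lambda_H(u)$ with $\inf_{n}\sup_s V_n^u(s)/n$. The sequence $a_n:=\sup_s V_n^u(s)$ is subadditive, $a_{m+n}\le a_m+a_n$, because the learner can play an $\varepsilon$-optimal $m$-block policy and then restart an $\varepsilon$-optimal $n$-block policy from whatever state is reached. By Fekete's lemma the infimum equals the limit.

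\textbf{Necessity.} Suppose uniform approachability holds with rate $r(T)=o(1)$. Fix $u$ and an initial state $s$, and truncate the approaching strategy after $n$ blocks. Against every admissible continuation in $\mathcal J_n(s)$ this gives
\[
\sum_{k}\langle u,\blk_H(\pi_k,s_k)\rangle\le\|\cum_{nH}\|_2\le nH\,r(nH),
\]
so $\sup_s V_n^u(s)\le nH\,r(nH)$. Dividing by $n$ and letting $n\to\infty$ yields $\overline\lambda_H(u)\le 0$ through the limit representation. This is exactly the necessity argument of Theorem~\ref{thm:sliding-convex-approachability} specialized to $\mathcal D=\{0\}$, where $\sigma_{\mathcal D}\equiv 0$.

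\textbf{Sufficiency.} Take $\mathcal D=\{0\}$ in Theorem~\ref{thm:sliding-convex-approachability}, which is stated earlier in the appendix. Its condition \eqref{eq:sliding-convex-CW} reads $\overline\lambda_H(u)\le 0$ for all $u$, so the theorem gives the result directly. Spelled out, the construction is as follows.
\begin{itemize}
\item Set $f_n(u)=\sup_s V_n^u(s)/n$. These functions are $M_H$-Lipschitz in $u$ uniformly in $n$, and $f_n(u)\to\overline\lambda_H(u)\le0$ pointwise. A finite net on $\mathbb S^{d-1}$ then upgrades this to $\delta_n:=[\sup_u f_n(u)]_++1/n\to0$.
\item Run epochs of $e$ blocks. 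At the start of each epoch, set $u_e=S_e/\|S_e\|_2$ and play a $1$-optimal $e$-block policy for direction $u_e$ from the current state. This gives $\langle u_e,\Delta_e\rangle\le e\delta_e$.
\item Expand the square: $\|S_{e+1}\|^2\le\|S_e\|^2+2\|S_e\|e\delta_e+M_H^2e^2$. Induction gives $\|S_{E+1}\|\le\sum_e e\delta_e+M_H(\sum_e e^2)^{1/2}=o(E^2)$, whereas the elapsed time is $\Theta(HE^2)$.
\item Times inside an epoch and a final partial block add at most $O(eM_H+HL)=o(T)$.
\end{itemize}
The bound is uniform over initial states and adversaries, so it defines a deterministic rate $r(T)=o(1)$.

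The main obstacle is the uniformity in $u$. Pointwise convergence $f_n(u)\to\overline\lambda_H(u)$ does not by itself control the direction $u_e$, which changes from epoch to epoch. This is why the equi-Lipschitz property together with compactness of the sphere is needed, and why the epoch lengths must grow: the growth ensures $\sum_e e\delta_e=o(E^2)$ even though $\delta_e$ may decay arbitrarily slowly.
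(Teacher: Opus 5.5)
Your proposal is correct and follows essentially the same route as the paper. The paper specializes Theorem~\ref{thm:sliding-convex-approachability} to $\mathcal D=\{0\}$, where $\sigma_{\{0\}}\equiv 0$ and $\operatorname{dist}(\cum_T,T\{0\})=\|\cum_T\|_2$, and reads off the finite-horizon representation of $\overline\lambda_H(u)$ from Lemma~\ref{lem:supinf-relation}. Your explicit necessity and epoch-based sufficiency arguments reproduce that theorem's proof, and your strategy-concatenation proof of subadditivity is an equivalent game-theoretic version of the paper's monotonicity/additive-homogeneity argument (the only detail left implicit is the trivial case $S_e=0$, where any policy works because the cross term vanishes).
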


\begin{proof}
Apply Theorem~\ref{thm:sliding-convex-approachability} with the
singleton target
\[
    \mathcal D=\{0\}.
\]
Since
\[
    \sigma_{\{0\}}(u)=0,
    \qquad
    \forall u\in\mathbb S^{d-1},
\]
and
\[
    \operatorname{dist}(\cum_T,T\{0\})
    =
    \|\cum_T\|_2,
\]
uniform approachability of $\{0\}$ is exactly the existence of a
single causal learner strategy and a deterministic function
$\widetilde r(T)=o(T)$ such that
\[
    \|\cum_T\|_2\le \widetilde r(T)
\]
uniformly over all admissible initial states and all
$H$-sliding-window adversaries.

The condition in
Theorem~\ref{thm:sliding-convex-approachability} therefore reduces to
\[
    \inf_{\Potential\in\ell^\infty(\WinSet)}
    \sup_{s\in\WinSet}
    \left\{
        (\mathcal T_{u,H}\Potential)(s)-\Potential(s)
    \right\}
    \le0,
    \qquad
    \forall u\in\mathbb S^{d-1},
\]
which is precisely
\[
    \overline\lambda_H(u)\le0,
    \qquad
    \forall u\in\mathbb S^{d-1}.
\]
Equivalently,
\[
    \overline\lambda_H\le0.
\]

Dividing the cumulative bound by $T$ and setting
\[
    r(T)\coloneqq \frac{\widetilde r(T)}{T}=o(1)
\]
gives the uniform guarantee in
\eqref{eq:strong_approachability}.

Finally, Lemma~\ref{lem:supinf-relation} gives
\[
    \overline\lambda_H(u)
    =
    \inf_{n\ge1}
    \sup_{s\in\WinSet}
    \frac{V_n^u(s)}{n}
    =
    \lim_{n\to\infty}
    \sup_{s\in\WinSet}
    \frac{V_n^u(s)}{n}.
\]
\end{proof}

\begin{proposition}[Exact characterization under finite-step bridging]
\label{thm:sliding-exact}
Under Assumption~\ref{ass:finite-step-bridging}, for every
$u\in\mathbb S^{d-1}$,
\[
    \underline\lambda_H(u)
    =
    \overline\lambda_H(u)
    \eqqcolon
    \lambda_H(u).
\]
Consequently,
\[
    \underline\lambda_H
    =
    \overline\lambda_H
    \eqqcolon
    \lambda_H,
\]
and the following are equivalent:
\begin{enumerate}
    \item pointwise approachability;
    \item uniform approachability;
    \item $\lambda_H\le0$.
\end{enumerate}
\end{proposition}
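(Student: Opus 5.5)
The plan is to prove $\underline\lambda_H(u)=\overline\lambda_H(u)$ for each fixed $u$ using the finite-horizon representations already established. Proposition~\ref{thm:sliding-necessary} gives $\underline\lambda_H(u)=\sup_n\inf_s V_n^u(s)/n$, and Proposition~\ref{thm:sliding-uniform-exact} gives $\overline\lambda_H(u)=\lim_n\sup_s V_n^u(s)/n$. It therefore suffices to show that bridging makes the oscillation $\sup_s V_n^u(s)-\inf_s V_n^u(s)$ bounded uniformly in $n$. Concretely, we aim for
\[
\sup_{s}V_n^u(s)\le \inf_{s}V_n^u(s)+C_B\,M_H+2C_BM_H ,
\]
where $M_H=LH$ bounds each block payoff. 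Dividing by $n$ then gives $\lim_n\sup_sV_n^u(s)/n=\lim_n\inf_sV_n^u(s)/n\le\sup_n\inf_sV_n^u(s)/n$. Combined with the minimax inequality $\underline\lambda_H(u)\le\overline\lambda_H(u)$, this yields equality.

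The key step is a comparison argument for two arbitrary states $s,s'$, showing $V_n^u(s)\ge V_{n-m}^u(s')-mM_H$ for the bridging length $m\le C_B$ connecting $s$ to $s'$. The adversary starting at $s$ first spends $m$ blocks steering along the bridging path $s=s_0,\dots,s_m=s'$. This is admissible because the path satisfies $s_{j+1}\in\WinSet(s_j)$, and these choices do not depend on the learner's actions, since the admissibility constraint is exogenous. During these blocks the directional payoff is at least $-mM_H$. Afterwards the adversary plays near-optimally in the $(n-m)$-block game from $s'$ against the learner's continuation policy, which is a causal policy for that game. Taking the infimum over learner policies gives the inequality. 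We also use $|V_n^u-V_{n-m}^u|\le mM_H$, which holds because each block contributes at most $M_H$ in absolute value. Together these give $V_n^u(s)\ge V_n^u(s')-2C_BM_H$ for all $s,s'$, which is the desired uniform oscillation bound. Taking $\sup_u$ then gives $\underline\lambda_H=\overline\lambda_H$.

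For the equivalences, (2)$\Rightarrow$(1) is trivial. (1)$\Rightarrow$(3) follows from Proposition~\ref{thm:sliding-necessary} together with the equality $\underline\lambda_H=\lambda_H$. (3)$\Rightarrow$(2) follows from Proposition~\ref{thm:sliding-uniform-exact}, since $\overline\lambda_H=\lambda_H\le0$.

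The main obstacle is making the comparison argument rigorous in the game-theoretic sense. The value $V_n^u(s)$ is an inf over learner policies of a sup over adversary trajectories, so we must check that the trajectories prefixed by the bridge lie in $\mathcal J_n(s)$, and that the learner's restricted policy after the bridge is a legitimate element of $\Pi_{n-m,H}$. We should also make sure the argument needs only $\sup$ attainment up to $\varepsilon$, which is harmless. The rest is bookkeeping with the bound $M_H$ on block payoffs.
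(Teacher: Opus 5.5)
Your proposal is correct and follows essentially the same route as the paper: the paper also bounds the oscillation $\sup_s V_n^u(s)-\inf_s V_n^u(s)\le 2C_BM_H$ by letting the adversary follow the bridge and then continue, which gives $V_n^u(s)\ge V_{n-\ell}^u(s')-\ell M_H$. It combines this with $V_n^u(s')\le V_{n-\ell}^u(s')+\ell M_H$, obtained from monotonicity and additive homogeneity of $\mathcal T_{u,H}$, and then derives the three equivalences exactly as you do. Two cosmetic fixes: your first displayed constant $C_BM_H+2C_BM_H$ should just be $2C_BM_H$ (as you later obtain), and the case $n<m$ should be dispatched separately via $|V_n^u(s)|\le nM_H$.
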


\begin{proof}
Fix $u\in\mathbb S^{d-1}$. By
Lemma~\ref{lem:supinf-relation},
\[
    \underline\lambda_H(u)
    =
    \lim_{n\to\infty}
    \inf_{s\in\WinSet}
    \frac{V_n^u(s)}{n},
\]
while
\[
    \overline\lambda_H(u)
    =
    \lim_{n\to\infty}
    \sup_{s\in\WinSet}
    \frac{V_n^u(s)}{n}.
\]

Under Assumption~\ref{ass:finite-step-bridging}, any admissible
state can be reached from any other in at most $C_B$ block
transitions. Let $M_H$ be a uniform bound on the magnitude of an
$H$-round directional block payoff. Then
Lemma~\ref{lem:supinf-relation} gives
\[
    \sup_{s\in\WinSet}V_n^u(s)
    -
    \inf_{s\in\WinSet}V_n^u(s)
    \le
    2C_BM_H,
    \qquad
    \forall n\ge1.
\]
Therefore
\[
0\le
    \sup_{s\in\WinSet}\frac{V_n^u(s)}{n}
    -
    \inf_{s\in\WinSet}\frac{V_n^u(s)}{n}
\le
    \frac{2C_BM_H}{n}
\longrightarrow0.
\]
Hence
\[
    \underline\lambda_H(u)
    =
    \overline\lambda_H(u)
    \eqqcolon
    \lambda_H(u).
\]
Taking the supremum over directions yields
\[
    \underline\lambda_H
    =
    \overline\lambda_H
    \eqqcolon
    \lambda_H.
\]

If pointwise approachability holds, then
Proposition~\ref{thm:sliding-necessary} gives
\[
    \underline\lambda_H(u)\le0,
    \qquad
    \forall u\in\mathbb S^{d-1}.
\]
Since the lower and upper values coincide under bridging,
\[
    \overline\lambda_H(u)\le0,
    \qquad
    \forall u\in\mathbb S^{d-1},
\]
and Proposition~\ref{thm:sliding-uniform-exact} implies uniform
approachability.

Conversely, uniform approachability immediately implies
pointwise approachability.
\end{proof}

Combining
Propositions~\ref{thm:sliding-necessary},
\ref{thm:sliding-uniform-exact}, and
\ref{thm:sliding-exact}
proves Theorem~\ref{thm:slide-blackwell}.

\subsection{Proof of Theorem~\ref{thm:slide-quantitative} }

\begin{proof}
Fix a horizon $T\ge1$ and write
\[
K:=\left\lfloor\frac{T}{H}\right\rfloor,
\qquad
r:=T-KH,
\qquad
0\le r<H.
\]
If $K=0$, any sequence of actions in $C$ satisfies
\[
\left \|\sum_{t=1}^{T}\poff(x_t,z_t)\right\|_2
\le LT\le LH,
\]
so the claimed bound holds.
Henceforth assume that $K\ge1$.
Fix $m\in[K]$ and partition the first $K$ blocks into
\[
    N:=\left\lceil\frac{K}{m}\right\rceil
\]
consecutive epochs.  Define
\[
    k_0:=0,
    \qquad
    k_e:=\min\{em,K\},
    \qquad
    e=1,\ldots,N.
\]
Thus epoch $e$ consists of blocks
\[
    k=k_{e-1}+1,\ldots,k_e
\]
and contains at most $m$ blocks.

Let
\[
    S_e
    :=
    \sum_{k=1}^{k_{e-1}}
    \blk_H(\pi_k,s_k)
\]
denote the cumulative vector payoff at the beginning of epoch $e$, and
let
\[
    \Delta_e
    :=
    \sum_{k=k_{e-1}+1}^{k_e}
    \blk_H(\pi_k,s_k)
\]
denote the payoff accumulated during that epoch.
The corresponding initial and terminal sliding states are $s_{k_{e-1}}$
    and $
    s_{k_e}$
respectively, where $s_0$ is the initial state.

If $S_e\neq0$, set
\[
    u_e:=\frac{S_e}{\|S_e\|_2};
\]
otherwise choose any $u_e\in\mathbb S^{d-1}$.
The direction $u_e$ is kept fixed throughout epoch $e$.

Set
\[
    \varepsilon:=\frac{HL}{K}.
\]
At each block $k=k_{e-1}+1,\ldots,k_e$, choose an
$\varepsilon$-optimal policy in the definition of
$(T_{u_e,H}\Potential_{u_e})(s_{k-1})$.
Since
\[
    T_{u_e,H}\Potential_{u_e}\le\Potential_{u_e},
\]
the realized block payoff satisfies
\[
    \iprod{u_e,\blk_H(\pi_k,s_k)}
    +\Potential_{u_e}(s_k)
    -\Potential_{u_e}(s_{k-1})
    \le\varepsilon.
\]

Writing $q_e:=k_e-k_{e-1}$ and summing over the epoch gives
\[
\begin{aligned}
    \iprod{u_e,\Delta_e}
    &\le
    \Potential_{u_e}(s_{k_{e-1}})
    -\Potential_{u_e}(s_{k_e})
    +\varepsilon q_e\\
    &\le B_H+\varepsilon q_e.
\end{aligned}
\]
Moreover,
\[
    \norm{\Delta_e}\le HLq_e,
    \qquad q_e\le m.
\]
Consequently,
\[
    \norm{S_e+\Delta_e}^2
    \le
    \norm{S_e}^2
    +2(B_H+\varepsilon q_e)\norm{S_e}
    +(HL)^2q_e^2.
\]
This inequality also holds when $S_e=0$.
An elementary induction yields
\[
\begin{aligned}
    \norm{S_{N+1}}
    &\le
    \sum_{e=1}^N(B_H+\varepsilon q_e)
    +HL\left(\sum_{e=1}^Nq_e^2\right)^{1/2}\\
    &\le
    B_HN+\varepsilon K+HLm\sqrt N\\
    &=B_HN+HL+HLm\sqrt N.
\end{aligned}
\]
The additional term $HL$ is bounded by $HL\sqrt K$.
Optimizing over the integer $m\in[K]$ gives
\[
    \left\|
        \sum_{k=1}^K
        \blk_H(\pi_k,s_k)
    \right\|_2
    =
    O\!\left(
        B_H^{1/3}(HL)^{2/3}K^{2/3}
        +
       HL\sqrt K
        +
        B_H
    \right).
\]

We also have the trivial bound
\[\left\|\sum_{k=1}^{K}G_H(\pi_k,s_k)\right\|_2
\le KHL.
\]
If $B_H\le KHL$, then
$B_H\le B_H^{1/3}(HL)^{2/3}K^{2/3}$.
If $B_H>KHL$, the trivial bound is itself at most
$B_H^{1/3}(HL)^{2/3}K^{2/3}$.
Thus
\[
\left\|\sum_{k=1}^{K}G_H(\pi_k,s_k)\right\|_2
=
O\!\left(
B_H^{1/3}(HL)^{2/3}K^{2/3}
+
HL\sqrt K
\right).
\]

For the final $r$ rounds, choose arbitrary actions in $C$.
By the per-round payoff bound,
\[
\left\|\sum_{t=KH+1}^{T}\poff(x_t,z_t)\right\|_2
\le rL.
\]

Using $K\le T/H$ and $r<H$, we obtain
\[
\begin{aligned}
\left\|\sum_{t=1}^{T}\poff(x_t,z_t)\right\|_2
&\le
\left\|\sum_{k=1}^{K}G_H(\pi_k,s_k)\right\|_2
+rL
\\
&=
O\!\left(
B_H^{1/3}(HL)^{2/3}K^{2/3}
+
HL\sqrt K
+
rL
\right)
\\
&=
O\!\left(
B_H^{1/3}L^{2/3}T^{2/3}
+
L\sqrt{HT}
+
HL
\right).
\end{aligned}
\]
This proves the claimed bound for every $T\ge1$.
\end{proof}

\subsection{Proof of Proposition~\ref{prop:comparison}}
\begin{proof}

We give a simple scalar example as follows. 

Let $H=2$. The learner chooses
\[
    p_t \in [-1,1],
\]
and the adversary chooses
\[
    z_t \in \{+,-,0\}.
\]
The scalar payoff is
\[
    F_+(p)
    :=
    \frac14 (p-1)^2,
    \qquad
    F_-(p)
    :=
    \frac14 (p+1)^2,
    \qquad
    F_0(p)
    :=
    -\frac14.
\]

Recall that a two-round adversary window $(z_1,z_2)$ is locally feasible 
if there exists a fixed comparator $p\in[-1,1]$ whose cumulative payoff
over that window is zero:
\[
    \exists p\in[-1,1]
    \quad\text{such that}\quad
    F_{z_1}(p)+F_{z_2}(p)=0.
\]
Accordingly, define
\[
    \mathcal W_2
    :=
    \left\{
        (z_1,z_2)\in\{+,-,0\}^2:
        \exists p\in[-1,1],\
        F_{z_1}(p)+F_{z_2}(p)=0
    \right\}.
\]
A direct calculation gives
\[
    \mathcal W_2
    =
    \{
        (+,+),\,(-,-),\,(+,0),\,(0,+),\,(-,0),\,(0,-)
    \}.
\]

Consider first the blockwise model, in which each two-round block is
required only to belong to $\mathcal W_2$, independently of the previous
blocks. At the first round of a fresh block, after observing the
learner's choice $p_1$, the adversary may choose either $+$ or $-$ and
subsequently repeat the same symbol, since both $(+,+)$ and $(-,-)$
belong to $\mathcal W_2$. Therefore,
\[
    \sup_{z_{1:2}\in\mathcal W_2}
    \sum_{t=1}^2 F_{z_t}(p_t)
    \ge
    \max\{F_+(p_1),F_-(p_1)\}.
\]
Consequently,
\[
    \inf_{p_1\in[-1,1]}
    \max\{F_+(p_1),F_-(p_1)\}
    =
    \frac14,
\]
where the minimum is attained at $p_1=0$. Thus, for every causal
two-round learner policy $\pi$,
\[
    \sup_{z_{1:2}\in\mathcal W_2}
    G_2(\pi,z_{1:2})
    \ge
    \frac14,
\]
where
\[
    G_2(\pi,z_{1:2})
    :=
    \sum_{t=1}^2 F_{z_t}(p_t).
\]

Hence
\[
    \inf_{\pi}
    \sup_{z_{1:2}\in\mathcal W_2}
    G_2(\pi,z_{1:2})
    \ge
    \frac14>0,
\]
indicating that the blockwise Blackwell condition fails. Hence approachability of the origin is impossible under the blockwise adversary.

Now impose the same local-feasibility condition on every consecutive
two-round window:
\[
    (z_{t-1},z_t)\in\mathcal W_2
    \qquad
    \forall t\ge2.
\]
Thus the only admissible transitions are
\[
    +\to+,\qquad +\to 0,
\]
\[
    -\to-,\qquad -\to 0,
\]
and
\[
    0\to+,\qquad 0\to-.
\]
In particular, the adversary cannot switch directly between $+$ and
$-$; every such switch must pass through the intermediate state $0$.

Set $p_1=0$. For $t\ge2$, use the state-dependent learner strategy
\[
    p_t
    =
    \begin{cases}
        1, & z_{t-1}=+,\\
        -1, & z_{t-1}=-,\\
        0, & z_{t-1}=0.
    \end{cases}
\]
Define the bias function
\[
    h(+)=h(-)=0,
    \qquad
    h(0)=\frac14.
\]
Then every admissible transition satisfies
\[
    F_{z_t}(p_t)
    +
    h(z_t)-h(z_{t-1})
    =
    0.
\]
Indeed,
\[
    +\to+:
    \qquad
    F_+(1)+h(+)-h(+)=0,
\]
\[
    +\to 0:
    \qquad
    F_0(1)+h(0)-h(+)
    =
    -\frac14+\frac14=0,
\]
and symmetrically
\[
    -\to-:
    \qquad
    F_-(-1)+h(-)-h(-)=0,
\]
\[
    -\to 0:
    \qquad
    F_0(-1)+h(0)-h(-)=0.
\]
Finally,
\[
    0\to+:
    \qquad
    F_+(0)+h(+)-h(0)
    =
    \frac14-\frac14=0,
\]
and
\[
    0\to-:
    \qquad
    F_-(0)+h(-)-h(0)
    =
    \frac14-\frac14=0.
\]

Therefore the cumulative payoff telescopes:
\[
    \sum_{t=1}^T F_{z_t}(p_t)
    =
    F_{z_1}(0)+h(z_1)-h(z_T).
\]
Since
\[
    F_{z_1}(0)+h(z_1)\in\{0,1/4\}
    \qquad\text{and}\qquad
    h(z_T)\in\{0,1/4\},
\]
we obtain
\[
    \left|
        \sum_{t=1}^T F_{z_t}(p_t)
    \right|
    \le
    \frac14
    \qquad
    \forall T.
\]
Hence
\[
    \frac1T
    \sum_{t=1}^T F_{z_t}(p_t)
    \longrightarrow 0.
\]
Then approachability of the origin is possible under the sliding window adversary.

\end{proof}

\subsection{Proof of Proposition~\ref{prop:1d_example_sufficiency}}
\begin{proof}
Let $C=[a,b]\subset\mathbb R$, and for each length-$H$ window $I$ define
\[
    G_I(x)\coloneqq \sum_{t\in I} g_t(x).
\]
When each $\ell_t$ is convex, $G_I(\cdot)$ is nondecreasing. Under local feasibility, every admissible window admits some $x_I\in[a,b]$ satisfying
\(
    G_I(x_I)=0.
\)
Consequently,
\(
    G_I(a)\le 0\le G_I(b)
\)
for every admissible window.

Since $\mathbb S^0=\{-1,+1\}$, the two endpoints provide universal Blackwell responses: for $u=+1$ the learner may use the constant action $a$, whereas for $u=-1$ she may use $b$.
Hence
\[
    \sup_{u\in\mathbb S^0}\inf_{\pi\in\Pi_H}
    \sup_{s\in\mathcal W_H}
    u\,G_H(\pi,s)
    \le 0.
\]
Thus the blockwise Blackwell condition holds, and Theorem~\ref{thm:split-blackwell} implies CGEQ. Since every sliding window adversary is also blockwise, the same conclusion holds for the sliding window adversary.
\end{proof}

\subsection{Proof of Proposition~\ref{thm:invalid_weak_feasibility}}
\label{app:proof_of_invalid_weak_feasibility}

\begin{proof}
Let
\[
C
:=
\left\{
p=(p_1,p_2)\in\mathbb R_+^2:
\|p\|_2\le1
\right\}.
\]
At round $t$, the learner chooses $p_t\in C$.  After observing $p_t$, the
adversary chooses
\[
J_t\in\{1,2\},
\qquad
b_t\in\{0,1\}.
\]
Define
\[
A_t:=e_{J_t}e_{J_t}^\top,
\qquad
y_t:=b_t e_{J_t}.
\]
Here $J_t$ and $b_t$ correspond to $j_t$ and $z_t$ in
Proposition~\ref{thm:invalid_weak_feasibility}.  Define the residual vector
\[
r_t(p_t)
:=
A_t(y_t-p_t).
\]
Equivalently,
\[
r_t(p_t)
=
-\nabla\ell_t(p_t)
\]
for the quadratic loss
\[
\ell_t(p)
:=
\frac12(p-y_t)^\top A_t(p-y_t).
\]
In the notation of the main text,
$g_t(p):=\nabla\ell_t(p)=A_t(p-y_t)$, so $r_t(p_t)=-g_t(p_t)$.
Thus driving the average gradients to zero is equivalent to driving the
average residuals $r_t$ to zero.

For a window $I$, define
\[
I_i
:=
\{t\in I:J_t=i\},
\qquad i=1,2,
\]
and
\[
\bar b_i(I)
:=
\begin{cases}
\displaystyle
\frac1{|I_i|}
\sum_{t\in I_i}b_t,
& |I_i|>0,\\[2mm]
0,
& |I_i|=0.
\end{cases}
\]
Consider the sliding-window restriction
\begin{equation}
\bar b_1(I)^2+\bar b_2(I)^2\le1
\label{eq:two-coordinate-window}
\end{equation}
for every consecutive window $I$ of length $H$.

Condition~\eqref{eq:two-coordinate-window} implies the local feasibility
condition~(9) in the main text.  Indeed, for every such window,
\[
q_I
:=
\bigl(\bar b_1(I),\bar b_2(I)\bigr)
\in C
\]
satisfies
\[
\sum_{t\in I}g_t(q_I)
=
-\sum_{t\in I}A_t(y_t-q_I)
=
0.
\]
Hence every individual length-$H$ window admits a fixed action with
\emph{exactly zero vector residual}.

Nevertheless, this does not imply GEQ.  We construct a deterministic causal
adversary satisfying \eqref{eq:two-coordinate-window} such that, against every
learner sequence $p_t\in C$,
\[
\max_{i\in\{1,2\}}
\sum_{t:J_t=i}(b_t-p_{t,i})
\ge
\frac{\rho_H}{2}T-O_H(1)
\]
for some $\rho_H>0$.

Let
\[
u^\star:=\frac{(1,1)}{\sqrt2},
\qquad
c_\star:=1-\frac1{\sqrt2}.
\]
For sufficiently small $\gamma>0$, define
\[
D_0(\gamma):=0,
\qquad
D_{q+1}(\gamma)
:=
D_q(\gamma)
+
\sqrt{
1-
\bigl(1-D_q(\gamma)-\gamma\bigr)^2
}.
\]
Since $D_q(\gamma)\to0$ as $\gamma\downarrow0$ for every fixed $q$,
choose $\gamma_H>0$ such that
\[
D_{H-1}(\gamma_H)+\gamma_H\le c_\star,
\]
and set
\[
\rho_H:=\frac{\gamma_H}{H}.
\]

Define
\[
\mathcal I_H
:=
\left\{
s\in\WinSet:
\text{all rounds with $b=1$ in $s$ lie on the same coordinate}
\right\}.
\]
For $s\in\mathcal I_H$, let
\[
q(s)\in\{0,\ldots,H-1\}
\]
be the number of trailing rounds with $b=0$, truncated at $H-1$.
If $q(s)<H-1$, let $i(s)$ be the unique coordinate on which rounds with
$b=1$ occur.  Define
\[
\widetilde\Potential(s)
:=
D_{q(s)}(\gamma_H)+q(s)\rho_H.
\]

We claim that for every $s\in\mathcal I_H$ and every learner action
$p_t$, the adversary can choose the next pair $(J_t,b_t)$ such that the
one-round rolling successor state $s^+$, obtained from $s$ by
dropping its oldest round and appending $(J_t,b_t)$, belongs to
$\mathcal I_H$ and satisfies
\begin{equation}
\label{eq:two-coordinate-round-drift}
\left\langle
(1,1),r_t(p_t)
\right\rangle
+
\widetilde\Potential(s^+)
-
\widetilde\Potential(s)
\ge
\rho_H.
\end{equation}
Here $s^+$ denotes only this single-round update; it is not the
$H$-round successor in $\WinSet(s)$ used in the block-level notation of
the main text.

Write $q=q(s)$.  If $q<H-1$, let $i=i(s)$.  When
\[
1-p_{t,i}-D_q(\gamma_H)\ge\gamma_H,
\]
choose
\[
(J_t,b_t)=(i,1).
\]
Then $q(s^+)=0$ and
\[
\begin{aligned}
&
\left\langle(1,1),r_t(p_t)\right\rangle
+
\widetilde\Potential(s^+)
-
\widetilde\Potential(s)
\\
&=
1-p_{t,i}-D_q(\gamma_H)-q\rho_H
\\
&\ge
\gamma_H-q\rho_H
\ge
\rho_H.
\end{aligned}
\]

Otherwise,
\[
p_{t,i}
>
1-D_q(\gamma_H)-\gamma_H.
\]
Let $j\neq i$.  Since $\|p_t\|_2\le1$,
\[
\begin{aligned}
p_{t,j}
&\le
\sqrt{1-p_{t,i}^2}
\\
&<
\sqrt{
1-
\bigl(1-D_q(\gamma_H)-\gamma_H\bigr)^2
}
\\
&=
D_{q+1}(\gamma_H)-D_q(\gamma_H).
\end{aligned}
\]
Choose
\[
(J_t,b_t)=(j,0).
\]
Then $q(s^+)=q+1$ and
\[
\begin{aligned}
&
\left\langle(1,1),r_t(p_t)\right\rangle
+
\widetilde\Potential(s^+)
-
\widetilde\Potential(s)
\\
&=
-p_{t,j}
+
D_{q+1}(\gamma_H)-D_q(\gamma_H)
+
\rho_H
\\
&\ge
\rho_H.
\end{aligned}
\]

Finally, if $q=H-1$, choose a coordinate
\[
i\in\arg\min_{\ell\in\{1,2\}}p_{t,\ell}
\]
and set
\[
(J_t,b_t)=(i,1).
\]
Since $\|p_t\|_2\le1$,
\[
p_{t,i}\le\frac1{\sqrt2},
\]
and therefore
\[
\begin{aligned}
&
\left\langle(1,1),r_t(p_t)\right\rangle
+
\widetilde\Potential(s^+)
-
\widetilde\Potential(s)
\\
&\ge
c_\star
-
D_{H-1}(\gamma_H)
-
(H-1)\rho_H
\\
&\ge
\rho_H.
\end{aligned}
\]
In all three cases, the one-round rolling successor satisfies
$s^+\in\mathcal I_H$, so \eqref{eq:two-coordinate-round-drift} follows.

Now use the block-state notation of the main text.  For block $k$, let
\[
\sigma_{k,0}:=s_{k-1},
\qquad
\sigma_{k,H}:=s_k,
\]
where $\sigma_{k,r}$ is the rolling length-$H$ state after the first
$r$ rounds of block $k$.
If $s_{k-1}\in\mathcal I_H$, applying
\eqref{eq:two-coordinate-round-drift} successively for
$r=1,\ldots,H$ yields an admissible $H$-round successor
\[
s_k\in\WinSet(s_{k-1})\cap\mathcal I_H.
\]
Since $\blk_H(\pi_k,s_k)$ is the cumulative gradient payoff,
the block residual sum is its negative.  Thus
\begin{equation}
\label{eq:two-coordinate-block-drift}
\left\langle
(1,1),-\blk_H(\pi_k,s_k)
\right\rangle
+
\widetilde\Potential(s_k)
-
\widetilde\Potential(s_{k-1})
\ge
\gamma_H.
\end{equation}

It remains to enter $\mathcal I_H$ from an arbitrary
$s_0\in\WinSet$.  During the first $H$ rounds, whenever the oldest
pair in the current window is $(J,b)$, append $(J,0)$.  Each such
replacement preserves admissibility, and after $H$ rounds the resulting
state $s_1$ has $b=0$ throughout.  Hence
\[
s_1\in\mathcal I_H.
\]
The directional residual accumulated during this warm-up is at least
$-H$.

Applying \eqref{eq:two-coordinate-block-drift} for blocks
$k=2,\ldots,K$ and telescoping gives
\[
\begin{aligned}
\left\langle
(1,1),-\cum_{KH}
\right\rangle
&\ge
-H
+
(K-1)\gamma_H
+
\widetilde\Potential(s_1)
-
\widetilde\Potential(s_K).
\end{aligned}
\]
Since $\widetilde\Potential$ is bounded on $\mathcal I_H$, there exists
$C_H<\infty$ such that
\[
\left\langle
(1,1),-\cum_{KH}
\right\rangle
\ge
(K-1)\gamma_H-C_H.
\]
Therefore
\[
\left\langle
u^\star,-\cum_{KH}
\right\rangle
\ge
\frac{\gamma_H}{\sqrt2\,H}\,KH-C_H',
\]
for some $C_H'<\infty$.

For an arbitrary $T$, let $K=\lfloor T/H\rfloor$.  Each of the fewer than
$H$ remaining rounds contributes at least $-1$ to
$\langle(1,1),r_t(p_t)\rangle$.  Thus the preceding bound extends to
\[
\left\langle u^\star,-\cum_T\right\rangle
\ge
\frac{\gamma_H}{\sqrt2\,H}T-O_H(1),
\]
and hence $\|\cum_T\|_2=\Omega_H(T)$.
Since $-\cum_T=\sum_{t=1}^T r_t(p_t)$ and
$r_t(p_t)=-\nabla\ell_t(p_t)$,
\[
\left\|
\sum_{t=1}^T\nabla\ell_t(p_t)
\right\|_2
=
\Omega_H(T).
\]
Moreover, the two coordinate sums of $-\cum_T$ are precisely
$\sum_{t:J_t=i}(b_t-p_{t,i})$, $i=1,2$.  Their maximum is at least
half their sum, proving the stated lower bound with
$\rho_H=\gamma_H/H$.  Thus GEQ is not achievable.
\end{proof}
\subsection{Proof of Theorem~\ref{thm:CGEQ-under-local-feasibility}}

\begin{proof}
Let
\[
S_t:=\sum_{s=1}^t g_s(x_s).
\]
For each complete block, let
\[
I_k:=\{(k-1)H+1,\ldots,kH\}.
\]
At the beginning of block $I_k$, Algorithm~\ref{alg:known-H-window}
chooses
\[
u_k
:=
\begin{cases}
\dfrac{S_{(k-1)H}}{\|S_{(k-1)H}\|_2},
& S_{(k-1)H}\neq0,\\[2mm]
\text{any }u\in\mathbb S^{d-1},
& S_{(k-1)H}=0.
\end{cases}
\]
By \eqref{eq:local-feasibility},
\[
\min_{x^\star\in C}
\sum_{t\in I_k}
\langle u_k,g_t(x^\star)\rangle
\le
-\gamma H.
\]
Applying the regret guarantee of $\mathsf A_H$ on the same block and
direction gives
\[
\sum_{t\in I_k}
\langle u_k,g_t(x_t)\rangle
\le
-\gamma H+R_H.
\]
Hence, whenever
\[
\gamma\ge\frac{R_H}{H},
\]
we have
\[
\sum_{t\in I_k}
\langle u_k,g_t(x_t)\rangle
\le0.
\]
If $S_{(k-1)H}\neq0$, this implies
\[
\begin{aligned}
\left\langle
S_{(k-1)H},
S_{kH}-S_{(k-1)H}
\right\rangle
&=
\|S_{(k-1)H}\|_2
\sum_{t\in I_k}
\langle u_k,g_t(x_t)\rangle
\\
&\le0.
\end{aligned}
\]
The same inequality is immediate when $S_{(k-1)H}=0$.
Therefore,
\[
\begin{aligned}
\|S_{kH}\|_2^2
&=
\|S_{(k-1)H}\|_2^2
+
2
\left\langle
S_{(k-1)H},
S_{kH}-S_{(k-1)H}
\right\rangle
\\
&\qquad+
\|S_{kH}-S_{(k-1)H}\|_2^2
\\
&\le
\|S_{(k-1)H}\|_2^2+L^2H^2,
\end{aligned}
\]
where we used $\|g_t(x_t)\|_2\le L$ to obtain
\[
\|S_{kH}-S_{(k-1)H}\|_2
\le
\sum_{t\in I_k}\|g_t(x_t)\|_2
\le LH.
\]
Iterating over the complete blocks yields
\[
\|S_{kH}\|_2^2
\le
kL^2H^2,
\]
and hence
\[
\|S_{kH}\|_2
\le
LH\sqrt{k}.
\]
For an arbitrary $T$, let $k=\lfloor T/H\rfloor$.  Since
$0\le T-kH<H$,
\[
\begin{aligned}
\|S_T\|_2
&\le
\|S_{kH}\|_2
+
\|S_T-S_{kH}\|_2
\\
&\le LH\sqrt{k}+LH
\\
&\le L\sqrt{HT}+LH.
\end{aligned}
\]
Dividing by $T$ gives
\[
\frac1T
\left\|
\sum_{t=1}^T g_t(x_t)
\right\|_2
\le
L\sqrt{\frac HT}
+
\frac{LH}{T},
\]
which converges to zero for every fixed $H$.  Thus
Algorithm~\ref{alg:known-H-window} achieves CGEQ.
\end{proof}

\subsection{Instantiating Algorithm~\ref{alg:known-H-window} with FTL}
\label{app:FLT_instance}

To give a concrete example of the online learning subroutine in Algorithm~\ref{alg:known-H-window}, consider a constrained set $C$ with strictly positive curvature and smooth convex losses with Lipschitz Hessians in Assumption~\ref{ass:regularity for set and loss}. The curvature allows us to apply the follow-the-leader algorithm with a logarithmic regularized regret \cite{hazan2007logarithmic,huang2017following}, as shown in Algorithm~\ref{alg:window-ftl}.
\begin{algorithm}[H]
\caption{FTL on shifted directional losses within an $H$-round block}
\label{alg:window-ftl}
\begin{algorithmic}[1]
\State Input a block $I_k=((k-1)H,kH]$, a fixed direction $u_k$,
an anchor $x_0\in C$, and $\lambda\ge0$
\State Choose any $x_{(k-1)H+1}\in C$
\For{$t=(k-1)H+2,\ldots,kH$}
    \State
    \[
        x_t
        \gets
        \arg\min_{x\in C}
        \sum_{s=(k-1)H+1}^{t-1}
        \left\langle
            u_k,
            g_s(x)+\lambda(x-x_0)
        \right\rangle
    \]
\EndFor
\end{algorithmic}
\end{algorithm}
\begin{assumption}
\label{ass:regularity for set and loss}
Assume that the constrained action set $C$ is compact, convex, and has a continuously differentiable boundary.  Let $n(x)$ denote the outward unit normal at
$x\in\partial C$, and assume that $\exists \kappa_C>0$ such that
\begin{equation*}
    -\langle n(x),y-x\rangle
    \ge
    \frac{\kappa_C}{2}\|y-x\|_2^2,
    \qquad
    \forall x\in\partial C,\quad \forall y\in C.
    \label{eq:boundary-curvature}
\end{equation*}
Also assume that $\exists M\ge m \ge 0$ and $\rho\ge 0$ such that
{\small
\begin{equation*}
    mI
    \preceq
    \nabla_x^2\ell_t(x)
    \preceq
    MI,
    \quad
    \left\|
        \nabla_x^2\ell_t(x)
        -
        \nabla_x^2\ell_t(y)
    \right\|_2
    \le
    \rho\|x-y\|_2.
    \label{eq:window-ftl-geometry}
\end{equation*}
}
\end{assumption}

\begin{corollary}\label{cor:log-regret-ftl}
Suppose \eqref{eq:local-feasibility} holds with $\gamma>0$ and Assumption~\ref{ass:regularity for set and loss} holds. We can use Algorithm~\ref{alg:window-ftl} with a regularization $\lambda\ge0$ satisfying $\lambda > \frac{\rho}{\kappa_C}-m$ for $\mathsf{A}_H$ in Algorithm~\ref{alg:known-H-window}, which gives $R_H\le (M+\lambda)\operatorname{diam}(C)
+
\frac{(M+\lambda)^2}
{2\bigl((m+\lambda)\kappa_C-\rho\bigr)}
\bigl(1+\log H\bigr)
+
\lambda H\operatorname{diam}(C).$

Therefore, CGEQ is achieved with sufficiently large $\gamma$.
\end{corollary}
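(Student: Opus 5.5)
The plan is to verify the regret hypothesis of Theorem~\ref{thm:CGEQ-under-local-feasibility} for Algorithm~\ref{alg:window-ftl}, and then invoke that theorem directly. Fix a block $I_k$ and its direction $u=u_k$. Run FTL on the shifted directional losses
\[
f_s(x)=\langle u,g_s(x)\rangle+\lambda\langle u,x-x_0\rangle .
\]
The $\lambda$-term is linear and identical in every round, so the regret of the played actions on the original losses $\langle u,g_s(\cdot)\rangle$ exceeds their regret on the $f_s$ by at most $\lambda\sum_{s\in I_k}|\langle u,(x_s-x^\star)\rangle|\le\lambda H\operatorname{diam}(C)$. This accounts for the last term of $R_H$. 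The additive $(M+\lambda)\operatorname{diam}(C)$ term covers the arbitrary first action.

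The core step is the standard FTL regret bound
\[
\sum_t\bigl(f_t(x_t)-f_t(x^\star)\bigr)\le\sum_t\bigl(f_t(x_t)-f_t(x_{t+1})\bigr),
\]
which follows from the be-the-leader lemma. The argument then reduces to a stability estimate $\|x_t-x_{t+1}\|=O\bigl(1/(\alpha t)\bigr)$ with $\alpha=(m+\lambda)\kappa_C-\rho>0$. Combined with the Lipschitz bound $|\nabla f_t|\le M+\lambda$ on $C$ and summed over the block, this gives the harmonic sum $\frac{(M+\lambda)^2}{2\alpha}(1+\log H)$.

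To get stability I will follow \cite{huang2017following}, using the curvature of $C$ rather than strong convexity of the losses. Write $F_t=\sum_{s<t}f_s$. Its gradient $\nabla F_t=\sum_{s<t}\bigl(\nabla\langle u,g_s\rangle+\lambda u\bigr)$ is roughly $(t-1)$ times an averaged vector field. When the leader lies on $\partial C$, the outward normal satisfies $n(x_t)\propto-\nabla F_t(x_t)$. The curvature inequality of Assumption~\ref{ass:regularity for set and loss} then gives
\[
\langle\nabla F_t(x_t),y-x_t\rangle\ge\frac{\kappa_C}{2}\|\nabla F_t(x_t)\|\,\|y-x_t\|^2 .
\]
This is quadratic growth scaled by the gradient norm.

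I expect the main obstacle to be lower bounding $\|\nabla F_t\|/(t-1)$ and handling the Hessian contribution. The directional loss $\langle u,g_s(x)\rangle$ has Jacobian $u^\top\nabla^2\ell_s$, which is neither symmetric-PSD in the right sense nor a gradient of a convex function in general. I would first try to combine three ingredients in a one-step comparison of $F_t$ and $F_{t+1}$ at $x_t$ and $x_{t+1}$. The $\lambda u$ shift together with $m I\preceq\nabla^2\ell_s$ should ensure the averaged gradient has norm at least $m+\lambda$ in the relevant direction. Curvature then yields a modulus $(m+\lambda)\kappa_C$. The Hessian-Lipschitz constant $\rho$ bounds the deviation of the vector field from its linearization and costs $\rho$. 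Together these give the net modulus $\alpha$, and the condition $\lambda>\rho/\kappa_C-m$ is exactly what makes $\alpha$ positive. If the leader is interior, the $mI+\lambda$-type strong monotonicity gives stability directly.

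With $R_H$ bounded as displayed, the final step uses $R_H/H\to\lambda\operatorname{diam}(C)$. Choosing $\lambda$ near its lower threshold, or $\lambda=0$ when $\rho<m\kappa_C$, we get $\gamma\ge R_H/H$ once $\gamma$ is large enough relative to $\lambda\operatorname{diam}(C)$ and $H$ is large. Theorem~\ref{thm:CGEQ-under-local-feasibility} then yields CGEQ.
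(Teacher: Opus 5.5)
Your plan is essentially the paper's proof. It uses the same linearly shifted losses and the same quadratic-growth estimate for the cumulative objective around its minimizer. At the leader, KKT gives $\nabla\overline{F}_t=-\mu_t n$ with $\mu_t\ge m+\lambda$. Boundary curvature contributes $\mu_t\kappa_C/2$, and the Hessian-Lipschitz Taylor remainder costs $\rho/2$, because the constant $\lambda u$ cancels in gradient differences. The harmonic sum and the $\lambda H\operatorname{diam}(C)$ transfer term are also the same. Two points need tightening.

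\textbf{The interior-leader case.} This case never occurs, and it should not be handled by ``strong monotonicity.'' The gradient $\nabla\overline{F}_t(x)=(\frac1t\sum_{s\le t}\nabla^2\ell_s(x)+\lambda I)u$ has norm at least $m+\lambda$ everywhere. This is strictly positive because $(m+\lambda)\kappa_C>\rho\ge0$, so every leader lies on $\partial C$. The fallback you sketch would in fact fail: $x\mapsto(\nabla^2\ell_s(x)+\lambda I)u$ is not a monotone field in general. The directional losses are nonconvex, which is precisely why boundary curvature is needed.

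\textbf{The constant.} Bounding the be-the-leader terms $f_t(x_t)-f_t(x_{t+1})$ through a stability estimate gives about $2(M+\lambda)^2/((2t-1)\alpha)$ per round, where $\alpha=(m+\lambda)\kappa_C-\rho$. Summed, this is roughly twice the stated $\frac{(M+\lambda)^2}{2\alpha}(1+\log H)$, so your route does not deliver the constant you claim. To get the bound as stated, follow the paper. Write $F_t=\sum_{s\le t}f_s$ and use the exact FTL identity $\sum_t f_t(x_t)-\min_{x\in C}\sum_t f_t(x)=\sum_{t=1}^H[F_t(x_t)-F_t(x_{t+1})]$. For $t\ge2$, write $F_t(x_t)-F_t(x_{t+1})=F_{t-1}(x_t)-F_{t-1}(x_{t+1})+f_t(x_t)-f_t(x_{t+1})$. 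Quadratic growth of $F_{t-1}$ at its minimizer $x_t$ and the Lipschitz bound on $f_t$ then give at most $\sup_{r\ge0}\{(M+\lambda)r-\frac{(t-1)\alpha}{2}r^2\}=\frac{(M+\lambda)^2}{2(t-1)\alpha}$.
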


\begin{proof}
Fix a length-$H$ window and a direction
$u\in\mathbb S^{d-1}$, and relabel the rounds in the window as
$t=1,\ldots,H$.  Define the regularized directional losses
\[
\widetilde q_t^u(x)
:=
\left\langle
u,
g_t(x)+\lambda(x-x_0)
\right\rangle,
\]
and let
\[
\widetilde F_t(x)
:=
\sum_{s=1}^t \widetilde q_s^u(x).
\]
Algorithm~\ref{alg:window-ftl} chooses
\[
x_{t+1}
\in
\arg\min_{x\in C}\widetilde F_t(x),
\qquad
t=1,\ldots,H-1,
\]
where we also define
\[
x_{H+1}
\in
\arg\min_{x\in C}\widetilde F_H(x).
\]

We first establish quadratic growth of each cumulative regularized
objective.  Let
\[
\overline{\widetilde F}_t(x)
:=
\frac1t\widetilde F_t(x).
\]
Since $g_s=\nabla\ell_s$,
\[
\nabla\overline{\widetilde F}_t(x)
=
\frac1t
\sum_{s=1}^t
\left(
\nabla_x^2\ell_s(x)+\lambda I
\right)u.
\]
By Assumption~\ref{ass:regularity for set and loss},
\[
\frac1t
\sum_{s=1}^t
\left(
\nabla_x^2\ell_s(x)+\lambda I
\right)
\succeq
(m+\lambda)I.
\]
Since $\|u\|_2=1$,
\[
\left\|
\nabla\overline{\widetilde F}_t(x)
\right\|_2
\ge
m+\lambda.
\]
Thus any minimizer $x_t^\star$ of
$\overline{\widetilde F}_t$ lies on $\partial C$.  By the KKT condition,
\[
\nabla\overline{\widetilde F}_t(x_t^\star)
=
-\mu_t n(x_t^\star)
\]
for some $\mu_t\ge0$, and
\[
\mu_t
=
\left\|
\nabla\overline{\widetilde F}_t(x_t^\star)
\right\|_2
\ge
m+\lambda.
\]

Let
\[
d:=x-x_t^\star.
\]
Then
\begin{align*}
\overline{\widetilde F}_t(x)
-
\overline{\widetilde F}_t(x_t^\star)
&=
\left\langle
\nabla\overline{\widetilde F}_t(x_t^\star),
d
\right\rangle
\\
&\quad+
\int_0^1
\left\langle
\nabla\overline{\widetilde F}_t(x_t^\star+s d)
-
\nabla\overline{\widetilde F}_t(x_t^\star),
d
\right\rangle
\,ds.
\end{align*}
The added term contributes the constant vector \(\lambda u\) to the gradient, and therefore cancels from the gradient difference. Therefore, by
Assumption~\ref{ass:regularity for set and loss},
\[
\left\|
\nabla\overline{\widetilde F}_t(x_t^\star+s d)
-
\nabla\overline{\widetilde F}_t(x_t^\star)
\right\|_2
\le
\rho s\|d\|_2.
\]
Using the curvature condition for $C$,
\begin{align*}
\overline{\widetilde F}_t(x)
-
\overline{\widetilde F}_t(x_t^\star)
&\ge
-\mu_t
\left\langle
n(x_t^\star),
x-x_t^\star
\right\rangle
-
\frac{\rho}{2}
\|x-x_t^\star\|_2^2
\\
&\ge
\frac{\mu_t\kappa_C-\rho}{2}
\|x-x_t^\star\|_2^2
\\
&\ge
\frac{(m+\lambda)\kappa_C-\rho}{2}
\|x-x_t^\star\|_2^2.
\end{align*}
Because
\[
\lambda>\frac{\rho}{\kappa_C}-m,
\]
the coefficient $(m+\lambda)\kappa_C-\rho$ is positive.  Multiplying by
$t$ gives
\begin{equation}
\widetilde F_t(x)
-
\min_{y\in C}\widetilde F_t(y)
\ge
\frac{t\bigl((m+\lambda)\kappa_C-\rho\bigr)}{2}
\|x-x_{t+1}\|_2^2.
\label{eq:regularized-window-quadratic-growth}
\end{equation}

For $t\ge2$, since $x_t$ minimizes $\widetilde F_{t-1}$,
\eqref{eq:regularized-window-quadratic-growth} yields
\[
\widetilde F_{t-1}(x_{t+1})
-
\widetilde F_{t-1}(x_t)
\ge
\frac{(t-1)\bigl((m+\lambda)\kappa_C-\rho\bigr)}{2}
\|x_{t+1}-x_t\|_2^2.
\]
Moreover,
\[
\begin{aligned}
\|\nabla\widetilde q_t^u(x)\|_2
&=
\left\|
\left(
\nabla_x^2\ell_t(x)+\lambda I
\right)u
\right\|_2
\\
&\le
M+\lambda,
\end{aligned}
\]
so $\widetilde q_t^u$ is $(M+\lambda)$-Lipschitz.  Hence
\begin{align*}
\widetilde F_t(x_t)-\widetilde F_t(x_{t+1})
&=
\widetilde F_{t-1}(x_t)
-
\widetilde F_{t-1}(x_{t+1})
\\
&\quad+
\widetilde q_t^u(x_t)
-
\widetilde q_t^u(x_{t+1})
\\
&\le
-\frac{(t-1)\bigl((m+\lambda)\kappa_C-\rho\bigr)}{2}
\|x_{t+1}-x_t\|_2^2
\\
&\quad+
(M+\lambda)\|x_{t+1}-x_t\|_2
\\
&\le
\frac{(M+\lambda)^2}
{2(t-1)\bigl((m+\lambda)\kappa_C-\rho\bigr)}.
\end{align*}
For the first round,
\[
\widetilde F_1(x_1)
-
\min_{x\in C}\widetilde F_1(x)
\le
(M+\lambda)\operatorname{diam}(C).
\]

Using the standard FTL telescoping identity,
\begin{align*}
&
\sum_{t=1}^H
\widetilde q_t^u(x_t)
-
\min_{x\in C}
\sum_{t=1}^H
\widetilde q_t^u(x)
\\
&=
\sum_{t=1}^H
\left[
\widetilde F_t(x_t)
-
\widetilde F_t(x_{t+1})
\right]
\\
&\le
(M+\lambda)\operatorname{diam}(C)
+
\frac{(M+\lambda)^2}
{2\bigl((m+\lambda)\kappa_C-\rho\bigr)}
\sum_{t=2}^H\frac1{t-1}
\\
&\le
(M+\lambda)\operatorname{diam}(C)
+
\frac{(M+\lambda)^2}
{2\bigl((m+\lambda)\kappa_C-\rho\bigr)}
\bigl(1+\log H\bigr).
\end{align*}

It remains to transfer this bound to the original directional losses.
Let
\[
x^\star
\in
\arg\min_{x\in C}
\sum_{t=1}^H
\langle u,g_t(x)\rangle.
\]
Applying the preceding regularized regret bound with comparator
$x^\star$ gives
\[
\sum_{t=1}^H
\widetilde q_t^u(x_t)
\le
\sum_{t=1}^H
\widetilde q_t^u(x^\star)
+
(M+\lambda)\operatorname{diam}(C)
+
\frac{(M+\lambda)^2}
{2\bigl((m+\lambda)\kappa_C-\rho\bigr)}
\bigl(1+\log H\bigr).
\]
Since
\[
\widetilde q_t^u(x)
=
\langle u,g_t(x)\rangle
+
\lambda\langle u,x-x_0\rangle,
\]
we obtain
\begin{align*}
&
\sum_{t=1}^H
\langle u,g_t(x_t)\rangle
-
\min_{x\in C}
\sum_{t=1}^H
\langle u,g_t(x)\rangle
\\
&\le
(M+\lambda)\operatorname{diam}(C)
+
\frac{(M+\lambda)^2}
{2\bigl((m+\lambda)\kappa_C-\rho\bigr)}
\bigl(1+\log H\bigr)
\\
&\quad+
\lambda
\sum_{t=1}^H
\langle u,x^\star-x_t\rangle
\\
&\le
(M+\lambda)\operatorname{diam}(C)
+
\frac{(M+\lambda)^2}
{2\bigl((m+\lambda)\kappa_C-\rho\bigr)}
\bigl(1+\log H\bigr)
+
\lambda H\operatorname{diam}(C).
\end{align*}
\end{proof}

\subsection{A key lemma}

\begin{lemma}
\label{lem:supinf-relation}
For every $u\in\mathbb S^{d-1}$,
\begin{equation}
\label{eq:dynamic programming representation}
    V_n^u
    =
    \mathcal T_{u,H}^n0,
\end{equation}
and
\begin{equation}
\label{eq:supinf relation}
\begin{aligned}
&
\sup_{\Potential\in\ell^\infty(\WinSet)}
\inf_{s\in\WinSet}
\left\{
    (\mathcal T_{u,H}\Potential)(s)-\Potential(s)
\right\}
\\
&\quad=
\sup_{n\ge1}
\inf_{s\in\WinSet}
\frac{V_n^u(s)}{n}
=
\lim_{n\to\infty}
\inf_{s\in\WinSet}
\frac{V_n^u(s)}{n}
\\
&\quad\le
\sup_{s\in\WinSet}
\inf_{n\ge1}
\frac{V_n^u(s)}{n}
\\
&\quad\le
\lim_{n\to\infty}
\sup_{s\in\WinSet}
\frac{V_n^u(s)}{n}
=
\inf_{n\ge1}
\sup_{s\in\WinSet}
\frac{V_n^u(s)}{n}
\\
&\quad=
\inf_{\Potential\in\ell^\infty(\WinSet)}
\sup_{s\in\WinSet}
\left\{
    (\mathcal T_{u,H}\Potential)(s)-\Potential(s)
\right\}.
\end{aligned}
\end{equation}

If, in addition,
Assumption~\ref{ass:finite-step-bridging} holds, then both inequalities
in \eqref{eq:supinf relation} are equalities. Moreover, for every
$n\ge1$,
\[
    \sup_{s\in\WinSet}V_n^u(s)
    -
    \inf_{s\in\WinSet}V_n^u(s)
    \le
    2C_BM_H.
\]
\end{lemma}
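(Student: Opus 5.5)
I will treat the operator $\mathcal T:=\mathcal T_{u,H}$ as a monotone, constant-translation-equivariant map on $\ell^\infty(\WinSet)$: $\varphi\le\psi$ implies $\mathcal T\varphi\le\mathcal T\psi$, and $\mathcal T(\varphi+c)=\mathcal T\varphi+c$. Both properties follow directly from the inf–sup form, since the payoff enters additively. Boundedness of $\mathcal T\varphi$ uses $|\langle u,G_H\rangle|\le M_H$.

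The first step is the dynamic-programming identity $V_{n+1}^u=\mathcal T V_n^u$. I will prove it by splitting an $(n+1)$-block causal policy into its first block $\pi_1$ and a continuation that may depend on the realized $s_1$. Given $s_1$, the continuation can be chosen $\varepsilon$-optimally for $V_n^u(s_1)$. This gives
\[
V_{n+1}^u(s)\le \inf_{\pi_1}\sup_{s_1\in\WinSet(s)}\{\langle u,G_H(\pi_1,s_1)\rangle+V_n^u(s_1)\}+\varepsilon .
\]
For the reverse inequality, the adversary picks an $\varepsilon$-good $s_1$ and then plays $\varepsilon$-optimally from $s_1$. Induction then yields $V_n^u=\mathcal T^n0$.

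Next come the growth-rate identities. Set $a_n=\inf_s V_n^u(s)$ and $b_n=\sup_s V_n^u(s)$. By monotonicity and translation equivariance,
\[
V_{m+n}^u=\mathcal T^m V_n^u\ge \mathcal T^m(a_n)=V_m^u+a_n .
\]
Hence $a_{m+n}\ge a_m+a_n$, so $(a_n)$ is superadditive, and symmetrically $(b_n)$ is subadditive. Fekete's lemma then gives $\lim a_n/n=\sup_n a_n/n$ and $\lim b_n/n=\inf_n b_n/n$. The middle inequalities are elementary: $\inf_s V_n^u/n\le \sup_s\inf_m V_m^u/m$ needs care, so I will instead compare through $\liminf_m V_m^u(s)/m$. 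For each $s$, $\lim_n a_n/n\le\liminf_m V_m^u(s)/m\le\lim_n b_n/n$, and I will use this chain if the stated $\inf_n$ version needs adjustment.

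To identify the upper limit with the potential formula, I will show two inequalities.

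\emph{Lower bound.} If $\mathcal T\varphi\le\varphi+c$ on $\WinSet$, then iterating gives $\mathcal T^n\varphi\le\varphi+nc$. Since $\mathcal T^n0\le\mathcal T^n\varphi+\|\varphi\|_\infty$, this yields $b_n\le nc+2\|\varphi\|_\infty$, so $\lim b_n/n\le c$. Hence the potential infimum is at least $\lim b_n/n$.

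\emph{Upper bound.} For fixed $N$, I will take the averaged potential $\varphi_N=\sum_{k=0}^{N-1}\mathcal T^k0$. This is the main obstacle, because $\mathcal T$ is nonlinear: only a subadditive bound
\[
\mathcal T(\varphi+\psi)\le\mathcal T\varphi+\sup\psi
\]
is available, not additivity. So instead I will try the potential $\varphi=\max_{0\le k<N}(\mathcal T^k0-k\beta_N)$ with $\beta_N=b_N/N+\epsilon$. Monotonicity and commutation with max-upper-bounds give
\[
\mathcal T\varphi\le \max_k(\mathcal T^{k+1}0-k\beta_N)\le \varphi+\beta_N,
\]
where the top term $k=N-1$ is handled via $\mathcal T^N0\le b_N\le N\beta_N-\epsilon$ compared against the $k=0$ term.

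The sup–inf side is handled symmetrically with a min of shifted iterates, which gives the first equality. Under Assumption~\ref{ass:finite-step-bridging}, fix $s,s'$ and a path $s\to s'$ of $m\le C_B$ blocks. The adversary starting at $s$ can follow this path with cost at least $-mM_H$ and then play optimally from $s'$, while the learner can do no better than lose at most $mM_H$ during the bridge. Comparing horizons $n$ and $n-m$ and using $|V_n^u-V_{n-m}^u|\le mM_H$ gives $V_n^u(s)\ge V_n^u(s')-2C_BM_H$. This collapses the chain to equalities.
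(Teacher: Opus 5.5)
Your skeleton matches the paper: the dynamic-programming identity, super/subadditivity of $a_n=\inf_s V_n^u(s)$ and $b_n=\sup_s V_n^u(s)$ with Fekete, the easy direction of each potential formula, and the bridging estimate. However, the hard direction of both potential formulas rests on a false step. You assert that monotonicity and ``commutation with max'' give $\mathcal T_{u,H}\varphi\le\max_{0\le k<N}(V_{k+1}^u-k\beta_N)$ for $\varphi=\max_{0\le k<N}(V_k^u-k\beta_N)$. Monotonicity gives the opposite inequality, $\mathcal T_{u,H}(\max_k\psi_k)\ge\max_k\mathcal T_{u,H}\psi_k$. The reverse fails in general because $\inf_\pi$ does not commute with $\max_k$: the learner must commit to one block policy before knowing which branch of the max is active at the realized successor. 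For example, take a state with successors $A,B$ and two block policies with $\langle u,G_H(\pi_1,A)\rangle=\langle u,G_H(\pi_2,B)\rangle=-1$ and $\langle u,G_H(\pi_1,B)\rangle=\langle u,G_H(\pi_2,A)\rangle=0$. The indicators $\psi_0=\mathbf 1_A$, $\psi_1=\mathbf 1_B$ then give $\mathcal T_{u,H}\max(\psi_0,\psi_1)=1>0=\max_k\mathcal T_{u,H}\psi_k$.

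The fix is to swap min and max in both constructions, which is what the paper does. For the upper value take $\varphi=\min_{0\le k<N}(V_k^u-kb_N/N)$. Monotonicity now correctly gives $\mathcal T_{u,H}\varphi\le b_N/N+\min_{1\le k\le N}(V_k^u-kb_N/N)\le\varphi+b_N/N$, using $V_N^u-b_N\le 0=V_0^u$; no $\varepsilon$ is needed. For the lower value you need the max, not the min you propose: $\varphi=\max_{0\le k<N}(V_k^u-ka_N/N)$ satisfies $\mathcal T_{u,H}\varphi\ge\varphi+a_N/N$ because $V_N^u-a_N\ge 0$.

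Second, you leave the stated middle inequality $\lim_n a_n/n\le\sup_s\inf_{n\ge1}V_n^u(s)/n$ unproved. Replacing $\inf_{n\ge1}$ by a $\liminf$, as you propose, changes the statement. This inequality is not a minimax triviality (only $\sup_s\inf_n\le\inf_n\sup_s$ is), and it is exactly where a potential is needed. If $\mathcal T_{u,H}\varphi\ge\varphi+c_\varphi$, then $V_n^u\ge\mathcal T_{u,H}^n\varphi-\sup\varphi\ge nc_\varphi+\varphi-\sup\varphi$ for all $n$ simultaneously. So at any $s_\varepsilon$ with $\varphi(s_\varepsilon)\ge\sup\varphi-\varepsilon$ you get $\inf_{n\ge1}V_n^u(s_\varepsilon)/n\ge c_\varphi-\varepsilon$. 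Taking the supremum over $\varphi$ and using the corrected first equality gives the claim.

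With both repairs, your bridging estimate collapses the chain as stated. You still need to add the trivial case $n<m$, where $|V_n^u|\le nM_H$.
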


\begin{proof}
We first verify
\eqref{eq:dynamic programming representation}.
By definition,
\[
V_n^u(s)
=
\inf_{\boldsymbol\pi\in\Pi_{n,H}}
\sup_{s_{1:n}\in\mathcal J_n(s)}
\sum_{k=1}^n
\left\langle
    u,\blk_H(\pi_k,s_k)
\right\rangle,
\qquad
V_0^u=0.
\]
A causal $(n+1)$-block policy consists of a first-block policy together
with a continuation policy that may depend on the observed successor
state. Hence
\[
\begin{aligned}
V_{n+1}^u(s)
&=
\inf_{\pi\in\Pi_H}
\sup_{s'\in\WinSet(s)}
\left\{
    \left\langle
        u,\blk_H(\pi,s')
    \right\rangle
    +
    V_n^u(s')
\right\}
\\
&=
(\mathcal T_{u,H}V_n^u)(s).
\end{aligned}
\]
Induction gives
\[
    V_n^u=\mathcal T_{u,H}^n0.
\]

The operator $\mathcal T_{u,H}$ is monotone and additively homogeneous.
Thus, since
\[
    \inf_sV_m^u(s)
    \le
    V_m^u
    \le
    \sup_sV_m^u(s),
\]
applying $\mathcal T_{u,H}^n$ yields
\[
V_n^u+\inf_sV_m^u(s)
\le
V_{n+m}^u
\le
V_n^u+\sup_sV_m^u(s).
\]
Consequently,
$\inf_sV_n^u(s)$ is superadditive and
$\sup_sV_n^u(s)$ is subadditive.
Fekete's lemma gives
\[
\lim_{n\to\infty}
\inf_s\frac{V_n^u(s)}{n}
=
\sup_{n\ge1}
\inf_s\frac{V_n^u(s)}{n},
\]
and
\[
\lim_{n\to\infty}
\sup_s\frac{V_n^u(s)}{n}
=
\inf_{n\ge1}
\sup_s\frac{V_n^u(s)}{n}.
\]

We next identify the lower value.
For any bounded $\Potential$, let
\[
c_{\Potential}
:=
\inf_{s\in\WinSet}
\left\{
    (\mathcal T_{u,H}\Potential)(s)-\Potential(s)
\right\}.
\]
Then
\[
    \mathcal T_{u,H}\Potential
    \ge
    \Potential+c_{\Potential}.
\]
Iteration, monotonicity, and additive homogeneity imply
\[
    \mathcal T_{u,H}^n\Potential
    \ge
    \Potential+nc_{\Potential}.
\]
On the other hand,
\[
    \Potential\le\sup_s\Potential(s)
\]
gives
\[
    \mathcal T_{u,H}^n\Potential
    \le
    V_n^u+\sup_s\Potential(s).
\]
Therefore
\[
    \inf_sV_n^u(s)
    \ge
    nc_{\Potential}
    -
    \operatorname{range}(\Potential).
\]
Dividing by $n$ and letting $n\to\infty$ yields
\[
\lim_{n\to\infty}
\inf_s\frac{V_n^u(s)}{n}
\ge
c_{\Potential}.
\]
Taking the supremum over bounded $\Potential$ gives one inequality.

For the reverse inequality, fix $n\ge1$ and write
\[
    a_n:=\frac1n\inf_sV_n^u(s).
\]
Define, pointwise,
\[
    \Potential
    :=
    \max_{0\le j\le n-1}
    \left\{
        V_j^u-ja_n
    \right\}.
\]
By monotonicity and additive homogeneity,
\[
\begin{aligned}
\mathcal T_{u,H}\Potential
&\ge
a_n
+
\max_{1\le j\le n}
\left\{
    V_j^u-ja_n
\right\}.
\end{aligned}
\]
Since
\[
    V_n^u-na_n
    =
    V_n^u-\inf_sV_n^u(s)
    \ge0=V_0^u,
\]
the last maximum dominates $\Potential$. Hence
\[
    \mathcal T_{u,H}\Potential
    \ge
    \Potential+a_n.
\]
It follows that
\[
\sup_{\Potential}
\inf_s
\left\{
    (\mathcal T_{u,H}\Potential)(s)-\Potential(s)
\right\}
\ge
\frac1n\inf_sV_n^u(s).
\]
Taking the supremum over $n$ proves the lower representation.

The upper representation is analogous.
For any bounded $\Potential$,
\[
\mathcal T_{u,H}\Potential
\le
\Potential+
\sup_s
\left\{
    (\mathcal T_{u,H}\Potential)(s)-\Potential(s)
\right\},
\]
and iteration gives
\[
\sup_sV_n^u(s)
\le
\operatorname{range}(\Potential)
+
n
\sup_s
\left\{
    (\mathcal T_{u,H}\Potential)(s)-\Potential(s)
\right\}.
\]
Conversely, for fixed $n$, let
\[
    b_n:=\frac1n\sup_sV_n^u(s)
\]
and define
\[
    \Potential
    :=
    \min_{0\le j\le n-1}
    \left\{
        V_j^u-jb_n
    \right\}.
\]
Then
\[
\mathcal T_{u,H}\Potential
\le
b_n
+
\min_{1\le j\le n}
\left\{
    V_j^u-jb_n
\right\}.
\]
Since
\[
    V_n^u-nb_n
    =
    V_n^u-\sup_sV_n^u(s)
    \le0=V_0^u,
\]
the last minimum is bounded above by $\Potential$, and therefore
\[
    \mathcal T_{u,H}\Potential
    \le
    \Potential+b_n.
\]
This proves
\[
\lim_{n\to\infty}
\sup_s\frac{V_n^u(s)}{n}
=
\inf_{\Potential}
\sup_s
\left\{
    (\mathcal T_{u,H}\Potential)(s)-\Potential(s)
\right\}.
\]

It remains only to place the middle quantity between the lower and
upper values.
For any bounded $\Potential$, the preceding argument gives
\[
V_n^u(s)
\ge
nc_{\Potential}
+
\Potential(s)-\sup_{s'}\Potential(s').
\]
Given $\varepsilon>0$, choose $s_\varepsilon$ such that
\[
    \Potential(s_\varepsilon)
    \ge
    \sup_s\Potential(s)-\varepsilon.
\]
Then
\[
    \inf_{n\ge1}
    \frac{V_n^u(s_\varepsilon)}{n}
    \ge
    c_{\Potential}-\varepsilon.
\]
Taking the supremum over $s$, then over $\Potential$, and finally
letting $\varepsilon\downarrow0$ gives
\[
\sup_{\Potential}\inf_s
\left\{
    (\mathcal T_{u,H}\Potential)(s)-\Potential(s)
\right\}
\le
\sup_s\inf_{n\ge1}\frac{V_n^u(s)}{n}.
\]
The inequality
\[
    \sup_s\inf_{n\ge1}\frac{V_n^u(s)}{n}
    \le
    \inf_{n\ge1}\sup_s\frac{V_n^u(s)}{n}
\]
is immediate. This proves \eqref{eq:supinf relation}.

Now suppose
Assumption~\ref{ass:finite-step-bridging} holds.
Fix $s,s'\in\WinSet$, and let
$\ell\le C_B$ be the length of an admissible bridge from $s$ to $s'$.
For $n\ge\ell$, the adversary may first follow this bridge and then
continue from $s'$. Since every directional block payoff has magnitude
at most $M_H$,
\[
    V_n^u(s)
    \ge
    V_{n-\ell}^u(s')-\ell M_H.
\]
Also, by monotonicity and additive homogeneity,
\[
    V_n^u(s')
    \le
    V_{n-\ell}^u(s')+\ell M_H.
\]
Hence
\[
    V_n^u(s)
    \ge
    V_n^u(s')-2C_BM_H.
\]
Interchanging $s$ and $s'$ gives
\[
    \sup_sV_n^u(s)-\inf_sV_n^u(s)
    \le
    2C_BM_H.
\]
For $n<C_B$, the same bound follows directly from
$|V_n^u(s)|\le nM_H$.

Thus
\[
0
\le
\sup_s\frac{V_n^u(s)}{n}
-
\inf_s\frac{V_n^u(s)}{n}
\le
\frac{2C_BM_H}{n}
\longrightarrow0.
\]
The lower and upper limits in
\eqref{eq:supinf relation} therefore coincide, and hence both
inequalities in that display are equalities.
\end{proof}
\begin{remark}[Relation to additive Collatz--Wielandt theory]
Lemma~\ref{lem:supinf-relation} is closely related to the additive
Collatz--Wielandt formula for monotone, additively homogeneous operators;
see, e.g., \cite{gaubert2004perron,akian2012tropical}. In particular, for
clarity and completeness, we adapt the 
argument of \cite[Lemma~2]{gaubert2004perron} to our setting and give a self-contained proof.
\end{remark}
\section{Experimental Details}
\label{app:exp}

\subsection{Algorithms and implementation}
\label{app:exp-algorithms}
We give the updates used in all three experiments. The public action domain
$\mathcal X$ is known in advance; the current constraint $c_t$ is revealed
only after $x_t$ is chosen. Experimental domains, initialization, and
validation-selected parameters are specified in the following subsections.

\subsubsection{\softblackwell}
Algorithms~\ref{alg:soft-window} and~\ref{alg:window-soft-ftrl} give the outer
controller and its FTRL window learner. The cumulative gradient and violation
statistics persist across windows. Each new FTRL learner starts at the last
\emph{played} action. Its parameter $\eta_H$ is selected on validation data;
$D_{\mathcal X}/(L\sqrt H)$ in the pseudocode is the theoretical default.
The final window is truncated at the evaluation horizon when necessary.

\begin{algorithm}[H]
\caption{Separate learning on each $H$-window with soft constraints}
\label{alg:soft-window}
\begin{algorithmic}[1]
\State Initialize $S_0\gets 0$ and $V_0\gets 0$
\For{$k=0,1,2,\ldots$}
    \If{$S_{kH}\neq 0$}
        \State
        $\displaystyle
        u_k\gets \frac{S_{kH}}{\|S_{kH}\|_2}$
    \Else
        \State Choose any $u_k\in\mathbb S^{d-1}$
    \EndIf
    \If{$\|S_{kH}\|_2+V_{kH}>0$}
        \State
        $\displaystyle
        \theta_k
        \gets
        \frac{\|S_{kH}\|_2}
        {\|S_{kH}\|_2+V_{kH}}$
    \Else
        \State $\theta_k\gets 0$
    \EndIf
    \State Restart $\mathsf A_H$ with fixed $(u_k,\theta_k)$
    \For{$t=kH+1,\ldots,(k+1)H$}
        \State Play the action $x_t$ selected by $\mathsf A_H$
        \State Observe $g_t$ and $c_t$
        \State Feed the feedback $(g_t,c_t)$ to $\mathsf A_H$
           \State Set
        $
        v_t \gets [c_t(x_t)]_+
        $
        \State Update
        $
        S_t \gets S_{t-1}+g_t(x_t),
        \qquad
        V_t \gets V_{t-1}+v_t
        $
    \EndFor
\EndFor
\end{algorithmic}
\end{algorithm}

\begin{algorithm}[H]
\caption{$H$-window learner $\mathsf A_H$: Soft-Blackwell FTRL}
\label{alg:window-soft-ftrl}
\begin{algorithmic}[1]
\Require Fixed $(u,\theta)\in\mathbb S^{d-1}\times[0,1]$, domain $\mathcal X$, scalarized-loss Lipschitz bound $L$
\State Choose $x_0\in\mathcal X$; set $\Psi(x)\gets\tfrac12\|x-x_0\|_2^2$
\State Set $x_1\gets x_0$ and $\eta_H\gets D_{\mathcal X}/(L\sqrt H)$
\For{$r=1,\ldots,H$}
  \State Play $x_r$, then observe $g_r$ and $c_r$
  \State $f_r^{(u,\theta)}(x)\gets\theta\langle u,g_r(x)\rangle+(1-\theta)[c_r(x)]_+$
  \State $\displaystyle x_{r+1}\in\arg\min_{x\in\mathcal X}\left\{\eta_H\sum_{s=1}^rf_s^{(u,\theta)}(x)+\Psi(x)\right\}$
\EndFor
\end{algorithmic}
\end{algorithm}

\subsubsection{Baselines}
\label{app:baseline-updates}
Both GD baselines use constant step sizes. All methods' violations are
measured against the current constraint $c_t$.

\begin{algorithm}[H]
\caption{Vanilla GD: public-domain projection only}
\label{alg:projected-gd}
\begin{algorithmic}[1]
\Require Public action domain $\mathcal X$, constant step size $\eta>0$
\State Initialize $x_1\in\mathcal X$
\For{$t=1,\ldots,T$}
  \State Play $x_t$, then observe $g_t(x_t)$ and $c_t$
  \State $x_{t+1}\gets\Pi_{\mathcal X}(x_t-\eta g_t(x_t))$
\EndFor
\end{algorithmic}
\end{algorithm}
Vanilla GD forms the candidate action $x_t-\eta g_t(x_t)$ and projects
this point onto the fixed action domain $\mathcal X$ to obtain $x_{t+1}$.
The update uses the loss gradient and the fixed domain, without a
constraint-violation penalty. 

Lazy GD maintains a separate latent iterate $y_t$.
It plays the projected action $x_t=\Pi_{P_t}(y_t)$ and then updates
$y_{t+1}=y_t-\eta g_t(x_t)$, without projecting $y_{t+1}$.
The experimental protocol specifies the available projection set $P_t$.

\begin{algorithm}[H]
\caption{Lazy GD}
\label{alg:lazy-gd}
\begin{algorithmic}[1]
\Require Public domain $\mathcal X$, horizon $T$, constant step size $\eta>0$; nonempty closed convex projection sets $\mathcal P_t\subseteq\mathcal X$, each available before selecting $x_t$
\State Initialize $y_1\in\mathcal X$
\For{$t=1,\ldots,T$}
  \State $x_t\gets\Pi_{P_t}(y_t)$
  \State Play $x_t$, then observe $g_t(x_t)$ and $c_t$
  \State $y_{t+1}\gets y_t-\eta g_t(x_t)$
\EndFor
\end{algorithmic}
\end{algorithm}

\paragraph{Projection sets used in our experiments.}
For our implementation, let $C_s=\{x\in\mathcal X:c_s(x)\le0\}$ and set,
for $t\in I_w$,
\[
 \mathcal P_t=
 \begin{cases}
  \mathcal X, & w=1,\\
  \displaystyle\bigcap_{s\in I_{w-1}}C_s, & w>1.
 \end{cases}
\]
Thus the projection set is fixed within each window and uses only constraints
from the preceding completed window. In the toxicity and stock experiments,
the feasible sets coincide within each window, so this is simply the previous
window's feasible set. In the binary experiment, the intersection instead
uses the maximum lower bound observed in that window.

\begin{algorithm}[H]
\caption{COCO baseline \citep{sinha2024optimal}}
\label{alg:coco-baseline}
\begin{algorithmic}[1]
\Require Domain $\mathcal X$ with diameter $D_{\mathcal X}$, Lipschitz bound $L$, horizon $T$, step multiplier $m>0$
\State $\beta\gets(2LD_{\mathcal X})^{-1}$, $\lambda\gets(2\sqrt T)^{-1}$
\State Initialize $x_1\in\mathcal X$, $Q_0\gets0$, and $A_0\gets0$
\For{$t=1,\ldots,T$}
  \State Play $x_t$, then observe $\ell_t$ and $c_t$
  \State $v_t\gets[c_t(x_t)]_+$; $Q_t\gets Q_{t-1}+\beta v_t$
  \State Choose $g_t\in\partial\ell_t(x_t)$ and $d_t\in\partial[c_t]_+(x_t)$
  \State $\zeta_t\gets\beta\bigl(g_t+\lambda e^{\lambda Q_t}d_t\bigr)$
  \State $A_t\gets A_{t-1}+\|\zeta_t\|_2^2$
  \If{$A_t>0$}
    \State $\eta_t\gets mD_{\mathcal X}/\sqrt{2A_t}$
    \State $x_{t+1}\gets\Pi_{\mathcal X}(x_t-\eta_t\zeta_t)$
  \Else
    \State $x_{t+1}\gets x_t$
  \EndIf
\EndFor
\end{algorithmic}
\end{algorithm}
The choice $m=1$ recovers the native COCO update. Its queue and adaptive
step-size accumulator run continuously across windows.

\subsection{Binary forecasting construction}
\label{app:exp-toy}
\paragraph{Sequence and splits.}
The binary example uses $\mathcal X=[0,1]$,
$\ell_t(x)=\tfrac12(x-z_t)^2$, and $g_t(x)=x-z_t$.
Block $A$ has a one followed by nine zeros; block $B$ has eight ones followed
by two zeros. Repeating $A^M B^M$ produces alternating regimes lasting $10M$
rounds each, with $20M$ rounds per cycle. We consider
$M\in\{3,10,30,100,300,1000\}$; Figure~\ref{fig:toy-main} uses $M=300$.
For each $M$, the first ten cycles ($200M$ rounds) are used for validation
and the next fifty ($T=1000M$ rounds) for testing. All methods receive the
same sequence and restart at $x_1=0.5$ at each split. The deterministic
cycles repeat, so validation tests transfer to a longer horizon rather than
to an independent outcome sample.

\paragraph{Lower bounds and feedback.}
With $H=10$ and $\gamma=0.05$, define
\[
 q_t=\min_{\substack{I\ni t\\ I\text{ consecutive},\ |I|=H}}
 \left\{\frac1H\sum_{s\in I}z_s-\gamma\right\},
 \qquad c_t(x)=q_t-x,\qquad V_t=\sum_{s=1}^t[q_s-x_s]_+.
\]
The construction uses the indefinitely continued outcome sequence when
forming windows near the end of the evaluation horizon. For any length-$H$
interval $I$, its mean outcome $\bar z_I$ satisfies
$\bar z_I-q_t\ge\gamma$ for every $t\in I$ and
$\sum_{t\in I}g_t(\bar z_I)=0$. Thus every sliding window, including those
crossing a regime transition, admits a common feasible forecast with zero
gradient sum. The learner receives $z_t$ and $q_t$ only after choosing $x_t$.

\paragraph{Algorithm parameters.}
\softblackwell retains global $S_t,V_t$, freezes its direction and mixing weight for
each nonoverlapping $H$-window, and restarts the FTRL history with its
quadratic centered at the previous played action. We set
$\eta_H=\kappa D_{\mathcal X}/(L\sqrt H)=\kappa/\sqrt{10}$.
COCO uses $L=D_{\mathcal X}=1$, $\beta=1/2$, and
$\lambda=1/(2\sqrt T)$ for the current split, with adaptive step
$\eta_t=mD_{\mathcal X}/(2\sum_{s\le t}\|\zeta_s\|_2^2)^{1/2}$.
Its queue and step accumulator persist across windows.
Both GD methods use constant rates. Lazy GD projects its hidden iterate onto
$[\max_{s\in I_{w-1}}q_s,1]$ using the previous completed nonoverlapping
window $I_{w-1}$, and onto $[0,1]$ in the first window; its hidden state
is not reset at window boundaries. The saved vanilla-GD runs use unclipped
updates. Every selected rate is at most one, making each update a convex
combination of the current forecast and outcome, so projection onto $[0,1]$
would leave these selected trajectories unchanged.

\paragraph{Validation and selected parameters.}
For each method and $M$, selection minimizes validation $|S_T|/T+V_T/T$.
Ties are broken by $V_T/T$, mean running $|S_t|/t$, distance of the parameter
from one on a logarithmic scale, and then its value. \softblackwell's multiplier
$\kappa$ and COCO's multiplier $m$ use the grid
$\{0.01,0.03,0.1,0.3,1,3,10,30,100\}$; both GD rates use
$\{0.001,0.003,0.01,0.03,0.1,0.3,1,1.5,1.9\}$.
Table~\ref{tab:toy-rates} reports the validation-selected parameters, fixed
before testing.

\begin{table}[htbp]
  \centering
  \caption{Validation-selected toy parameters. \softblackwell uses
  $\eta_H=\kappa/\sqrt{10}$; both GD rates are constant.}
  \label{tab:toy-rates}
  \begin{tabular}{rrrrr}
    \toprule
    $M$ & \softblackwell ($\kappa$) & COCO ($m$) & Vanilla GD ($\eta$) & Lazy GD ($\eta$) \\
    \midrule
    3    & 30 & 3 & 0.3  & 0.1 \\
    10   & 10 & 3 & 0.1  & 0.3 \\
    30   & 3  & 3 & 0.1  & 0.3 \\
    100  & 3  & 3 & 0.1  & 0.3 \\
    300  & 3  & 3 & 0.03 & 0.3 \\
    1000 & 3  & 3 & 0.03 & 0.3 \\
    \bottomrule
  \end{tabular}
\end{table}

\paragraph{Results across regime durations.}
Figure~\ref{fig:toy-regime-sensitivity} shows all six settings. \softblackwell has smaller final gradient imbalance and cumulative violation than COCO throughout, and less violation than vanilla GD. Lazy GD is competitive and has the smallest final violation for $M\ge10$; at $M=3$, \softblackwell has the lowest violation while vanilla GD has the smallest final gradient imbalance.

\begin{figure}[p]
  \centering
  \includegraphics[width=0.98\textwidth,height=0.86\textheight,keepaspectratio]{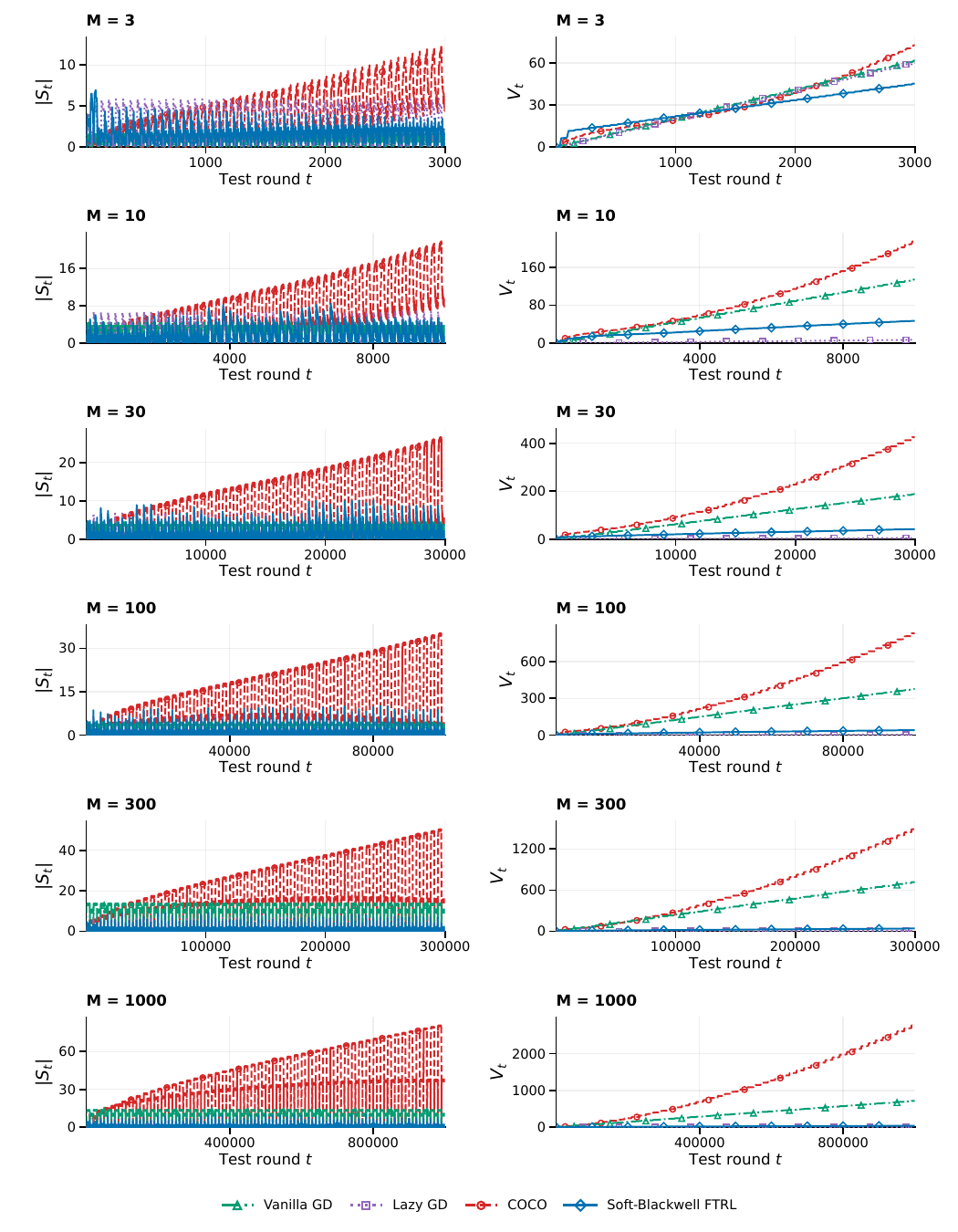}
  \caption{Binary forecasting across regime durations. Rows correspond to
  $M=3,10,30,100,300,1000$, with $H=10$ and fifty test cycles in each setting.
  Left: $|S_t|$; right: $V_t$. Both GD methods use validation-selected constant
  rates. All axes are linear; every test round is included.}
  \label{fig:toy-regime-sensitivity}
\end{figure}

\subsection{LLM toxicity: data and evaluation protocol}
\label{app:toxicity-data}
\paragraph{Data and splits.}
We use the paired scores released by \citet{chen2026adversarially}, comprising
9,500 prompts with 40 responses each.\footnote{Author release:
\url{https://github.com/jy-evangeline/online_cvar/tree/4c0e14b65ee13ac16d7335fb94fb99c30a01bb63}.}
The machine scores are produced by a fine-tuned, biased Detoxify model,
and the original Detoxify model provides the reference scores used to evaluate
toxicity. We conduct the experiment directly on the released score pairs.
A permutation with seed 42 assigns 1,000 prompts to validation and 8,500 to
testing. We use all 40 responses per prompt and apply thresholds to the raw
machine scores. Validation uses order seed 0, while test seeds 0, 1, and 2
give three orderings of the same held-out prompts. Within each run, all
methods receive the same prompt order, score pairs, and budgets.

\paragraph{Window budgets and feedback.}
For each current window $I_w$, define
\[
 F_w(x)=\frac{1}{|I_w|}\sum_{t\in I_w}R_t(x),
 \qquad q_w=(1-\tau)x_w^\star,
\]
where $x_w^\star$ is the largest representable threshold in $[0,1]$ with
$F_w(x_w^\star)\le\alpha$, for $\alpha=0.1$.
At a jump crossing the target, we use the value immediately below the jump.
Thus $\tau$ controls relative slack in threshold space, while the target
risk stays fixed. The environment constructs each budget from its own window,
including the first, and reveals $c_t(x)=q_w-x$ after the learner acts.
\softblackwell and COCO update from this observed feedback. Lazy GD projects onto
$[q_{w-1},1]$ during window $w>1$ and onto $[0,1]$ in the first window.
Every method's violation is evaluated against the current $q_w$.
The evaluated windows admit feasible thresholds that bracket the target risk;
all rounds are included.

\paragraph{Algorithm parameters.}
All methods start at $x_1=0.5$ on $[0,1]$. \softblackwell retains its cumulative gradient and violation sums across windows, and initializes each new FTRL window at the previous played action. Both GD baselines use constant step sizes. COCO uses $L=D=1$ and maintains its queue over the full horizon.

For each setting, we choose the GD step sizes and \softblackwell parameter $\eta_H$ from $\{0.001,0.01,0.1,1,10\}$, and COCO's adaptive-step multiplier $m$ from $\{0.1,1\}$. Selection minimizes final validation $|S_T|$, breaking ties by $V_T$ and then the parameter value. The selected parameters are fixed across test orders. In the order $(H,\tau)=(10,0.1),(10,0.2),(20,0.1),(20,0.2)$, \softblackwell uses $\eta_H=10,1,0.01,0.01$, and Lazy GD uses $\eta=10,10,10,1$. Vanilla GD uses $\eta=0.1$ and COCO uses $m=0.1$ in all four settings.

\paragraph{Results across budgets and window sizes.}
\label{app:toxicity-sensitivity}
Figure~\ref{fig:toxicity-sensitivity} compares the main setting
$(H,\tau)=(10,0.1)$ with the other three configurations. \softblackwell has the smallest final gradient imbalance and cumulative violation in each setting. Increasing the relative slack reduces its violation at both window lengths. Lazy GD's gradient balance improves substantially at $H=20$, $\tau=0.2$, while its cumulative violation remains higher than \softblackwell's. Vanilla GD's gradient trajectory is unchanged across settings because its selected step size is fixed and its update does not use the budget.

\begin{figure}[p]
  \centering
  \includegraphics[width=\textwidth,height=0.80\textheight,keepaspectratio]{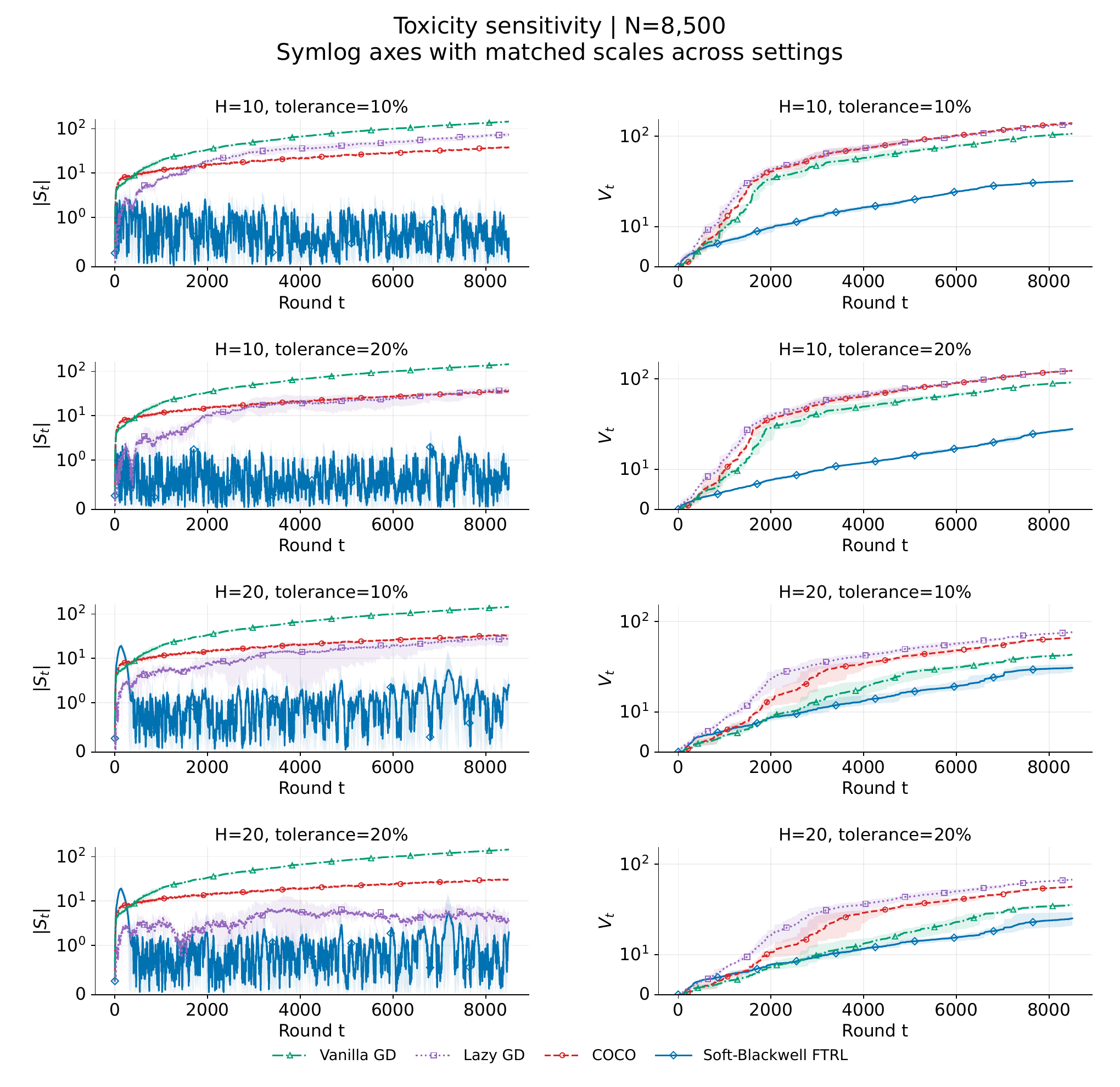}
  \caption{Window and budget sensitivity for LLM toxicity. Rows correspond to
  $(H,\tau)=(10,10\%),(10,20\%),(20,10\%),(20,20\%)$.
  The left column reports $|S_t|$ and the right column reports $V_t$.
  We compute the absolute gradient sum within each run before averaging;
  shading gives the minimum--maximum range over the three prompt orders.
  Axes use the same scales across settings, with a linear region near zero
  and a logarithmic region above.}
  \label{fig:toxicity-sensitivity}
\end{figure}
\newpage
\subsection{Two-dimensional stock experiment}
\label{app:exp-stock-2d}
\paragraph{Data and splits.}
Cached rolling GARCH$(1,1)$ forecasts use the preceding $1{,}250$ daily returns,
following \cite{gibbs2021adaptiveconformalinferencedistribution}.
The historical returns are from the public DtACI release.\footnote{\url{https://github.com/isgibbs/DtACI/tree/1288af7929c7d5c741735f4194272e9b3d9ffb39/MajorDailyReturns}.}
The supplement includes the fixed forecasts and realized values
used for evaluation.
For AMD, BB, FNMA, and NVDA, respectively, the chronological validation/test
split contains $1600/6344$, $800/3545$, $1600/6345$, and $800/3554$ rounds.
All methods and window lengths share these splits and start afresh at
$x_1=(1,1)$. Validation finishes before parameters are frozen for testing;
no predictor is refitted.

\paragraph{Chronological evaluation and variability.}
Each stock is evaluated once in its original chronological order. With the
frozen forecasts, fixed splits, and selected parameters, all four methods
are deterministic, so repeating a run yields the same trajectory. We report
these trajectories without seed-based error bands. The four stocks and two
window lengths are separate evaluation settings, not independent repetitions
of a single time series.

\paragraph{Loss, budgets, and feedback.}
Each feedback coordinate is a selected subgradient of the pinball loss
\[
 \ell_t^{(\alpha)}(x)=(1-\alpha)[x^{(\alpha)}-e_t]_+
                         +\alpha[e_t-x^{(\alpha)}]_+.
\]
COCO uses the sum of these two convex losses. For $H\in\{10,20\}$, budgets
use linearly interpolated score quantiles within each nonoverlapping window:
$Q_{0.92}$ for the $90\%$ interval and $Q_{0.72}$ for the $70\%$ interval.
In complete windows, we add $10^{-6}\max\{1,Q_{0.92}\}$ when the
positive-direction comparator equals the unpadded $Q_{0.92}$.
We never pad $Q_{0.72}$ or final partial windows, which are retained.
Budgets are constructed offline from their own windows and revealed after
action selection. The coordinatewise positive parts prevent cancellation
between the two width violations.

\paragraph{Algorithm parameters.}
All four methods use $\mathcal X=\{(x^{(0.9)},x^{(0.7)}):0\le x^{(0.7)}\le x^{(0.9)}\le50\}$,
with coordinate cap $50$ and Euclidean diameter $D_{\mathcal X}=50\sqrt{2}$.
COCO uses $L=\sqrt{2}$, $\beta=1/(2LD_{\mathcal X})=1/200$, and $\lambda=1/(2\sqrt{T})$, where $T$ is the current split length. Its adaptive step is $$\eta_t=m D_{\mathcal X}/(2\sum_{s\le t}\|\zeta_s\|_2^2)^{1/2},$$ where $\zeta_s$ is its surrogate subgradient. 


\paragraph{Validation and selected parameters.}
For each stock and window length, we select \softblackwell's $\eta_H$ from $\{0.1,1\}$, both constant GD rates from $\{0.001,0.01,0.1,1,10,100\}$, and COCO's adaptive-step multiplier $m$ from $\{0.1,1\}$.
Selection minimizes final validation $\|S_T\|_2/T$, breaking ties by $V_T/T$, average summed physical width, and then the parameter value. We compare squared integer-count errors, equal to $100\|S_T\|_2^2$, to identify calibration ties exactly. Selection thus prioritizes calibration rather than mean pinball loss. COCO selects $m=0.1$ in every setting; Table~\ref{tab:stock-multilevel-rates} lists the other selected parameters.

\begin{table}[htbp]
  \centering
  \caption{Validation-selected stock parameters.}
  \label{tab:stock-multilevel-rates}
  \begin{tabular}{lrrrrr}
    \toprule
    & \multicolumn{2}{c}{\softblackwell ($\eta_H$)} & GD ($\eta$)
    & \multicolumn{2}{c}{Lazy GD ($\eta$)} \\
    \cmidrule(lr){2-3} \cmidrule(lr){5-6}
    Stock & $H=10$ & $H=20$ & Both $H$ & $H=10$ & $H=20$ \\
    \midrule
    AMD  & 1   & 1   & 0.1  & 1   & 1 \\
    BB   & 1   & 0.1 & 0.01 & 0.1 & 0.001 \\
    FNMA & 0.1 & 1   & 0.1  & 10  & 10 \\
    NVDA & 0.1 & 1   & 0.1  & 0.1 & 1 \\
    \bottomrule
  \end{tabular}
\end{table}

\paragraph{Results across stocks and windows.}
Figures~\ref{fig:stock-multilevel-h10} and~\ref{fig:stock-multilevel-h20} highlight \softblackwell's strong calibration--width balance, with ordinary GD also competitive. \softblackwell achieves smaller final calibration error than COCO in six of eight settings and than GD in five. On FNMA at $H=20$, it has the smallest calibration error, with violation close to GD and below both remaining baselines. Lazy GD exhibits substantially larger calibration errors throughout. COCO's low violation often comes with larger calibration error, most visibly on FNMA. 
GD benefits from the pinball subgradient being exactly the coverage residual. With constant step size and no projection, its updates telescope to $S_T=(x_1-x_{T+1})/\eta$; uniformly bounded iterates therefore give $\|S_T\|_2/T=O(1/T)$. The telescoping identity is general, while pinball loss makes it directly a coverage-calibration identity. GD's low soft violation is an empirical strength of this setting; its update does not explicitly control the width budgets, whereas \softblackwell incorporates both calibration and violation feedback.


\begin{figure}[H]
  \centering
  \includegraphics[width=0.9\textwidth, keepaspectratio]{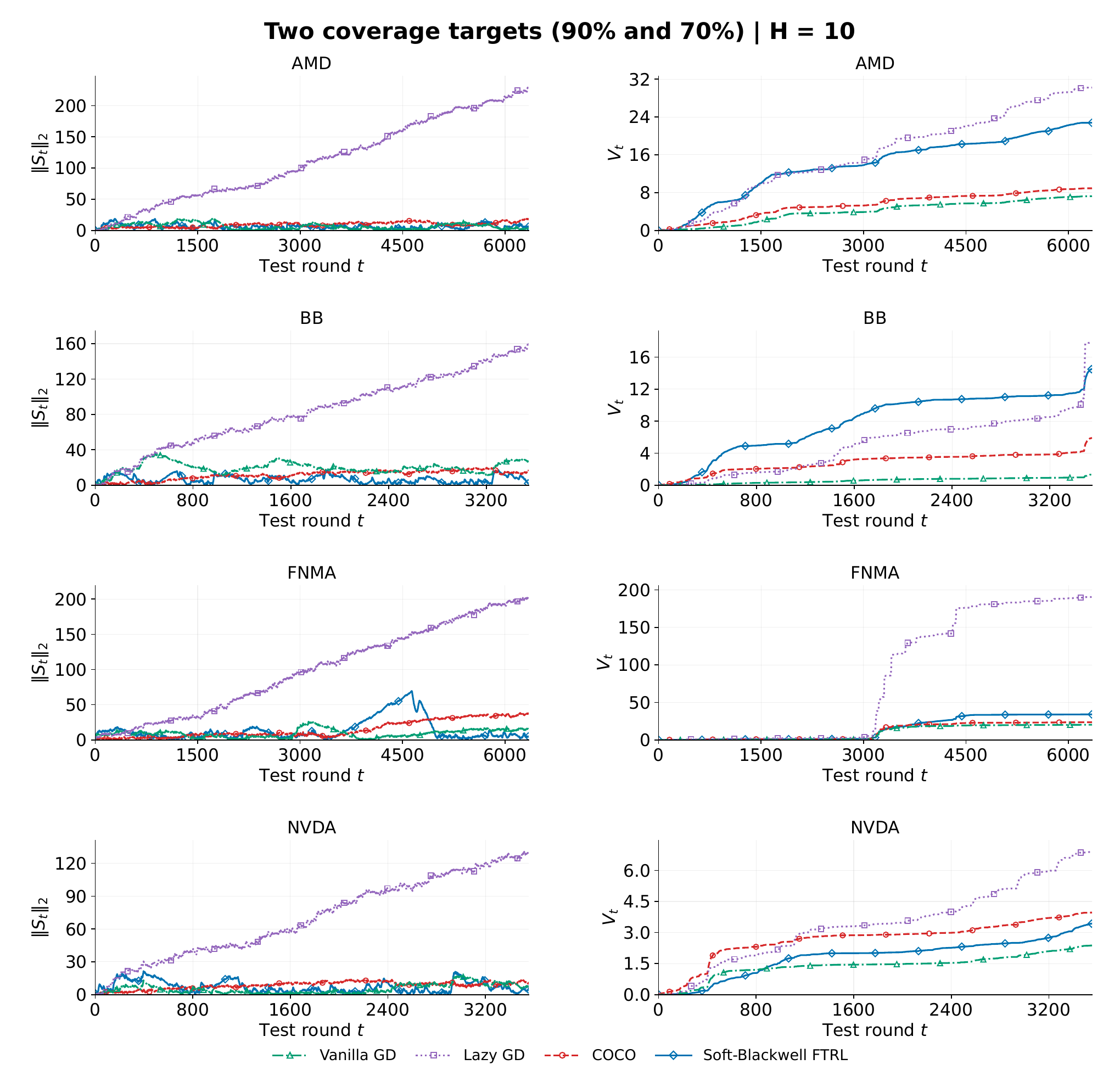}
  \caption{Simultaneous stock coverage with $H=10$. Rows show AMD, BB,
  FNMA, and NVDA. Left: cumulative gradient imbalance $\|S_t\|_2$;
  right: cumulative positive width violation $V_t$. All methods use the same
  observations, budgets, hard domain, and validation/test split.
  GD enforces subset consistency by Euclidean projection. All axes are linear.}
  \label{fig:stock-multilevel-h10}
\end{figure}
\begin{figure}[H]
  \centering
  \includegraphics[width=0.9\textwidth, keepaspectratio]{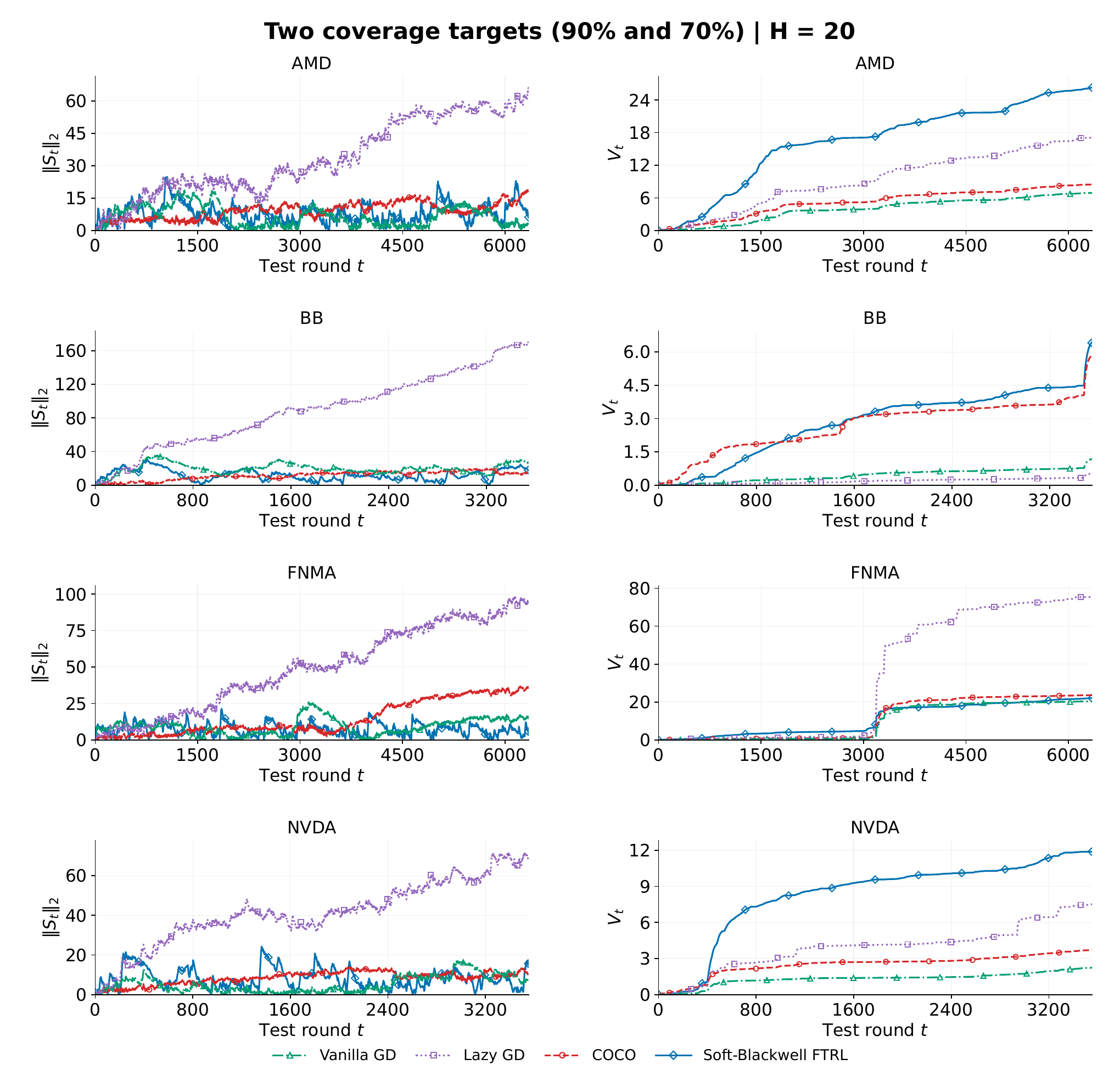}
  \caption{Simultaneous stock coverage with $H=20$, using the same stock
  order, cumulative metrics, and validation procedure as
  Figure~\ref{fig:stock-multilevel-h10}. The main-text FNMA comparison in
  Figure~\ref{fig:stock-multilevel-fnma} uses $H=10$.}
  \label{fig:stock-multilevel-h20}
\end{figure}

\end{document}